\documentclass{article} 
\usepackage{iclr2027_conference,times}

\usepackage{amsmath,amsfonts,bm}

\def\eqref#1{equation~\ref{#1}}

\def\1{\bm{1}}

\DeclareMathAlphabet{\mathsfit}{\encodingdefault}{\sfdefault}{m}{sl}
\SetMathAlphabet{\mathsfit}{bold}{\encodingdefault}{\sfdefault}{bx}{n}

\usepackage[dvipsnames, table]{xcolor}
\usepackage{graphicx}
\usepackage{soul} 

\usepackage{amsthm}
\usepackage{empheq} 

\usepackage{booktabs}
\usepackage{multirow}
\usepackage{multicol}
\usepackage{threeparttable}
\usepackage{wrapfig}
\usepackage{enumitem}
\usepackage{footmisc}

\usepackage{subcaption}

\usepackage{algorithm}
\usepackage{algpseudocode}
\usepackage[most]{tcolorbox}
\usepackage{natbib}

\usepackage{hyperref}
\definecolor{ref_color}{rgb}{0.17, 0.48, 0.80}
\hypersetup{
    colorlinks=true,
    citecolor=ref_color,
    linkcolor=ref_color,
    urlcolor=ref_color
}
\usepackage{cleveref}
\usepackage{wrapfig}
\usepackage[normalem]{ulem}

\usepackage{tikz}
\usetikzlibrary{calc}
\usepackage{adjustbox}

\theoremstyle{plain}
\newtheorem{theorem}{Theorem}[section]

\theoremstyle{definition}

\theoremstyle{remark}

\title{MintFlow: Minimal Trajectory Intervention for Constrained Flow Matching}

\author{
Yesom Park\textsuperscript{1},
Kelvin Kan\textsuperscript{1},
Qifan Chen\textsuperscript{1},
Thomas Flynn\textsuperscript{2},
Hayden Schaeffer\textsuperscript{1},
Xihaier Luo\textsuperscript{2}\thanks{Corresponding author.}\\
\textsuperscript{1}Department of Mathematics, University of California, Los Angeles\\
\textsuperscript{2}Brookhaven National Laboratory\\
\texttt{\{yeisom, kelvin.kan, chenqifan, hayden\}@math.ucla.edu}\\
\texttt{\{tflynn, xluo\}@bnl.gov}}

\iclrfinalcopy 
\begin{document}

\maketitle

\begin{abstract}
Flow matching models excel at generative modeling, and many downstream applications require their samples to satisfy prescribed constraints, such as observed measurements and physical laws. However, existing constrained samplers often face a trade-off: \textit{enforcing constraints can substantially displace samples from the pretrained data distribution}. To address this trade-off, we introduce \textbf{MintFlow}, a training-free constrained sampling framework that formulates constraint enforcement as a minimal intervention on the pretrained flow trajectory. MintFlow seeks the minimal perturbation of an intermediate flow state such that its subsequent evolution under the pretrained flow field satisfies the target constraint. 
By minimally perturbing the flow state while keeping the pretrained flow field unchanged, MintFlow enforces the constraint while minimizing unnecessary deviation from the pretrained distribution. An adjoint formulation yields a closed-form expression for this perturbation, eliminating expensive iterative optimization. Furthermore, MintFlow adaptively selects the intervention time to balance the required perturbation magnitude with its amplification by the remaining flow. 
Across a range of tasks in generative vision and physical system modeling, MintFlow achieves competitive constraint satisfaction while preserving the pretrained generative distribution substantially better than state-of-the-art constrained methods.
\end{abstract}

\section{Introduction}
\label{sec:introduction}

Flow matching has emerged as a powerful paradigm for generative modeling, efficiently transporting simple reference distributions into complex data distributions through learned continuous-time dynamics~\citep{lipman2022flow,liu2026flow}. Yet many real-world applications demand more than plausible generation. 
Generated samples need to satisfy measurement constraints, structural requirements, physical laws, or other application-specific conditions~\citep{utkarsh2026physics,martin2025pnp,patel2024flowchef}. 
In these settings, the central objective is to enforce specified constraints while preserving the distributional structure already learned by the pretrained model.

Existing methods approach this problem either during training or at inference time. Training-time methods \citep{li2026gauge,tauberschmidt2026physics} incorporate constraints through specialized objectives, regularization, or architectural design, but require costly retraining for each new constraint, which is often impractical. Inference-time methods avoid this by enforcing the constraint during sampling, either by modifying the pretrained dynamics via guidance \citep{chung2024dps,feng2025guidance}, flow states through trajectory optimization \citep{ben2024d,patel2024flowchef}, or generated samples using post-hoc projection \citep{cheng2025gradient,utkarsh2026physics}.\footnote{Refer to Appendix~\ref{appen:related_work} for a detailed discussion of related work.} However, these approaches face a key challenge: \emph{constraint satisfaction alone does not prevent large deviations to the pretrained generation process, which can push samples away from the pretrained distribution} (Figure~\ref{fig:intro}).

Therefore, we adopt a simple principle: \emph{modify only what is necessary, and only as much as necessary}. As the desired constrained samples remain within the distribution learned by the pretrained flow, the learned dynamics themselves need not be altered. This suggests preserving the original vector field and intervening only on the trajectory with the smallest correction needed to satisfy the constraint while limiting unnecessary deviation from the pretrained distribution.

Motivated by this principle, we propose \textbf{MintFlow}, a training-free framework that formulates constrained generation as \emph{minimal trajectory intervention} (Figure~\ref{fig:comparison_correction_methods}). Given a trajectory generated by the pretrained flow, MintFlow finds the smallest perturbation to an intermediate state such that its subsequent evolution under the \emph{unchanged pretrained flow} satisfies the terminal constraint. The minimum-norm intervention is computed directly from the terminal constraint sensitivity via an adjoint equation, avoiding explicit construction of the high-dimensional flow Jacobian and costly iterative optimization through the full generation trajectory. MintFlow further adaptively selects the intervention time to balance the correction magnitude against the remaining flow propagation time.

Under mild regularity conditions, we characterize the constraint satisfaction error and show that the Wasserstein distance between the corrected and pretrained distributions is bounded by the intervention magnitude and the remaining flow propagation time. Experiments on image and physics data with diverse constraints, including image inverse problems, image editing, and physics-informed generation, demonstrate that MintFlow effectively satisfies constraints while preserving the learned distribution across a broad range of applications.
\begin{figure}
    \centering
    \includegraphics[width=\linewidth]{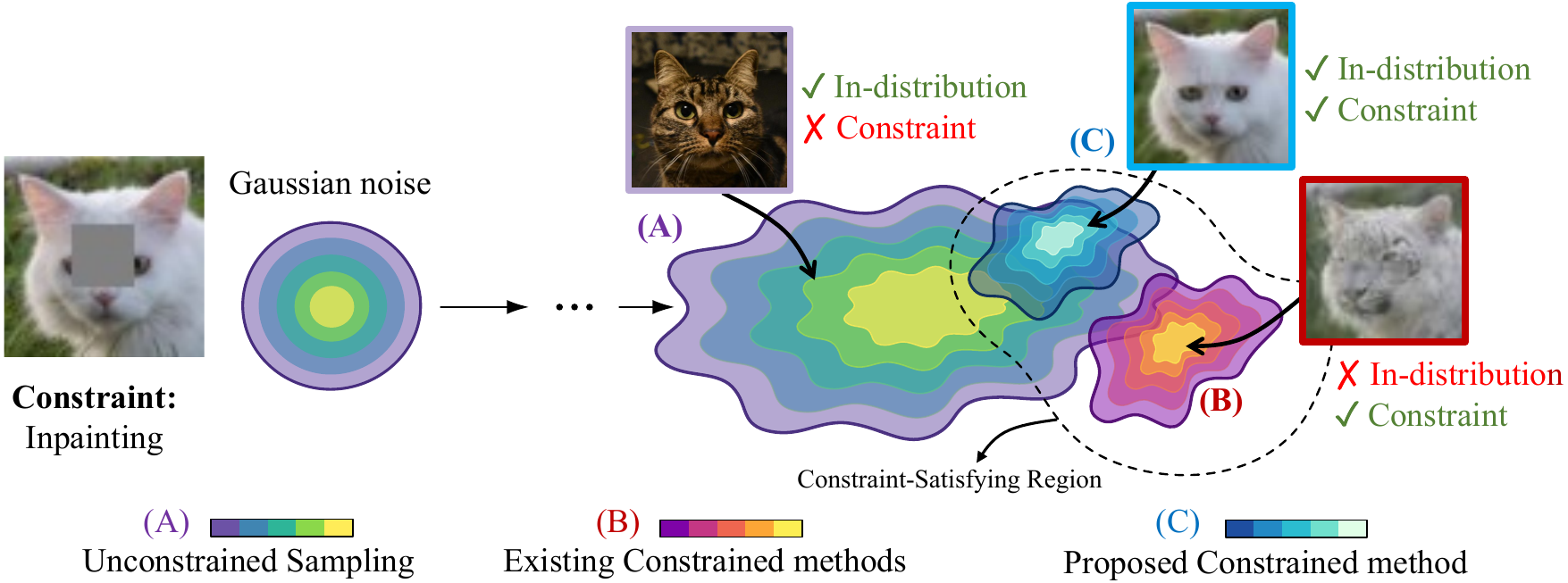}
    \caption{Balancing constraint satisfaction and distributional fidelity. \textbf{Unconstrained sampling (A)} generates samples from the pretrained distribution but often violates the constraints, while \textbf{prior constrained sampling methods (B)} enforce the constraints at the cost of large distributional shifts. \textbf{The proposed method (C)} achieves constraint satisfaction while better preserving the pretrained distribution. Image inpainting examples illustrate this trade-off.}
    \label{fig:intro}
\end{figure}
\section{Background}
Let $f_\theta$ be the velocity field of a pretrained flow matching model, which generates samples by integrating the ordinary differential equation (ODE)
\begin{equation}\label{eq:fm_ode}
\frac{du}{dt}=f_\theta(u,t), \qquad t\in[0,T].
\end{equation}
We denote by $\Phi(t,s)$ the associated flow map that moves a state from time $s$ to $t$ along (\ref{eq:fm_ode}), i.e., $\Phi(t,s)(u_s)=u_t$.
For a base distribution $\mu$ at $t=0$, the induced marginal distribution at time $t$ is
\[
\nu_t\coloneqq\Phi(t,0)_\#\mu,
\]
where $(\cdot)_\#$ denotes the pushforward of a distribution through the flow.
The terminal distribution $\nu_T=\Phi(T,0)_\#\mu$ is assumed to approximate the target data distribution.

\paragraph{Constrained Generation.}
Given a pretrained flow matching model with terminal distribution $\nu_T$ and a differentiable constraint operator $H$, we aim to generate samples $u$ satisfying
\begin{equation}\label{eq:constrained_sampling}
u\sim\nu_T, \qquad H(u)=0.
\end{equation}
As the pretrained velocity field $f_\theta$ is learned to transport the base distribution $\mu$ toward the unconstrained terminal distribution $\nu_T$, it is not explicitly trained to enforce the constraint $H(u)=0$. As such, we seek a constrained generation procedure that generates samples from the terminal distribution $\nu_T$ that satisfy the prescribed constraint using only the pretrained flow field $f_\theta$ and the constraint operator $H$ without additional model training.

The key challenge is to simultaneously satisfy the constraint $H(u)=0$ while 
keeping the samples close to the pretrained distribution $\nu_T$.
Existing approaches often enforce the constraint at the cost of deviating from $\nu_T$ (see Figure~\ref{fig:intro}). To address this, we formulate constrained generation as a \emph{minimal-intervention problem}, seeking the smallest perturbation to the intermediate states of the pretrained flow trajectory necessary to enforce the constraint while maintaining the pretrained distribution.
\section{Method}\label{sec:method}
\subsection{Trajectory Intervention for Constrained Generation}

Rather than applying post-hoc projection or modifying the pretrained flow, we enforce the constraint through a minimal intervention on the generated trajectory. Given a trajectory $\{u_t\}_{t\in[0,T]}$, we seek an intermediate time $s\in[0,T)$ and a perturbation $\delta$ such that
\[
H\left(\Phi(T,s)(u_s+\delta)\right)=0.
\]
Thus, the perturbed state $u_s+\delta$ is propagated to the terminal time using the original pretrained flow.
For a fixed intervention time $s$, we find the smallest perturbation satisfying the terminal constraint:
\begin{tcolorbox}[
    colback=gray!8,
    colframe=gray!8,
    boxrule=0pt,
    arc=2pt,
    left=4pt,
    right=4pt,
    top=4pt,
    bottom=4pt
]
\begin{equation}\label{eq:minimal_correction_problem}
\min_{\delta}\  \frac{1}{2}\left\Vert\delta\right\Vert^2, \qquad \text{subject to }\  F_s(\delta)\coloneqq H\left(\Phi(T,s)(u_s+\delta)\right)=0.
\end{equation}
\end{tcolorbox}
This formulation casts constrained generation as a \textbf{minimal trajectory intervention}. The flow dynamics remain unchanged, while only the intermediate state is minimally perturbed to reach the constraint manifold at $t=T$.

\begin{figure}[t]
\centering
\captionsetup[subfigure]{font=scriptsize, labelformat=parens, labelsep=space}

\begin{subfigure}[t]{0.24\linewidth}
    \centering
    \includegraphics[width=\linewidth]{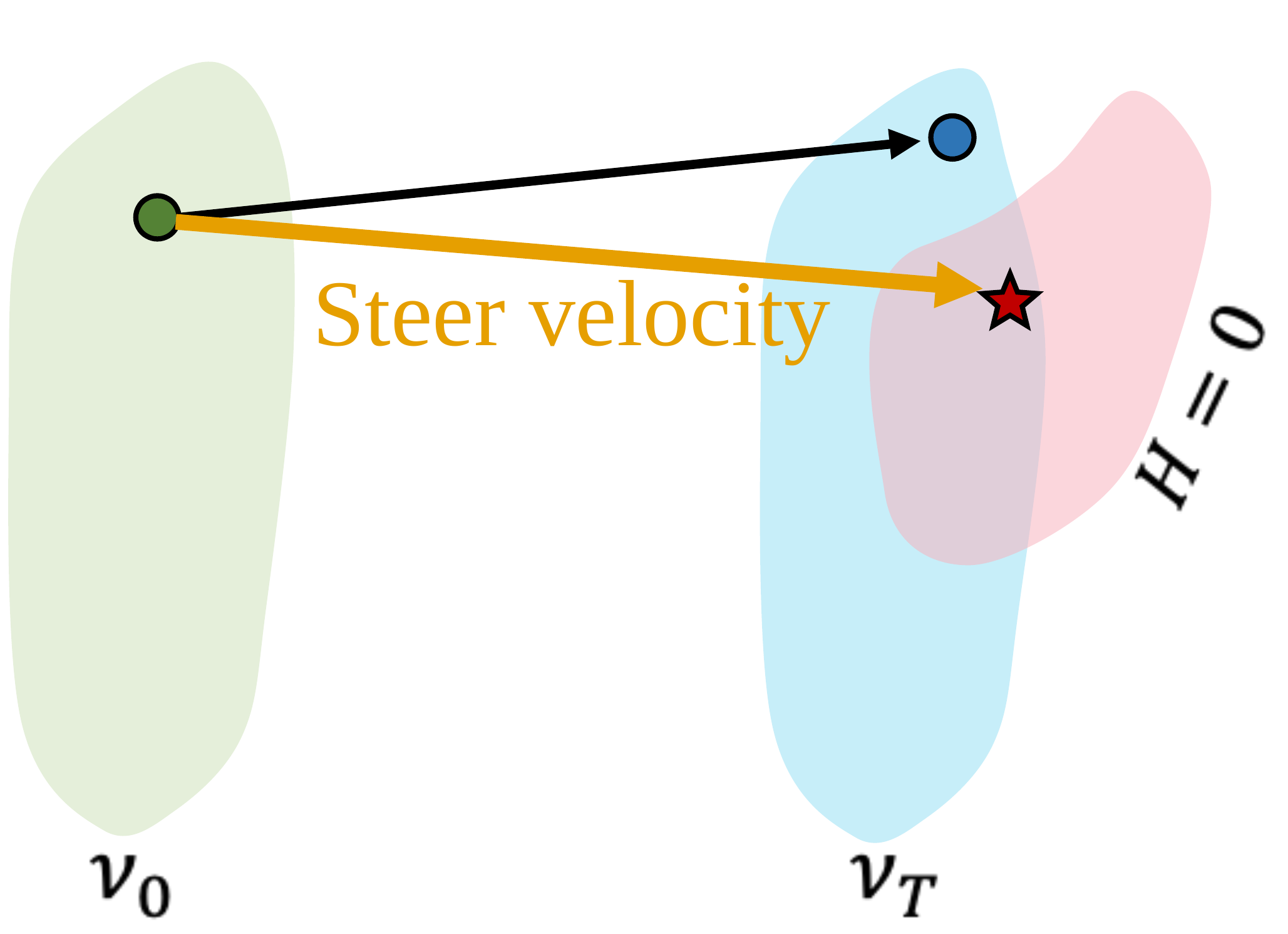}
    \caption{Steer Velocity Field}
    \label{fig:steer_velocity}
\end{subfigure}
\hfill
\begin{subfigure}[t]{0.24\linewidth}
    \centering
    \includegraphics[width=\linewidth]{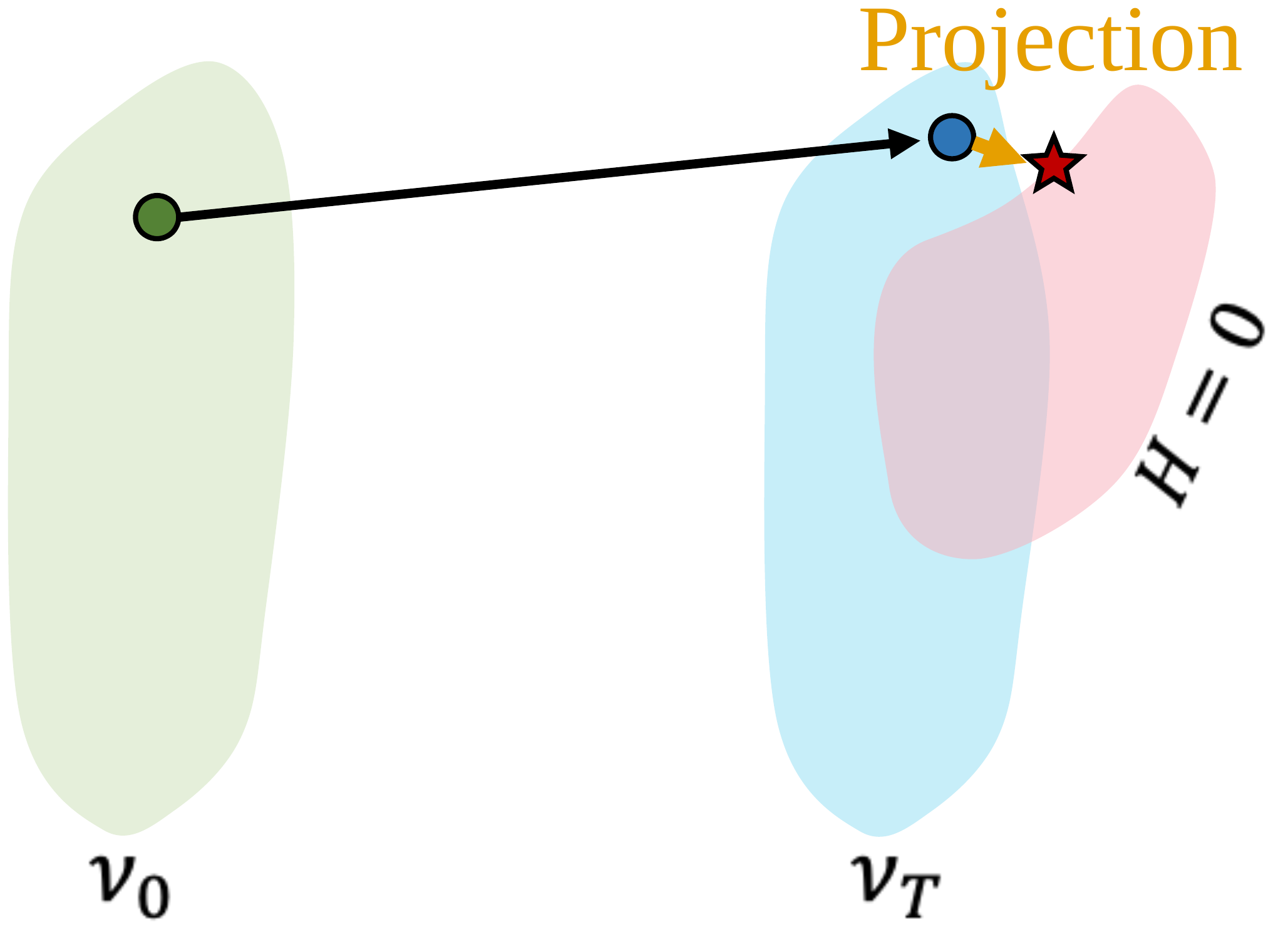}
    \caption{Terminal Projection}
    \label{fig:projection}
\end{subfigure}
\hfill
\begin{subfigure}[t]{0.24\linewidth}
    \centering
    \includegraphics[width=\linewidth]{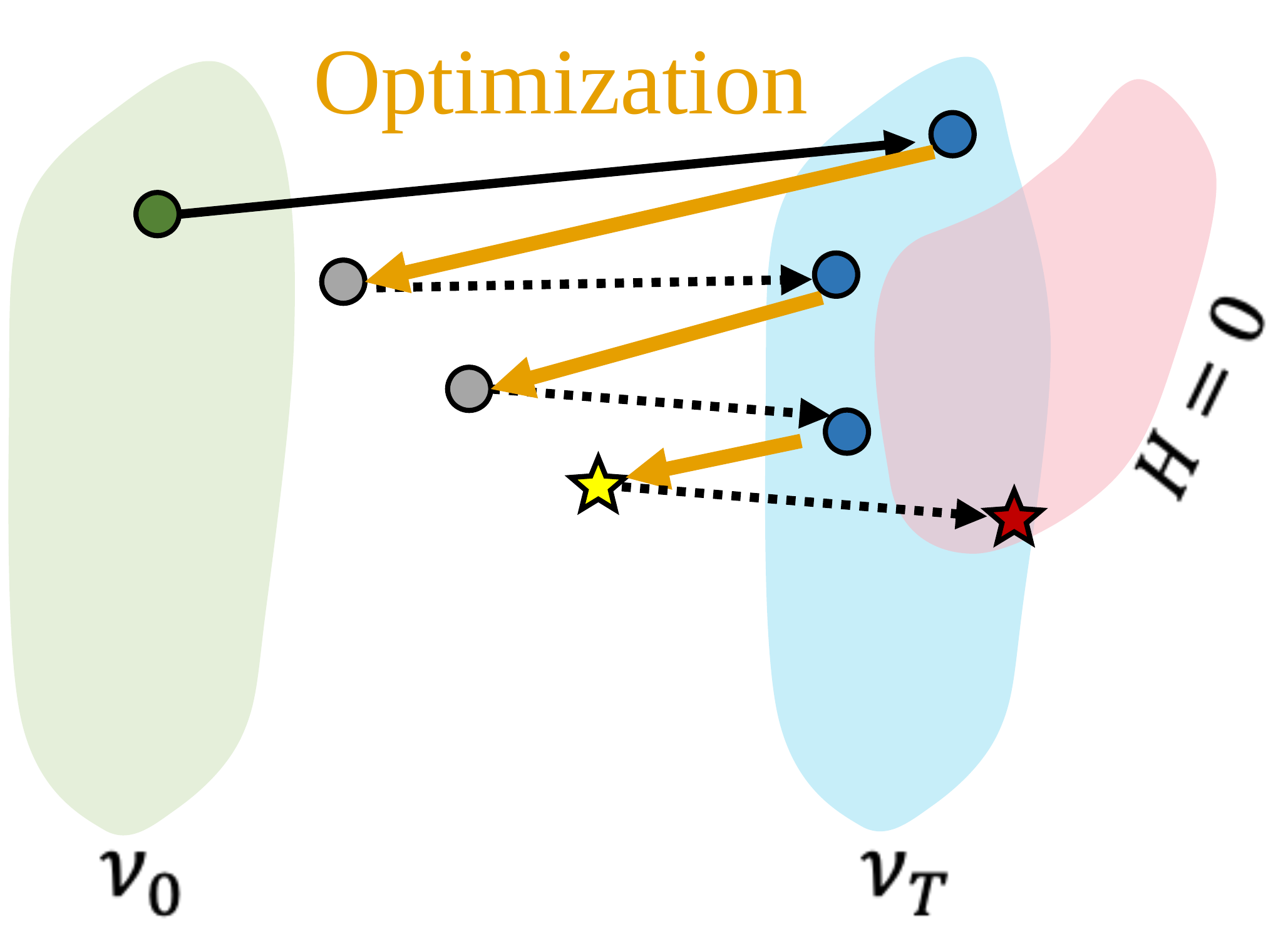}
    \caption{Trajectory Optimization}
    \label{fig:optimization}
\end{subfigure}
\hfill
\begin{subfigure}[t]{0.24\linewidth}
    \centering
    \includegraphics[width=\linewidth]{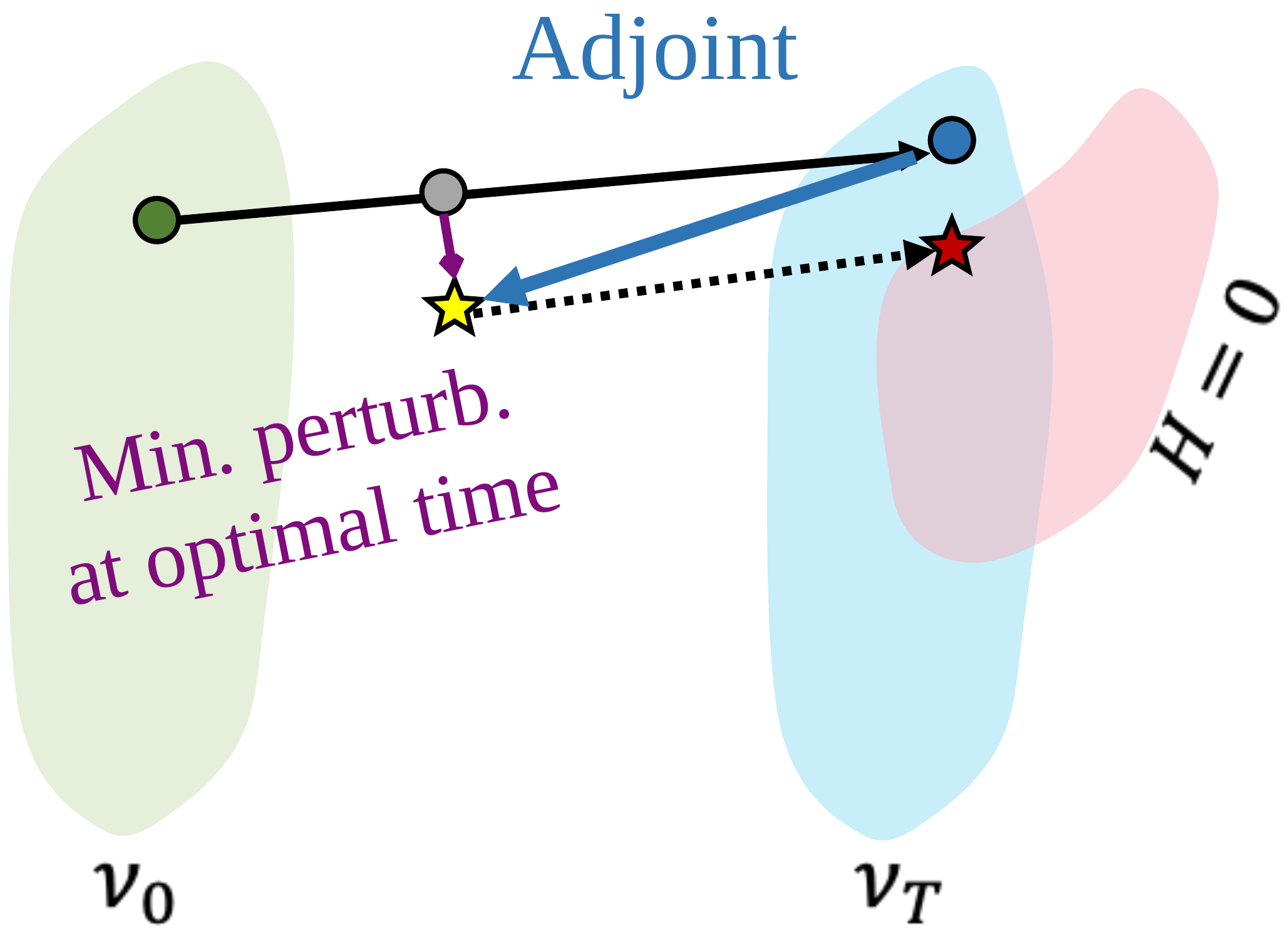}
    \caption{Trajectory Intervention (Ours)}
    \label{fig:adjoint}
\end{subfigure}

\caption{
Comparison of constrained generation strategies.
\textbf{Steer Velocity Field} modifies the flow by learning an additional velocity field.
\textbf{Terminal Projection} projects the generated sample onto the constraint manifold.
\textbf{Trajectory Optimization} iteratively optimizes states along the flow.
In contrast, our \textbf{Trajectory Intervention} minimally perturbs the trajectory at an intermediate time then follows the original flow.
}
\label{fig:comparison_correction_methods}
\end{figure}

\subsection{Adjoint-Based Optimal Correction}\label{sec:adjoint}
We solve (\ref{eq:minimal_correction_problem}) by computing the minimum-norm perturbation via adjoint sensitivity of the flow map.
For a generated trajectory $\{u_t\}_{t\in[0,T]}$, consider a perturbation at an intermediate time $s$:
\[
u_s'=u_s+\delta .
\]
For a sufficiently small $\delta$, the terminal constraint can be linearized around the original trajectory as
\begin{equation}\label{eq:linearization}
H(u')=F_s(\delta)
\approx
H(u_T) + \nabla H(u_T) D\Phi(T,s)(u_s)\delta,\quad D\Phi(T,s)\coloneqq \frac{\partial\Phi(T,s)(u)}{\partial u}(u).
\end{equation}
Then, enforcing $F_s(\delta)=0$ under the linear approximation gives
\begin{equation}\label{eq:As}
A_s\delta=-H(u_T),\quad A_s \coloneqq \nabla H(u_T)  D\Phi(T,s)(u_s)
\end{equation}
whose minimum-norm solution is
\begin{tcolorbox}[
    colback=gray!8,
    colframe=gray!8,
    boxrule=0pt,
    arc=2pt,
    left=4pt,
    right=4pt,
    top=4pt,
    bottom=4pt
]
\begin{equation}\label{eq:optimal_delta}
\delta_s^\ast=-A_s^\dagger H(u_T),
\end{equation}
\end{tcolorbox}
\noindent where \(A_s^\dagger\) denotes the Moore--Penrose pseudoinverse, which can be computed by solving the associated least-squares problem. In particular, when $H$ is scalar-valued, (\ref{eq:optimal_delta}) reduces to
\[
\delta_s^\ast = -\frac{A_s^\top}{\|A_s\|^2}\,H(u_T),\quad A_s\neq 0.
\]
\vspace{-1em}

Consequently, for each intervention time $s$, the minimum-norm perturbation is determined under the linear approximation. This correction can also be interpreted as a single \emph{Newton step} for solving the nonlinear equation $F_s(\delta)=0$. Appendix~\ref{appen:control_perspective} also provides an \emph{optimal control} interpretation.

\paragraph{Adjoint Computation.}
The computation of $A_s$ in in (\ref{eq:As}) requires the sensitivity of the terminal state with respect to the intermediate state.
Directly computing the full Jacobian $D\Phi(T,s)$ is computationally prohibitive for high-dimensional systems. Instead, we compute its vector-Jacobian product through the adjoint equation.

Specifically, we define the adjoint variable $\lambda(t)$ as the solution of the \emph{adjoint equation}
\begin{equation}\label{eq:adjoint_eq}
\dot{\lambda}(t) = -\left(\frac{\partial f_\theta}{\partial u}(u_t,t)\right)^\top \lambda(t),\quad \lambda(T)=\nabla H(u_T)^\top.
\end{equation}
Solving this equation backward from $t=T$ to $t=s$ yields exactly the required vector-Jacobian product  \citep{coddington1955theory, pontryagin1987mathematical}
\[
\lambda(s) = D\Phi(T,s)^\top \nabla H(u_T)^\top = A_s^\top .
\]
Thus, $A_s$ is obtained without explicitly forming the flow Jacobian, which enables efficient computation of the optimal correction for high-dimensional flow matching models.

We provide theoretical guarantees for the proposed adjoint-based intermediate correction in terms of both constraint satisfaction and distributional preservation. 
\begin{theorem}[Informal]\label{thm:main_informal_correction}
Under mild regularity conditions on $f_\theta$ and $H$, the proposed linearized correction in (\ref{eq:optimal_delta}) achieves a second-order terminal constraint residual:
\[
\left\| H\left(\Phi(T,s)(u_s+\delta_s^\ast)\right) \right\| \leq C\|\delta_s^\ast\|^2.
\]
Moreover, if the flow field is $L_f$-Lipschitz in $u$, the corrected terminal distribution $\nu_T^\ast$ satisfies
\[
W_2(\nu_T^\ast,\nu_T) \leq e^{L_f(T-s)} \|\delta_s^\ast\|_{L^2(\nu_s)}.
\]
\end{theorem}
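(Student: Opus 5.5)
The first claim follows from a second-order Taylor expansion of the composite map $F_s(\delta)=H(\Phi(T,s)(u_s+\delta))$ around $\delta=0$. The second follows from a synchronous coupling of the two terminal distributions combined with Grönwall's inequality.

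For the first claim, I would state the regularity assumptions explicitly. Assume $f_\theta(\cdot,t)$ is $C^1$ in $u$ with Lipschitz derivative, uniformly in $t$. Assume $H$ is $C^1$ with $\nabla H$ Lipschitz on a neighborhood of the relevant states. Under these assumptions, standard ODE differentiability results show that $\Phi(T,s)$ is $C^1$ with Lipschitz Jacobian $D\Phi(T,s)$. The Lipschitz constant of the Jacobian comes from differentiating the variational equation $\dot J = \partial_u f_\theta(u_t,t) J$ and applying Grönwall, and it is controlled by $e^{2L_f(T-s)}$-type factors. Consequently $F_s$ is $C^1$ with $\nabla F_s$ Lipschitz with some constant $2C$. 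The Taylor remainder then gives
\[
\|F_s(\delta)-F_s(0)-\nabla F_s(0)\delta\|\le C\|\delta\|^2 .
\]
By \eqref{eq:linearization} and \eqref{eq:As}, $F_s(0)=H(u_T)$ and $\nabla F_s(0)=A_s$. I need $A_s\delta_s^\ast=-H(u_T)$, which holds when $H(u_T)\in\operatorname{range}(A_s)$. The natural sufficient condition is that $A_s$ has full row rank, and I would add this to the ``mild regularity conditions.'' The linear terms then cancel, leaving $\|H(\Phi(T,s)(u_s+\delta_s^\ast))\|\le C\|\delta_s^\ast\|^2$.

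The step I expect to be the main obstacle is controlling the Lipschitz constant of $D\Phi(T,s)$. I would handle it by writing $J_t(u)-J_t(v)$, subtracting the two variational equations, and applying Grönwall twice. I would only need its existence, so a non-explicit constant $C$ suffices. If $A_s$ is not full rank, the bound must instead include the least-squares residual $\|(I-A_sA_s^\dagger)H(u_T)\|$, and I would note this caveat.

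For the Wasserstein bound, I would couple the two distributions directly. Draw $u_s\sim\nu_s$ and push it forward to $u_T=\Phi(T,s)(u_s)$ and to $u_T^\ast=\Phi(T,s)(u_s+\delta_s^\ast(u_s))$. This gives a coupling of $\nu_T$ and $\nu_T^\ast$, where $\nu_T^\ast$ is defined as the law of $u_T^\ast$. Along the two trajectories $v_t,w_t$, the $L_f$-Lipschitz property gives
\[
\tfrac{d}{dt}\|v_t-w_t\|\le L_f\|v_t-w_t\|,
\]
so by Grönwall $\|u_T^\ast-u_T\|\le e^{L_f(T-s)}\|\delta_s^\ast(u_s)\|$. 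Squaring and taking expectations under $\nu_s$ yields $W_2(\nu_T^\ast,\nu_T)^2\le \mathbb{E}\|u_T^\ast-u_T\|^2\le e^{2L_f(T-s)}\|\delta_s^\ast\|_{L^2(\nu_s)}^2$. This rests on the facts that $\delta_s^\ast$ is a measurable function of $u_s$ (since $u_T$ and $A_s$ are determined by $u_s$) and that $W_2$ is an infimum over couplings.
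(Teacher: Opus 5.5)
Your proposal is correct and follows the paper's route: Gr\"onwall on the variational equations to control the curvature of $F_s$, a second-order Taylor remainder whose linear part cancels under the full-row-rank assumption on $A_s$, and a synchronous coupling plus Gr\"onwall for the $W_2$ bound. The only difference is minor: you assume $C^{1,1}$ regularity and bound the Lipschitz constant of $D\Phi(T,s)$ by subtracting variational equations, whereas the paper assumes $C^2$ with bounded second derivatives and bounds $D^2\Phi(T,s)$ directly; both yield the same constant $\frac{L_2}{L_1}\bigl(e^{2L_1(T-s)}-e^{L_1(T-s)}\bigr)$.
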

The formal statements and proofs are provided in Appendix~\ref{appen:linearized_correction}. The first result quantifies the constraint satisfaction of the proposed correction, while the second shows the corrected terminal distribution deviation from the original flow-matching distribution is bounded by both the magnitude of the intermediate correction and the remaining integration time $T-s$.

\subsection{Adaptive Intervention Time Selection}\label{sec:time_seclection}
The previous subsection determines the minimum-norm correction for a fixed intervention time $s$. We next select $s$ by balancing the correction magnitude and the remaining propagation time $T-s$. Theorem~\ref{thm:main_informal_correction} suggests that two factors should be controlled: the magnitude of the intervention and the remaining flow integration time. Because the Lipschitz constant $L_f$ is generally unknown, we instead consider the following practical surrogate:
\begin{tcolorbox}[
    colback=gray!8,
    colframe=gray!8,
    boxrule=0pt,
    arc=2pt,
    left=4pt,
    right=4pt,
    top=4pt,
    bottom=4pt
]
\begin{equation}\label{eq:optimal_time}
s^\ast = \arg\min_{s\in[0,T)} J(s)\coloneqq \|\delta_s^\ast\|^2 + \lambda (T-s)^2,
\end{equation}
\end{tcolorbox}
\noindent
where $\delta_s^\ast$ is given by (\ref{eq:optimal_delta}) and $\lambda$
controls the preference for later intervention.
The first term favors intervention times that require a small correction.
The second term favors later intervention by penalizing the remaining propagation time after the intervention. This reduces both the potential amplification of the perturbation through the remaining flow evolution and the numerical cost required to reach the terminal time.
Thus, $J(s)$ provides a practical criterion that balances distributional preservation and computational efficiency.

\begin{wrapfloat}{algorithm}{r}{0.54\linewidth}
\begin{algorithmic}[1]
\Require
$u_0\sim\mu$, $f_\theta$, $H$, $\{s_k\}_{k=1}^K$, $\lambda$

\State Generate trajectory $\{u_t\}_{t=0}^T$ and compute $H(u_T)$.

\State Solve the adjoint backward from $T$ and obtain sensitivities at $\{s_k\}_{k=1}^K$.

\State In parallel, compute $\delta_{s_k}^\ast$ via (\ref{eq:optimal_delta}) and evaluate $J(s_k)$ for all $s_k$.

\State Select $s^\ast=\arg\min_{s_k}J(s_k)$.

\State Re-integrate from $u_{s^\ast}+\delta_{s^\ast}^\ast$ to $T$.

\noindent\Return
$u_T^{\mathrm{corr}} = \Phi(T,s^\ast) \left(u_{s^\ast}+\delta_{s^\ast}^\ast\right)$
\end{algorithmic}

\caption{MintFlow Constrained Generation}
\label{alg:mintflow}
\end{wrapfloat}
In practice, we evaluate $J(s)$ on a finite set of candidate times $0\leq s_1<\cdots<s_K<T$:
\begin{equation}\label{eq:finite_time_optimization}
s^\ast = {\arg\min}_{k=1,\cdots,K} J(s_k).
\end{equation}

After generating the trajectory once, we integrate the adjoint equation backward to the earliest candidate time $s_1$, storing adjoint states at ${s_k}{k=1}^K$. We then compute $\delta{s_k}^\ast$ and $J(s_k)$ in parallel for all $k$ and select $s^\ast$ via (\ref{eq:finite_time_optimization}). We then apply
\[
u_{s^\ast}\leftarrow u_{s^\ast}+\delta_{s^\ast}^\ast
\]
and reintegrate the original flow from $s^\ast$ to $T$ to obtain the constrained sample.
The complete MintFlow algorithm is summarized in Algorithm~\ref{alg:mintflow}.

\paragraph{Iterative Refinement.}
The correction in (\ref{eq:optimal_delta}) is a single Newton step for $F_s(\delta)=0$. If the resulting constraint residual exceeds a prescribed tolerance, 
we iteratively apply Algorithm~\ref{alg:mintflow} on the corrected trajectory until the residual falls below the tolerance.
At iteration $i$, we compute the adjoint sensitivity, select an
intervention time $s^{(i)}$, apply $\delta_{s^{(i)}}^\ast$, and reintegrate the original flow to $T$. We restrict the intervention times to $s^{(1)} < s^{(2)} < \cdots < T$, so each subsequent iteration starts from the trajectory produced by the previous correction and requires a shorter backward adjoint integration. Algorithm~\ref{alg:iterative_adj} and its convergence analysis are provided in Appendix~\ref{appen:convergence_newton}.
\section{Experiments}
We evaluate \textbf{MintFlow} on three constrained generation settings: \textit{(i)} image inverse problems, \textit{(ii)} image editing, and \textit{(iii)} physical system modeling. The experiments span different flow models, data domains, and constraint types, ranging from local to global and from linear to nonlinear, demonstrating the broad applicability of our approach.  We compare MintFlow against existing training-free constrained generation methods, using the same pretrained flow for all methods.\footnote{All image experiments were conducted on a single NVIDIA H200 GPU, and all physical system experiments were conducted on a single NVIDIA A100 GPU.} We assess each method along three core dimensions, \textit{constraint satisfaction}, \textit{preservation of the pretrained distribution}, and \textit{computational efficiency}, supplemented by task-specific performance metrics.
The code is publicly available at \url{https://github.com/Yebbi/MintFlow}.

\subsection{Image Inverse Problems}\label{sec:exp_image_inverse}
\paragraph{Setup.} 
Following \cite{patel2024flowchef}, we evaluate MintFlow on noise-free image inverse problems, including inpainting, super resolution, and deblurring, using the AFHQ-Cat and FFHQ datasets. We use the same pretrained Rectified Flow++ (RF) models~\citep{lee2024improvingtrainingrectifiedflows} across all experiments, and resize all images to $64\times64$.
For each task, given a degraded observation $u^{\mathrm{obs}}$, we define the measurement constraint as $H(u)=A(u)-u^{\mathrm{obs}}=0$, where $A$ denotes the task-specific degradation operator. We compare MintFlow with existing image-constraint methods—including D-Flow, DPS, Freedom, PNPFlow, and FlowChef—while sharing the same frozen RF model and source noises. In addition to constraint satisfaction, distribution preservation, and efficiency, we report reconstruction quality using PSNR, SSIM, and LPIPS. For further details, please refer to Appendix \ref{appen:inverse_problems}.

\paragraph{Quantitative Results.}
Table \ref{tab:RF_inverse_results} summarizes the quantitative results. Across all tasks and datasets, MintFlow consistently provides a substantially better balance between constraint satisfaction and reconstruction quality than existing methods. In particular, it achieves the lowest constraint error and Fréchet Inception Distance (FID) score across all evaluated tasks. This indicates that MintFlow can closely satisfy the measurement constraints without requiring excessive modification of the pretrained generative prior. Importantly, the improved constraint satisfaction does not come at the expense of perceptual or reconstruction quality. MintFlow achieves the best FID, PSNR, SSIM, and LPIPS across the evaluated settings, demonstrating its ability to simultaneously preserve the learned data distribution and recover high-quality image details. Furthermore, MintFlow shows computational efficiency of a minimal-intervention approach. In contrast, while competing methods can reduce the constraint error, they often produce reconstructions that deviate from the learned data distribution, resulting in higher FID and inferior reconstruction quality (Figure~\ref{fig:image_inverse_main}).

\begin{table}[t]
\centering
\caption{Quantitative comparison of methods for image inverse problems on AFHQ-Cat and FFHQ. We report constraint error $H$, FID, PSNR (dB), SSIM, LPIPS, inference time (s), and peak GPU memory (MB). Light green (\colorbox{green!15}{green})  indicates the top performer, while light red (\colorbox{red!10}{red}) represents the second-best performer across all methods for each metric (applies to all tables herein).}
\label{tab:RF_inverse_results}
\resizebox{\textwidth}{!}{
\begin{tabular}{lccccc|ccccc|cc}
\toprule
& \multicolumn{5}{c|}{\textbf{AFHQ-Cat}} & \multicolumn{5}{c|}{\textbf{FFHQ}} & \multicolumn{2}{c}{\textbf{Efficiency}}\\
\textbf{Method} 

& \textbf{$H \downarrow$} 
& \textbf{FID $\downarrow$}
& \textbf{PSNR $\uparrow$} 
& \textbf{SSIM $\uparrow$}
& \textbf{LPIPS $\downarrow$}

& \textbf{$H \downarrow$} 
& \textbf{FID $\downarrow$}
& \textbf{PSNR $\uparrow$} 
& \textbf{SSIM $\uparrow$}
& \textbf{LPIPS $\downarrow$} 
& \textbf{Time $\downarrow$} 
& \textbf{Memory $\downarrow$} \\
\midrule

\multicolumn{13}{l}{\textbf{\textit{Task: Inpainting}}}
\\

D-Flow
& \cellcolor{red!10}1.775E-2 & 23.662 & 20.8597 & \cellcolor{red!10}0.7180 & \cellcolor{red!10}0.1578 & \cellcolor{red!10}1.592E-2 & 29.692 & \cellcolor{red!10}22.7378 & \cellcolor{red!10}0.7910 & \cellcolor{red!10}0.0915 & 153.47 & 3597.02 \\

DPS
& 2.583E-2 & \cellcolor{red!10}14.532 & 20.9537 & 0.6232 & 0.1806 & 2.897E-2 & 33.498 & 21.3944 & 0.6913 & 0.1146 & 6.61 & 646.74 \\

Freedom
& 6.013E-2 & 19.999 & 17.8908 & 0.4377 & 0.2502 & 6.789E-2 & 39.998 &  18.0828 & 0.5334 & 0.1605 &4.60 & 637.36\\

PNPFlow
& 1.926E-2 & 24.063 & \cellcolor{red!10}22.2547 & 0.6877 & 0.1897 &  2.126E-2 & 63.579 & 22.3143 & 0.7388 & 0.1281 & 5.56 & 338.28\\

FlowChef
& 2.476E-2 & 18.292 & 19.4181 & 0.6110 &  0.1824 & 2.572E-2 & \cellcolor{red!10}17.226 & 20.5481 & 0.6934 & 0.1028 & \cellcolor{red!10}0.21 & \cellcolor{red!10}330.74 \\

\textbf{MintFlow}
& \cellcolor{green!15}5.498E-4 & \cellcolor{green!15}6.506 & \cellcolor{green!15}26.3913 & \cellcolor{green!15}0.9207 & \cellcolor{green!15}0.0376 & \cellcolor{green!15}5.172E-4 & \cellcolor{green!15}6.046 & \cellcolor{green!15}28.7002 & \cellcolor{green!15}0.9417 & \cellcolor{green!15}0.0207 & \cellcolor{green!15}0.19 & \cellcolor{green!15}329.99
\\
\midrule

\multicolumn{13}{l}{\textbf{\textit{Task: Super Resolution}}}
\\
D-Flow
& 1.371E-2 & 27.351 & 19.9141 & 0.5397 & 0.2309 & 1.347E-2  & 65.788 & 20.2276 & 0.6187 & 0.1565 & 188.96 & 3596.63\\

DPS
& 1.723E-2 & \cellcolor{red!10}12.771 & 19.9188 & 0.5220 & \cellcolor{red!10}0.1686 & 1.838E-2 & \cellcolor{red!10}9.608 & 19.7618 & 0.5924 & \cellcolor{red!10}0.1173 & 6.42 & 646.69\\

Freedom
& 4.814E-2 & 19.048 & 17.5691 & 0.3816 & 0.2452 & 5.124E-2 & 25.978 & 17.3415 & 0.4694 & 0.1568 & 4.53 & 637.32\\

PNPFlow
& \cellcolor{red!10}1.193E-2 & 52.9554 & \cellcolor{red!10}21.6583 & \cellcolor{red!10}0.6187 & 0.2289 &  \cellcolor{red!10}1.342E-2 & 79.951 & \cellcolor{red!10}21.1512 & \cellcolor{red!10}0.6670 & 0.1649 & 4.9 & 338.24 \\

FlowChef
& 1.842E-2 & 23.601 & 18.9815 & 0.4435 & 0.2352 & 1.950E-2 & 18.998 & 18.7612 &  0.5141 & 0.1381 & \cellcolor{red!10}0.25 & \cellcolor{red!10}330.69\\

\textbf{MintFlow}
& \cellcolor{green!15}9.901E-5 & \cellcolor{green!15}8.296 & \cellcolor{green!15}22.9958 & \cellcolor{green!15}0.7147 & \cellcolor{green!15}0.0757 &  \cellcolor{green!15}1.248E-4 & \cellcolor{green!15}8.313 & \cellcolor{green!15}22.4535 & \cellcolor{green!15}0.7468 & \cellcolor{green!15}0.0543 & \cellcolor{green!15}0.22 & \cellcolor{green!15}329.86\\

\midrule

\multicolumn{13}{l}{\textbf{\textit{Task: Deblurring}}}
\\

D-Flow
& \cellcolor{red!10}9.015E-3 & 47.8724 & 18.2846 & 0.3734 & 0.3025 & \cellcolor{red!10}9.850E-3 & 81.818 & 17.9925 & 0.4277 & 0.1528 & 187.30 & 3596.97 \\

DPS
& 1.950E-2 & \cellcolor{red!10}12.5016 & 20.5182 & 0.5629 & \cellcolor{red!10}0.1694 & 2.018E-2 & 23.919 & 20.2454 & 0.6272 & \cellcolor{red!10}0.1145 & 6.23 & 646.69\\

Freedom
& 5.174E-2 & 18.962 & 17.6911 & 0.3983 & 0.2384 & 5.191E-2 & 28.108 & 17.5191 & 0.4886 & 0.1566 & 4.11 & 637.32\\

PNPFlow
& 1.211E-2 & 25.170& \cellcolor{red!10}22.3216 & \cellcolor{red!10}0.6711 & 0.2005 & 1.348E-2 & 73.589 & \cellcolor{red!10}21.8534 & \cellcolor{red!10}0.7116 & 0.1463 & 4.43 & 338.24\\

FlowChef
& 2.365E-2 & 18.548 & 19.2998 & 0.4924 & 0.2237 & 2.543E-2 & \cellcolor{red!10}19.195 & 19.0493 & 0.5593 & 0.1334 & \cellcolor{red!10}0.23 & \cellcolor{red!10}330.69\\

\textbf{MintFlow}
& \cellcolor{green!15}2.877E-4 & \cellcolor{green!15}6.819 & \cellcolor{green!15}28.3760 & \cellcolor{green!15}0.8988 & \cellcolor{green!15}0.03445 & \cellcolor{green!15}2.916E-4 & \cellcolor{green!15}6.417 &  \cellcolor{green!15}28.4017 & \cellcolor{green!15}0.9221 & \cellcolor{green!15}0.0237 & \cellcolor{green!15}0.22 & \cellcolor{green!15}329.94\\

\bottomrule
\end{tabular}
}
\end{table}

\paragraph{Qualitative Comparison.}
Figure~\ref{fig:image_inverse_main} presents representative qualitative results with additional uncurated results provided in Appendix~\ref{appen:additional_images_inverse}. These results corroborate the quantitative findings featured in Table~\ref{tab:RF_inverse_results}. MintFlow produces natural-looking reconstructions while faithfully preserving the measurement constraints. In contrast, baselines often produce blurry reconstructions or introduce visually implausible structures and artifacts, indicating deviations from the learned image distribution. Overall, the qualitative results demonstrate that MintFlow more effectively satisfies the measurement constraints while remaining faithful to the learned image distribution.

\begin{figure}[h]
    \centering
    \resizebox{\textwidth}{!}{%
    \begin{tikzpicture}

        \node[inner sep=0pt, anchor=south west] (inpaint) at (0,0)
            {\includegraphics[
                width=\textwidth
            ]{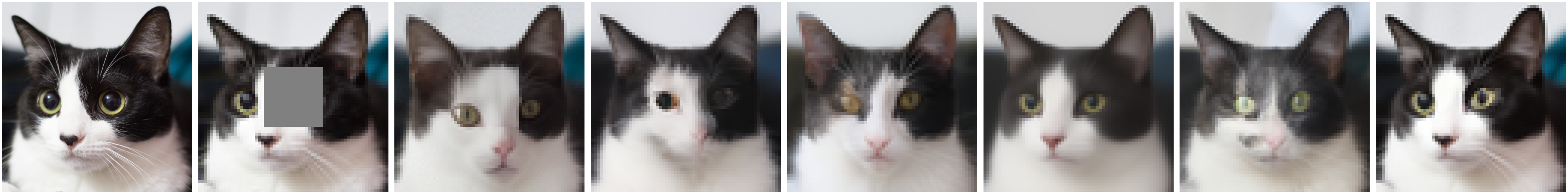}};

        \node[inner sep=0pt, anchor=north west] (sr) at
            (inpaint.south west)
            {\includegraphics[
                width=\textwidth
            ]{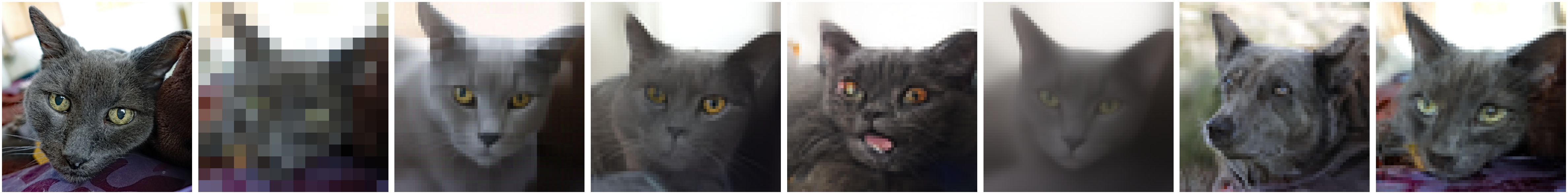}};

        \node[inner sep=0pt, anchor=north west] (deblur) at
            (sr.south west)
            {\includegraphics[
                width=\textwidth
            ]{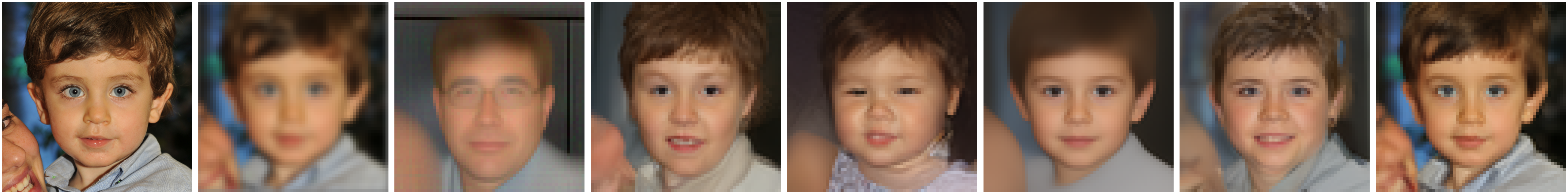}};

        \node[anchor=south] at
            ($(inpaint.north west)!1/16!(inpaint.north east)$)
            {GT};

        \node[anchor=south] at
            ($(inpaint.north west)!3/16!(inpaint.north east)$)
            {Degraded};

        \node[anchor=south] at
            ($(inpaint.north west)!5/16!(inpaint.north east)$)
            {D-Flow};

        \node[anchor=south] at
            ($(inpaint.north west)!7/16!(inpaint.north east)$)
            {DPS};

        \node[anchor=south] at
            ($(inpaint.north west)!9/16!(inpaint.north east)$)
            {Freedom};

        \node[anchor=south] at
            ($(inpaint.north west)!11/16!(inpaint.north east)$)
            {PnPFlow};

        \node[anchor=south] at
            ($(inpaint.north west)!13/16!(inpaint.north east)$)
            {FlowChef};

        \node[anchor=south] at
            ($(inpaint.north west)!15/16!(inpaint.north east)$)
            {\textbf{MintFlow}};

        \node[
            rotate=90,
            anchor=south,
        ] at
            ($(inpaint.west)+(-0.015\textwidth,0.5*0)$)
            {Inpainting};

        \node[
    rotate=90,
    anchor=south,
] at
    ($(sr.west)+(-0.005\textwidth,0)$)
    {\shortstack[c]{Super\\Resolution}};
    
        \node[
            rotate=90,
            anchor=south,
        ] at
            ($(deblur.west)+(-0.015\textwidth,0)$)
            {Deblurring};

    \end{tikzpicture}}

    \caption{Representative qualitative results for image inverse problems. We compare ground-truth images (GT), degraded inputs, and results from different methods. Inpainting and super resolution are evaluated on AFHQ-Cat, while deblurring is evaluated on FFHQ.}
    \label{fig:image_inverse_main}
\end{figure}
\subsection{Image Editing}
\label{sec:exp_image_editing}
\paragraph{Setup.}
Following \cite{patel2024flowchef}, we conduct image editing experiments on PIE-Bench \citep{ju2024pnp} using InstaFlow \citep{liu2023instaflow} as the flow-matching model. Given a reference image, mask, and text guidance, the task requires preserving the unmasked regions while editing the masked region according to the provided text prompt. Thus, the primary challenge is to simultaneously satisfy the preservation constraint and the text-guided editing objective. We compare against the same baselines as in Section~\ref{sec:exp_image_inverse}, while utilizing the identical model and source noises. D-Flow is omitted due to its excessive GPU memory requirements. In addition to constraint satisfaction and efficiency, we evaluate prompt adherence and image similarity using CLIP scores, measuring how well the edited image follows the text prompt and remains visually consistent with the reference image, respectively. Additional details are provided in Appendix \ref{appen:image_editing}.
\begin{figure}[b!]
    \centering

    \begin{tikzpicture}


        \node[inner sep=0] (img1) {
            \includegraphics[
                width=\linewidth
            ]{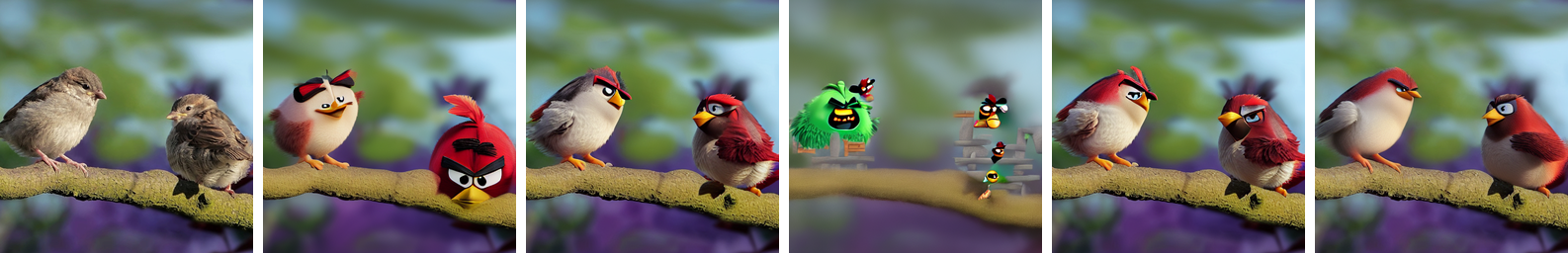}
        };

        \node[
            font=\small
        ] at ($(img1.north west)!1/12!(img1.north east) + (0, 7pt)$)
        {Input};

        \node[
            font=\small
        ] at ($(img1.north west)!3/12!(img1.north east) + (0, 7pt)$)
        {DPS};

        \node[
            font=\small
        ] at ($(img1.north west)!5/12!(img1.north east) + (0, 7pt)$)
        {Freedom};

        \node[
            font=\small
        ] at ($(img1.north west)!7/12!(img1.north east) + (0, 7pt)$)
        {PnPFlow};

        \node[
            font=\small
        ] at ($(img1.north west)!9/12!(img1.north east) + (0, 7pt)$)
        {FlowChef};

        \node[
            font=\small\bfseries
        ] at ($(img1.north west)!11/12!(img1.north east) + (0, 7pt)$)
        {MintFlow};


        \node[
            font=\small,
            anchor=north
        ] (text1) at ($(img1.south) + (0,-3pt)$)
        {
            \tikz[baseline=-1.5ex]{
        \node[
            fill=gray!15,
            rounded corners=2pt,
            inner xsep=3pt,
            inner ysep=1pt
        ] {\textcolor{gray}{\textit{Prompt:}}};
    }
    \;
            \texttt{two \sout{birds} \textcolor{red}{angry birds} sitting on a branch}
        };


        \node[
            inner sep=0,
            anchor=north
        ] (img2) at ($(text1.south) + (0,-3pt)$) {
            \includegraphics[
                width=\linewidth
            ]{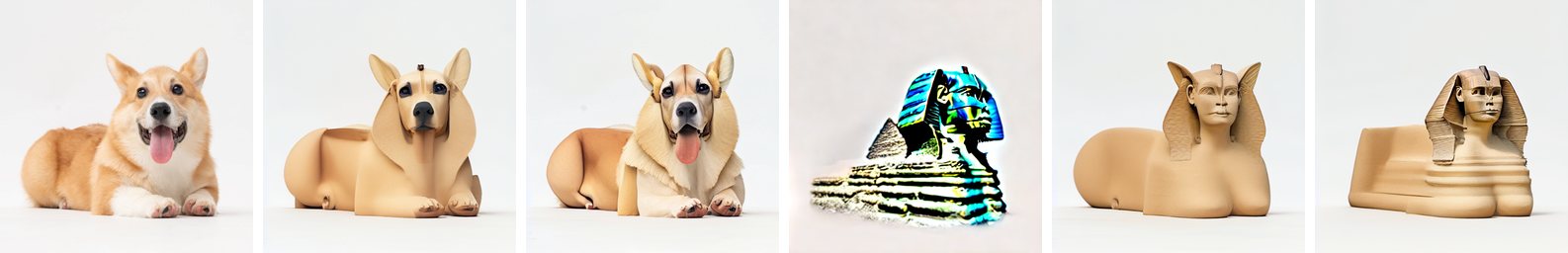}
        };


        \node[
            font=\small,
            anchor=north
        ] (text2) at ($(img2.south) + (0,-3pt)$)
     {
    \tikz[baseline=-1.5ex]{
        \node[
            fill=gray!15,
            rounded corners=2pt,
            inner xsep=3pt,
            inner ysep=1pt
        ] {\textcolor{gray}{\textit{Prompt:}}};
    }
    \;
    \texttt{a \sout{sitting dog} \textcolor{red}{Sphinx} on a gray background}
};


        \node[
            inner sep=0,
            anchor=north
        ] (img3) at ($(text2.south) + (0,-3pt)$) {
            \includegraphics[
                width=\linewidth
            ]{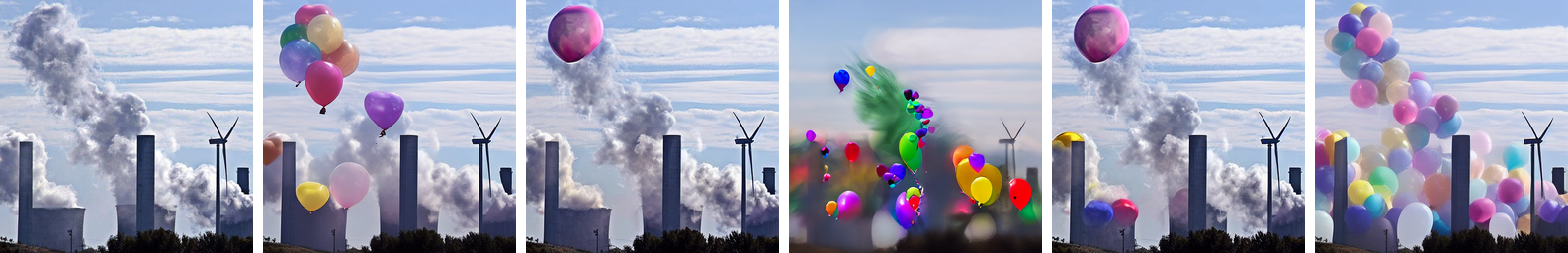}
        };


        \node[
            font=\small,
            anchor=north
        ] (text3) at ($(img3.south) + (0,-3pt)$)
     {
    \tikz[baseline=-1.5ex]{
        \node[
            fill=gray!15,
            rounded corners=2pt,
            inner xsep=3pt,
            inner ysep=1pt
        ] {\textcolor{gray}{\textit{Prompt:}}};
    }
    \;
    \texttt{power plant with \sout{smoke} \textcolor{red}{a mass of colorful party balloons}}
};
    \end{tikzpicture}
\caption{
Qualitative comparison of image editing results. Given an input image and its corresponding text prompt, we compare the editing results produced by different methods. Strikethrough and red text indicate the original and edited text, respectively.}
\label{fig:image_edit_main}
\end{figure}

\paragraph{Quantitative Results.}
\begin{wraptable}{r}{0.55\textwidth}
\vspace{-10pt}
\centering
\caption{Comparison on image editing task.}
\label{tab:image_edit_results}
\resizebox{0.55\textwidth}{!}{
\begin{tabular}{lccccc}
\toprule
Method & H $\downarrow$ & CLIP-T $\uparrow$ & CLIP-I $\uparrow$ & Time $\downarrow$ & Memory $\downarrow$ \\
\midrule
DPS & 0.0115  & 0.2916 & 0.7911 & \cellcolor{red!10}9.17 & 8881.75\\
Freedom & \cellcolor{red!10}0.0071 & 0.2756 & \cellcolor{green!15}0.8466 & 17.65 & 8122.47\\
PNPFlow & 0.0503 & \cellcolor{red!10}0.2918 & 0.6897 & 42.45 & \cellcolor{red!10}6507.95\\
FlowChef & 0.0136 & 0.2891 & 0.7939 & 9.29 & \cellcolor{green!15}6507.26\\
\textbf{MintFlow} & \cellcolor{green!15}0.0069 & \cellcolor{green!15}0.2926 & \cellcolor{red!10}0.8151 & \cellcolor{green!15}8.79 & \cellcolor{green!15}6507.26\\
\bottomrule
\end{tabular}
}
\vspace{-8pt}
\end{wraptable}
Table~\ref{tab:image_edit_results} summarizes the quantitative comparison on the image editing task. MintFlow demonstrates robust overall performance across the evaluated metrics. 
In particular, it attains the best constraint satisfaction, indicating it more faithfully preserves the unmasked regions while performing the desired edit. It also achieves the highest prompt adherence, demonstrating that the resulting images better follow the given text guidance. Although Freedom achieves higher image similarity, it comes at the cost of lower prompt adherence, indicating a stronger tendency to preserve the reference image. As shown in Figure \ref{fig:image_edit_main}, semantic elements that conflict with the prompt, such as the dog's ears, tongue, and power-plant smoke, are retained. MintFlow, in contrast, provides a more favorable balance between preserving the reference image and following the editing instruction.
MintFlow also is the most computationally efficient method, achieving the lowest inference time while using the least memory. This demonstrates the scalability and efficiency of MintFlow even for text-guided image editing. Overall, these results show that MintFlow can simultaneously satisfy the preservation constraint, adhere closely to the editing prompt, and perform the task efficiently, providing a favorable balance between editing quality and computational cost.

\paragraph{Qualitative Comparison.}
Representative qualitative results are presented in Figure \ref{fig:image_edit_main}, with additional results in Appendix \ref{appen:additional_images_editing}. As shown, baseline methods often struggle to maintain structural and visual coherence, producing artifacts such as prompt-inconsistent semantic elements retained from the reference image (e.g., untransformed ears), anatomical inconsistencies (e.g., misplaced eyes), and unnatural color shifts or blurring. In contrast, our method performs the desired edits in accordance with the given prompt while preserving visual coherence and image quality. These results demonstrate that our approach effectively satisfies the prompt constraints while preserving the contextual information of the input image without straying outside of the learned data distribution.

\subsection{Physical System Modeling}
\label{sec:physical_systems}


\paragraph{Setup.}
Following the benchmark protocol of~\cite{utkarsh2026physics}, we evaluate MintFlow on physical-system generation using \emph{Functional Flow Matching} (FFM), which extends flow matching to function-valued data~\citep{kerrigan2023functional}. We consider two spatiotemporal systems: the periodic heat and a one-dimensional reaction--diffusion (RD1D) equations. Each system uses its own pretrained FFM checkpoint, and all samplers share the same frozen weights and source noises. The constraint operator jointly encodes conservation, initial-condition (IC), and boundary-condition (BC) residuals. Heat uses structural periodicity, while RD1D imposes prescribed Neumann fluxes. We compare MintFlow against Vanilla FFM, DiffusionPDE, D-Flow, ECI, and PCFM without retraining or fine-tuning the pretrained model. In addition to the introduced core criteria, we evaluate physical consistency using the governing-equation residual and the error to a numerical trajectory initialized from each generated sample's own IC. Distributional agreement is measured against a numerical reference set generated under the same FFM-derived physical conditions, so the reported set-level distances quantify agreement with the corresponding conditioned numerical distribution rather than divergence from Vanilla FFM itself. Further details are provided in Appendix~\ref{app:pde_details}.

\paragraph{Quantitative Results.}
Table~\ref{tab:physical_constraints_benchmark} presents the compact main comparison using three distribution metrics, the mean constraint residual, and mean simulation error. Table~\ref{tab:app_physical_constraints_benchmark} additionally reports KID and partial differential equation (PDE) residuals together with sample-level standard deviations for the constraint and physical-consistency metrics. MintFlow has the smallest FID, MMD, $W_2$, and simulation-error point estimates on both systems, and the full Table~\ref{tab:physical_constraints_benchmark} shows the same ordering for KID and PDE residual. Together, these rankings indicate improved agreement with the conditioned numerical reference and improved dynamical consistency. The constraint metric exposes the complementary trade-off: post-hoc projection methods, ECI and PCFM, obtain the smallest residuals, but their distribution and physical-consistency metrics deteriorate sharply. By contrast, MintFlow reduces the residual by roughly three orders of magnitude relative to Vanilla, DiffusionPDE, and D-Flow while avoiding ECI and PCFM's pronounced distributional shift and dynamical error. The results place MintFlow on a distinct Pareto point rather than establishing uniform dominance. ECI and PCFM remain preferable if minimizing the explicit constraint residual is the sole objective, whereas MintFlow has the lowest reported distribution and physical-consistency point estimates.

\begin{table*}[t]
\centering
\caption{Physical system modeling benchmark results. Performance evaluation across $N=1000$ generated samples comparing System 1 (Heat Equation) and System 2 (Reaction-Diffusion Equation). Lower values ($\downarrow$) indicate superior performance.}
\label{tab:physical_constraints_benchmark}
\resizebox{\textwidth}{!}{
\begin{tabular}{l ccccc ccccc}
\toprule
& \multicolumn{5}{c}{\textbf{System 1: Heat Equation}} & \multicolumn{5}{c}{\textbf{System 2: Reaction-Diffusion Equation}} \\
\cmidrule(lr){2-6} \cmidrule(lr){7-11}
\textbf{Method}
& \textbf{FID} $\downarrow$ & \textbf{MMD} $\downarrow$ & $\mathbf{W_2}\downarrow$ & $\mathbf{H}\downarrow$ & \textbf{Sim. Error} $\downarrow$
& \textbf{FID} $\downarrow$ & \textbf{MMD} $\downarrow$ & $\mathbf{W_2}\downarrow$ & $\mathbf{H}\downarrow$ & \textbf{Sim. Error} $\downarrow$ \\
\midrule

Vanilla FFM
& \cellcolor{red!10}$1.255\mathrm{E}{-3}$
& \cellcolor{red!10}$2.496\mathrm{E}{-2}$
& $4.872\mathrm{E}{-2}$
& $5.16\mathrm{E}{-3}$
& $5.57\mathrm{E}{-2}$
& \cellcolor{red!10}$1.74\mathrm{E}{-4}$
& \cellcolor{red!10}$2.130\mathrm{E}{-2}$
& \cellcolor{red!10}$2.078\mathrm{E}{-2}$
& $1.907\mathrm{E}{0}$
& $3.55\mathrm{E}{-2}$
\\

DiffusionPDE
& $4.712\mathrm{E}{-3}$
& $5.361\mathrm{E}{-2}$
& $7.431\mathrm{E}{-2}$
& $1.86\mathrm{E}{-2}$
& $8.49\mathrm{E}{-2}$
& $9.34\mathrm{E}{-4}$
& $6.063\mathrm{E}{-2}$
& $3.712\mathrm{E}{-2}$
& $9.56\mathrm{E}{-1}$
& $3.29\mathrm{E}{-2}$
\\

D-Flow
& $1.290\mathrm{E}{-3}$
& $2.554\mathrm{E}{-2}$
& \cellcolor{red!10}$4.098\mathrm{E}{-2}$
& $4.71\mathrm{E}{-3}$
& \cellcolor{red!10}$4.34\mathrm{E}{-2}$
& $3.95\mathrm{E}{-4}$
& $3.861\mathrm{E}{-2}$
& $4.085\mathrm{E}{-2}$
& $4.44\mathrm{E}{-1}$
& \cellcolor{red!10}$3.02\mathrm{E}{-2}$
\\

ECI
& $1.865\mathrm{E}{-1}$
& $2.067\mathrm{E}{-1}$
& $4.502\mathrm{E}{-1}$
& \cellcolor{red!10}$2.27\mathrm{E}{-7}$
& $4.47\mathrm{E}{-1}$
& $1.829\mathrm{E}{-2}$
& $1.299\mathrm{E}{-1}$
& $1.647\mathrm{E}{-1}$
& \cellcolor{green!15}$8.02\mathrm{E}{-6}$
& $2.28\mathrm{E}{-1}$
\\

PCFM
& $1.851\mathrm{E}{-1}$
& $2.062\mathrm{E}{-1}$
& $4.485\mathrm{E}{-1}$
& \cellcolor{green!15}$1.30\mathrm{E}{-7}$
& $4.45\mathrm{E}{-1}$
& $1.831\mathrm{E}{-2}$
& $1.301\mathrm{E}{-1}$
& $1.648\mathrm{E}{-1}$
& \cellcolor{red!10}$8.06\mathrm{E}{-6}$
& $2.28\mathrm{E}{-1}$
\\

\textbf{MintFlow}
& \cellcolor{green!15}$2.91\mathrm{E}{-5}$
& \cellcolor{green!15}$5.44\mathrm{E}{-3}$
& \cellcolor{green!15}$1.455\mathrm{E}{-2}$
& $3.57\mathrm{E}{-6}$
& \cellcolor{green!15}$1.23\mathrm{E}{-2}$
& \cellcolor{green!15}$1.09\mathrm{E}{-4}$
& \cellcolor{green!15}$1.342\mathrm{E}{-2}$
& \cellcolor{green!15}$1.870\mathrm{E}{-2}$
& $3.18\mathrm{E}{-4}$
& \cellcolor{green!15}$2.58\mathrm{E}{-2}$
\\

\bottomrule
\end{tabular}
}
\end{table*}

\paragraph{Qualitative Comparison.}
Figure~\ref{fig:physical_qualitative} examines dynamical self-consistency at the sample level. For each source-noise index, the corresponding unconstrained FFM trajectory determines the target initial condition and the physical parameters required to define the numerical problem. These quantities are then held fixed across all samplers. For each generated field, we extract its own IC, integrate the corresponding PDE under the fixed sample-indexed physical parameters and BCs, and display the pointwise absolute discrepancy to this numerical rollout. The results show that ECI and PCFM produce broad, spatially coherent errors despite their small explicit constraint residuals. MintFlow remains visibly closer to its numerical rollout, consistent with its lower aggregate simulation error relative to these baselines. Appendix~\ref{app:pde_qualitative} provides additional fixed-ID examples.

\begin{figure}[h]
    \centering
    \includegraphics[width=\linewidth]{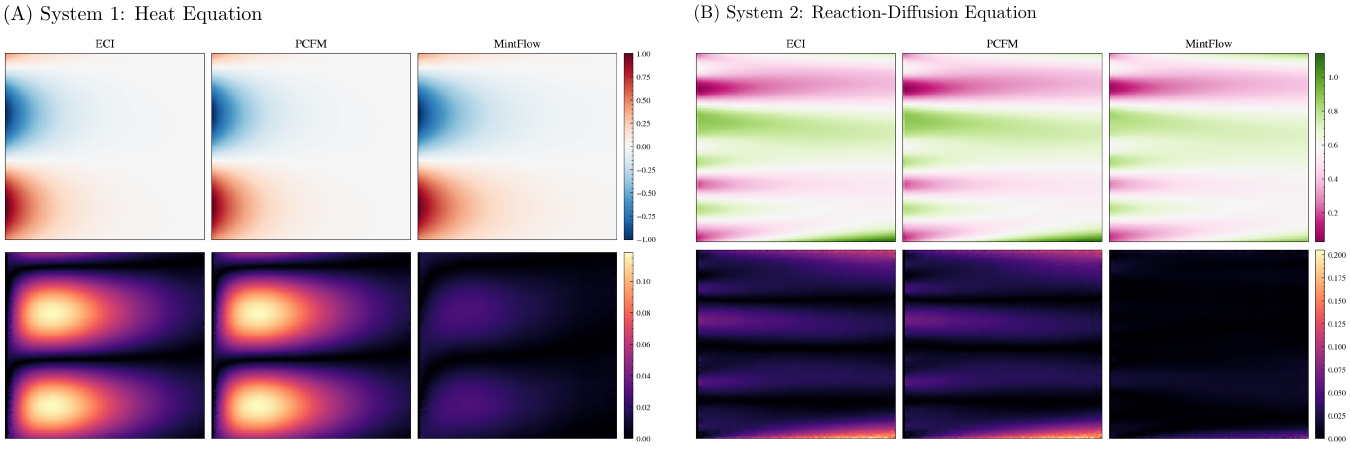}
    \caption{Qualitative physical consistency under physical constraints. The top row shows samples generated by ECI, PCFM, and MintFlow, while the bottom row illustrates the pointwise absolute discrepancy between the generated field and this sample-conditioned numerical rollout.}
    \label{fig:physical_qualitative}
\end{figure}

\paragraph{Model Analysis.} 
\begin{wrapfigure}{r}{0.65\textwidth}
\vspace{-1.5em}
  \begin{center}
    \includegraphics[width=0.32\textwidth]{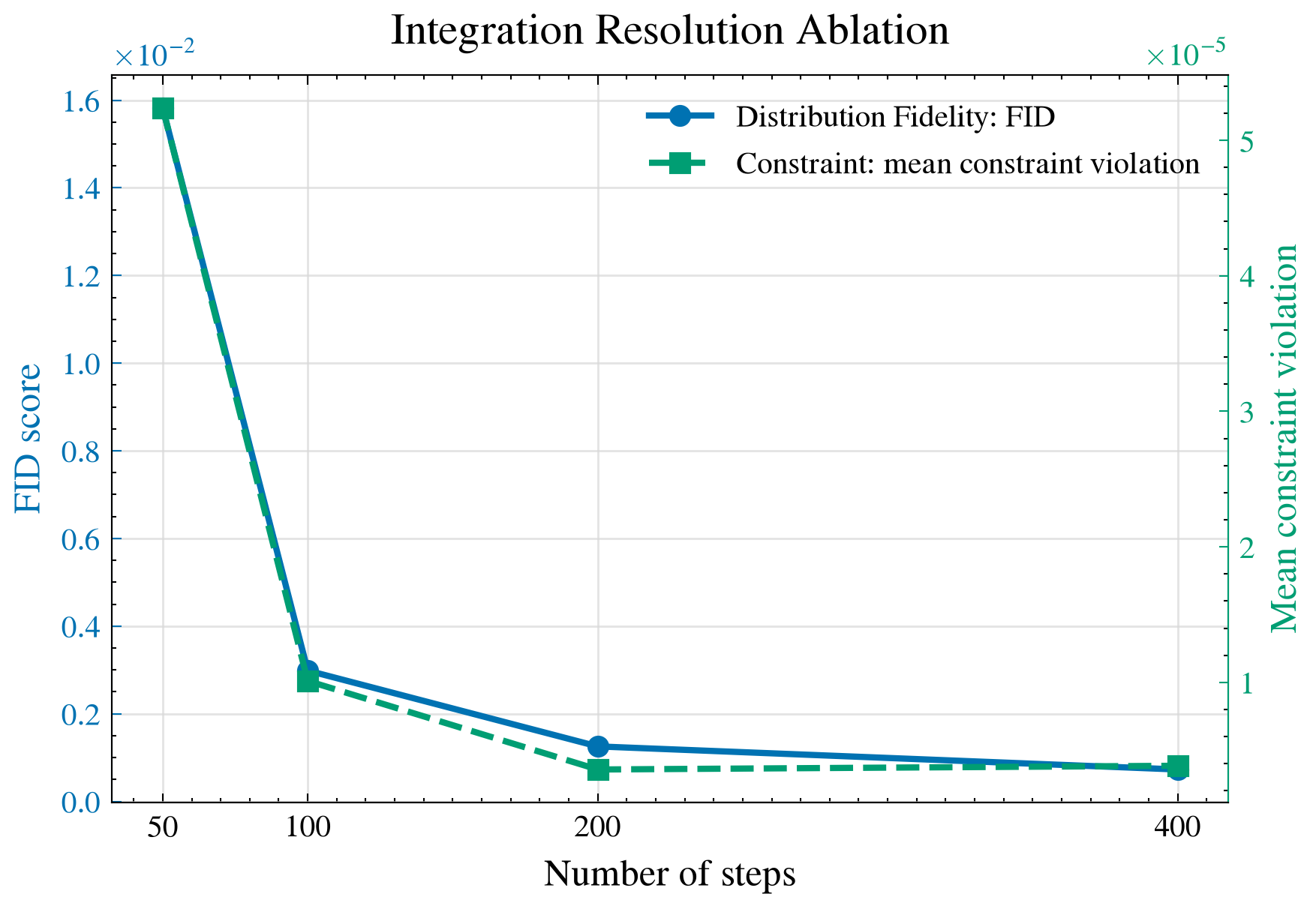}
    \includegraphics[width=0.32\textwidth]{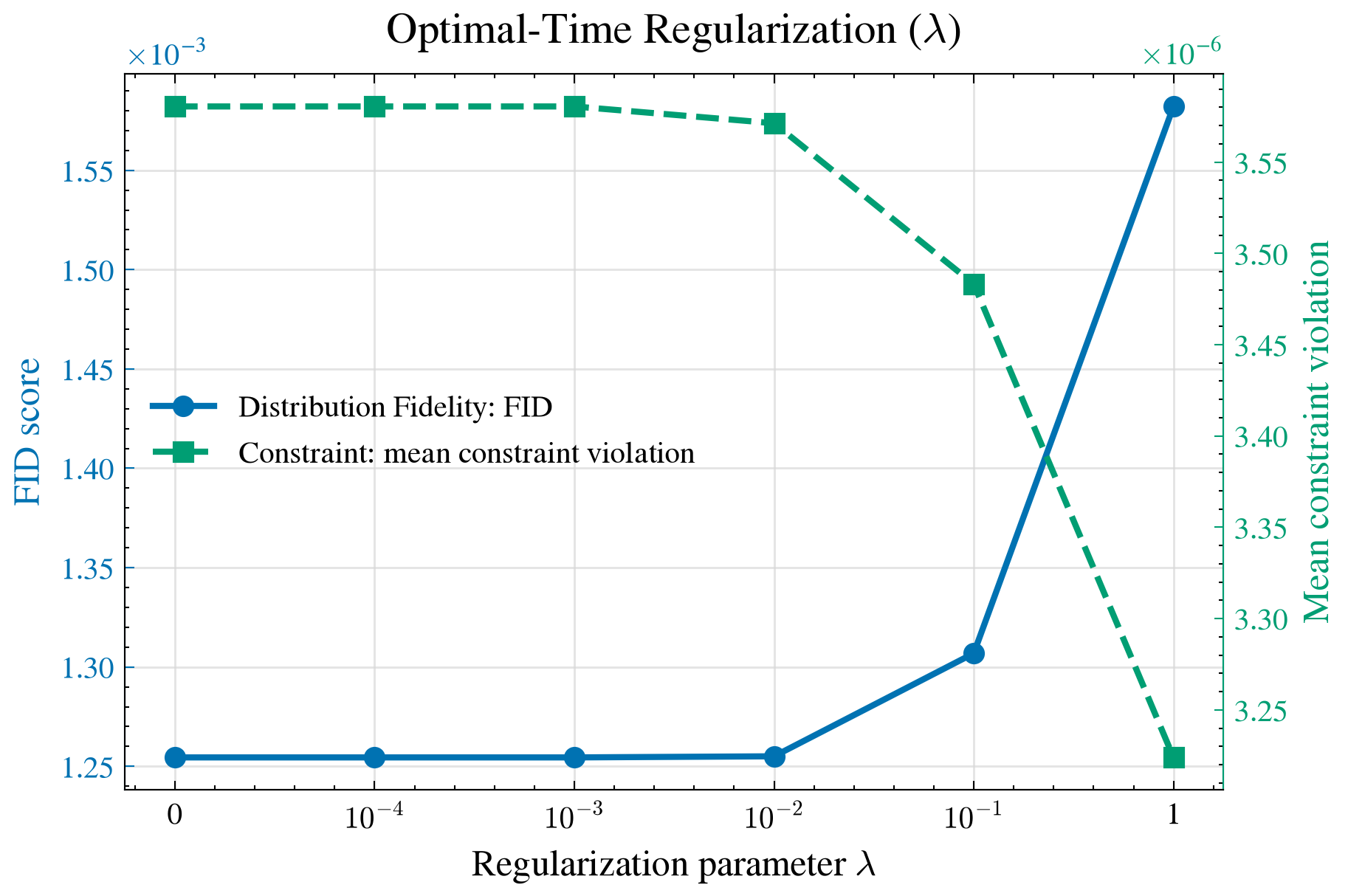}
  \end{center}
  \caption{Sensitivity of MintFlow to sampling steps and regularization parameter $\lambda$.}\label{fig:ablation_heat}
  \vspace{-1.2em}
\end{wrapfigure}
Figure~\ref{fig:ablation_heat} ablates the sampling steps and regularization parameter $\lambda$ in (\ref{eq:optimal_time}) on the Heat dataset. While the benchmark uses 100 Euler sampling steps by default, increasing the sampling resolution improves both distributional fidelity and constraint satisfaction. This is because discretization affects both the forward integration and the backward adjoint computation: finer sampling yields a more accurate trajectory and adjoint, leading to better preservation of the pretrained distribution and more accurate local constraint corrections. Nevertheless, the method remains relatively robust, with noticeable degradation only below 100 sampling steps.
Varying $\lambda$ reveals a trade-off between constraint satisfaction and distribution preservation. Since $\lambda$ controls the preferred intervention time rather than the intervention magnitude directly, larger $\lambda$ favors later interventions. When the intervention is too close to the terminal time, the correction approaches a direct projection of the terminal sample toward the constraint manifold, which can better satisfy the constraint but may deviate further from the pretrained distribution $\nu_T$. The method remains stable for a moderate range of $\lambda$. Further details are provided in Appendix~\ref{app:pde_ablation}.

\section{Conclusion}
This work introduces \textbf{MintFlow}, a training-free framework for constrained generation in flow matching models. By formulating constrained generation as a minimal trajectory intervention, MintFlow preserves the learned generative dynamics while minimally perturbing the flow state. Leveraging an adjoint-based sensitivity formulation, it efficiently computes the minimum-norm intermediate perturbation with theoretical analysis of constraint satisfaction and distributional stability. Extensive experiments across diverse domains, constraints, and backbone models demonstrate that MintFlow effectively satisfies constraints while preserving the pretrained distribution. 
While MintFlow relies on a first-order linearization for high efficiency, it can be seamlessly complemented by iterative refinements or higher-order approximations when dealing with highly nonlinear constraints. Looking forward, extending MintFlow to more challenging nonlinear or non-differentiable constraints, reducing its computational overhead, and broadening the framework to other dynamics-based generative models pose promising directions for future work.
\paragraph{Acknowledgements.}
This work was supported by the U.S. Department of Energy (DOE), Office of Science, Advanced Scientific Computing Research program under award B\&R-KJ0401010 and FWP-CC147 and grant DE-SC0026262.
\paragraph{Reproducibility Statement.}
We provide the necessary details to facilitate reproducibility of our method and experiments. The complete algorithms for single-step correction and iterative refinement are given in Algorithms~\ref{alg:adjoint} and~\ref{alg:iterative_adj}. All theoretical assumptions and complete proofs are provided in Appendix \ref{appen:thoery}. Detailed experimental settings, including datasets, model configurations, hyperparameters, and evaluation protocols, are documented in Appendices \ref{appen:images}-\ref{app:pde_details}. Source code is provided as anonymous supplementary material.

\paragraph{AI Disclosure.} We used generative AI tools solely to identify typographical and grammatical errors and to suggest improvements to the English wording of the manuscript. All suggestions were manually reviewed by the authors, and appropriate changes were incorporated by the authors. We did not use generative AI tools for any of the required-disclosure tasks listed in the ICLR 2027 AI policy. We take responsibility for the final content of this work.

\bibliography{iclr2027_conference}
\bibliographystyle{iclr2027_conference}

\appendix
\section{Theoretical Justification}\label{appen:thoery}

\subsection{Control-Theoretic Interpretation of the Minimum-Norm Correction}\label{appen:control_perspective}

The proposed correction strategy admits a natural interpretation from the perspective of optimal control in (\ref{eq:minimal_correction_problem}) and constrained endpoint optimization.
Introduce a Lagrangian $\eta$:
\[
\frac12 |\delta|^2
+
\eta^\top F_s(\delta),
\]
the first-order optimality conditions are
\[
\delta + DF_s(\delta)^\top \eta = 0,
\quad F_s(\delta) = 0.
\]
Therefore, the exact constrained correction is characterized by the nonlinear Karush–Kuhn–Tucker (KKT) system.

The proposed adjoint correction algorithm can be interpreted as a first-order approximation of the nonlinear KKT system associated with (\ref{eq:minimal_correction_problem}).
Substituting the first-order Taylor expansion of $DF_s(\delta)$ and $F(\delta)$ into the KKT conditions gives
\[
\delta + A_s^\top \eta = 0,\quad 
A_s\delta + H\left(u_T\right) = 0,
\]
the first-order linearization of the nonlinear KKT system around the generated trajectory.
Eliminating the Lagrange multiplier \(\eta\) yields the minimum-norm correction in (\ref{eq:optimal_delta}). Therefore, the proposed correction is the solution of the linearized KKT system associated with the minimum-perturbation endpoint-constrained optimization problem.

\subsection{Validity of the Linearized Correction}
\label{appen:linearized_correction}
We provide the formal statements and proofs corresponding to the informal theoretical guarantees in Theorem~\ref{thm:main_informal_correction}.
In particular, Theorem \ref{theorem:existence_and_constraint_residual} establishes the existence of the minimum-norm linearized correction and quantifies the resulting constraint satisfaction. Theorem \ref{thm:distributional_stability} further shows that the deviation of the corrected terminal distribution from the learned flow-matching distribution is controlled by the magnitude of the intermediate correction and the remaining integration time.

\begin{theorem}[Existence and Accuracy of the Linearized Correction]\label{theorem:existence_and_constraint_residual}

Assume that $f_\theta$ and $H$ are twice continuously differentiable with bounded derivatives
\[
\|D_u f_\theta(u,t)\|_{\mathrm{op}}\le L_1, \qquad \|D_u^2 f_\theta(u,t)\|_{\mathrm{op}}\le L_2,
\]
and
\[
\|\nabla H(y)\|\le M_{H,1}, \qquad \|\nabla^2 H(y)\|_{\mathrm{op}}\le M_{H,2}.
\]
and
$A_s=DF_s(0)$ has full row rank.  Then the linearized constraint
\[
A_s\delta + H(u_T)=0
\]
admits a solution, and its unique minimum-norm solution is (\ref{eq:optimal_delta}).
Moreover, if
\[
\frac{M_F}{2}\|\delta^\ast\|^2 < \|H(u_T)\|,\quad  M_F\coloneqq M_{H,2}e^{2L_1(T-s)} + M_{H,1}\frac{L_2}{L_1}
\left( e^{2L_1(T-s)}-e^{L_1(T-s)} \right)
\]
then
\[
\|H(\Phi(T,s)(u_s+\delta^\ast))\| < \|H(u_T)\|.
\]
\end{theorem}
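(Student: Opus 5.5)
The proof has two parts: the linear algebra for existence and uniqueness, and a second-order Taylor estimate of the nonlinear map $F_s(\delta)=H(\Phi(T,s)(u_s+\delta))$ with the remainder bounded by variational equations.

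\textbf{Existence and minimum norm.} Since $A_s\in\mathbb{R}^{m\times d}$ has full row rank, $A_sA_s^\top$ is invertible. Hence $A_s^\dagger=A_s^\top(A_sA_s^\top)^{-1}$ is a right inverse, and $\delta^\ast=-A_s^\dagger H(u_T)$ solves $A_s\delta+H(u_T)=0$. Any other solution can be written $\delta^\ast+z$ with $z\in\ker A_s$. Because $\delta^\ast\in\operatorname{range}A_s^\top=(\ker A_s)^\perp$, Pythagoras gives $\|\delta^\ast+z\|^2=\|\delta^\ast\|^2+\|z\|^2$. So $\delta^\ast$ is the unique minimum-norm solution; this is also the linearized KKT system from Appendix~\ref{appen:control_perspective}.

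\textbf{Residual bound.} Apply Taylor's theorem with integral remainder to $F_s$ along $\tau\mapsto\tau\delta^\ast$. Using $F_s(0)=H(u_T)$ and $DF_s(0)=A_s$, we get
\[
F_s(\delta^\ast)=H(u_T)+A_s\delta^\ast+\int_0^1(1-\tau)\,D^2F_s(\tau\delta^\ast)[\delta^\ast,\delta^\ast]\,d\tau .
\]
The first two terms cancel, so it suffices to show $\sup_v\|D^2F_s(v)\|_{\mathrm{op}}\le M_F$. Then $\|F_s(\delta^\ast)\|\le \tfrac{M_F}{2}\|\delta^\ast\|^2<\|H(u_T)\|$ by hypothesis. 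Write $\Psi=\Phi(T,s)$. By the chain rule,
\[
D^2F_s[\delta,\delta]=\nabla^2H(\Psi)[D\Psi\,\delta,D\Psi\,\delta]+\nabla H(\Psi)\,D^2\Psi[\delta,\delta].
\]
The first term is bounded by $M_{H,2}\|D\Psi\|^2$ and the second by $M_{H,1}\|D^2\Psi\|$.

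\textbf{Main obstacle: bounding $D\Psi$ and $D^2\Psi$ uniformly in the initial point.} The first variation $J(t)=D\Phi(t,s)$ satisfies $\dot J=D_uf_\theta(u_t,t)J$ with $J(s)=I$. Grönwall gives $\|J(t)\|\le e^{L_1(t-s)}$, so $\|D\Psi\|^2\le e^{2L_1(T-s)}$, which yields the first term of $M_F$. The second variation $K(t)=D^2\Phi(t,s)[\delta,\delta]$ solves
\[
\dot K=D_uf_\theta\,K+D_u^2f_\theta[J\delta,J\delta],\qquad K(s)=0.
\]
Variation of constants then gives
\[
\|K(T)\|\le\int_s^T e^{L_1(T-r)}L_2e^{2L_1(r-s)}\,dr\,\|\delta\|^2=\frac{L_2}{L_1}\bigl(e^{2L_1(T-s)}-e^{L_1(T-s)}\bigr)\|\delta\|^2,
\]
which matches the second term of $M_F$ exactly. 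The care needed is to justify differentiating the flow twice: this follows from the standard $C^2$ dependence on initial data for $C^2$ vector fields with bounded derivatives, which also gives global existence of the flow. One must also check that these bounds hold at every intermediate point $u_s+\tau\delta^\ast$, not only along the reference trajectory. They do, because the derivative bounds on $f_\theta$ and $H$ are global.
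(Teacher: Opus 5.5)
Your proposal is correct and takes the same route as the paper. It uses the pseudoinverse and orthogonal-decomposition argument for the unique minimum-norm solution, and it bounds the second-order Taylor remainder of $F_s$ through the chain rule and Gr\"onwall estimates on the first and second variational equations, recovering exactly $M_F$. Two small refinements over the paper's write-up: you use the integral-form remainder and variation of constants rather than a differential inequality for $\|D_u^2\Phi\|_{\mathrm{op}}$, and you explicitly derive the final strict inequality, which the paper leaves implicit.
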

\begin{proof}
For fixed $u$, define $J(t):=D_u\Phi(t,s)(u)$. The variational equation is
\[
\frac{d}{dt}J(t) = D_u f_\theta(\Phi(t,s)(u),t)J(t), \qquad J(s)=I.
\]
Hence, by Grönwall's inequality,
\[
\|D_u\Phi(T,s)(u)\|_{\mathrm{op}} \le e^{L_1(T-s)}.
\]
Differentiating the variational equation once more with respect to $u$ gives
\[
\frac{d}{dt}D_u^2\Phi(t,s)(u) = D_u f_\theta(\Phi(t,s)(u),t)D_u^2\Phi(t,s)(u)+ D_u^2f_\theta(\Phi(t,s)(u),t) [D_u\Phi(t,s)(u),D_u\Phi(t,s)(u)],
\]
with $D_u^2\Phi(s,s)(u)=0$.
It follows that
\[
\frac{d}{dt} \|D_u^2\Phi(t,s)(u)\|_{\mathrm{op}} \le L_1\|D_u^2\Phi(t,s)(u)\|_{\mathrm{op}} + L_2e^{2L_1(t-s)}.
\]
Another application of Grönwall's inequality yields
\begin{align*}
\|D_u^2\Phi(T,s)(u)\|_{\mathrm{op}}
&\le e^{L_1(T-s)} \int_s^T L_2e^{L_1(t-s)}\,dt\\
&= \frac{L_2}{L_1} e^{L_1(T-s)} \left(e^{L_1(T-s)}-1\right)\\
&= \frac{L_2}{L_1} \left(e^{2L_1(T-s)}-e^{L_1(T-s)}\right).
\end{align*}

Let $\phi(\delta):=\Phi(T,s)(u_s+\delta)$
The second-order chain rule gives, for $v,w\in\mathbb{R}^d$,
\[
D^2F_s(\delta)[v,w] = D^2H(\phi(\delta)) [D\phi(\delta)v,D\phi(\delta)w] + DH(\phi(\delta)) [D^2\phi(\delta)[v,w]].
\]
Thus,
\begin{align*}
\|D^2F_s(\delta)\|_{\mathrm{op}}
&\le \|\nabla^2H(\phi(\delta))\|_{\mathrm{op}} \|D\phi(\delta)\|_{\mathrm{op}}^2 + \|\nabla H(\phi(\delta))\| \|D^2\phi(\delta)\|_{\mathrm{op}}\\
&\le M_{H,2}e^{2L_1(T-s)} + M_{H,1}\frac{L_2}{L_1} \left( e^{2L_1(T-s)}-e^{L_1(T-s)} \right) =: M_F.
\end{align*}

Since
\[
u_T=\Phi(T,s)(u_s),
\]
we have
\[
F_s(0) = H\!\left(\Phi(T,s)(u_s)\right) = H(u_T).
\]
The first-order Taylor expansion of $F_s$ around $\delta=0$ gives
\begin{equation}\label{eq:pf_taylor_F}
F_s(\delta) = F_s(0) + DF_s(0)\delta + R(\delta^\ast),
\end{equation}
where the above estimation gives
\[
\|R(\delta^\ast)\| \leq \frac{M_F}{2}\|\delta^\ast\|^2.
\]
Therefore
\[
F_s(\delta) = H(u_T) + A_s\delta + O(\|\delta\|^2).
\]

Consider the linearized constraint
\[
A_s\delta+H(u_T)=0.
\]
Since $A_s\in\mathbb{R}^{m\times d}$ has full row rank,
\[
-H(u_T)\in\operatorname{Range}(A_s).
\]
Therefore, the linearized constraint admits a solution.

The minimum-norm solution of
\[
A_s\delta=-H(u_T)
\]
is given by the Moore--Penrose pseudoinverse:
\[
\delta_s^\ast = -A_s^\dagger H(u_T).
\]
Indeed, the general solution can be written as
\[
\delta = -A_s^\dagger H(u_T) + \left(I-A_s^\dagger A_s\right)z, \qquad z\in\mathbb{R}^d.
\]
The first term belongs to
\[
\operatorname{Range}(A_s^\top) = \operatorname{Null}(A_s)^\perp,
\]
whereas
\[
\left(I-A_s^\dagger A_s\right)z \in \operatorname{Null}(A_s).
\]
Thus, the two terms are orthogonal, and
\[
\|\delta\|^2 = \left\|A_s^\dagger H(u_T)\right\|^2 + \left\| \left(I-A_s^\dagger A_s\right)z \right\|^2.
\]
Therefore, the norm is uniquely minimized when
\[
\left(I-A_s^\dagger A_s\right)z=0,
\]
which yields
\[
\delta_s^\ast = -A_s^\dagger H(u_T).
\]

Using
\[
F_s(0)=H(u_T), \qquad DF_s(0)=A_s,
\]
the linearization (\ref{eq:pf_taylor_F}) gives
\[
F_s(\delta_s^\ast) = H(u_T) + A_s\delta_s^\ast + R_s(\delta_s^\ast).
\]
By construction of $\delta_s^\ast$,
\[
A_s\delta_s^\ast=-H(u_T).
\]
Hence the zeroth- and first-order terms cancel, giving
\[
F_s(\delta_s^\ast) = R_s(\delta_s^\ast).
\]
Consequently,
\[
\|F_s(\delta^\ast)\| \leq \frac{M_F}{2}\|\delta^\ast\|^2.
\]
and therefore
\[
F_s(\delta_s^\ast) = O\!\left(\left\|\delta_s^\ast\right\|^2\right).
\]
Thus, the minimum-norm solution of the linearized constraint satisfies the original nonlinear constraint up to a second-order residual.
\end{proof}

The assumptions on $f_\theta$ and $H$ impose standard smoothness conditions needed for our perturbation analysis. Bounded first- and second-order derivatives are reasonable for smooth neural networks and differentiable constraint operators, at least on the bounded region traversed by the generated trajectories. The full-row-rank condition on $A_s=DF_s(0)$ is a standard local non-degeneracy assumption, ensuring that the constraint directions are locally independent and that the constrained solution is well-defined. However, this condition may not hold for general constraints, particularly for highly nonlinear constraints where the Jacobian can become rank-deficient at certain points. Thus, the full-row-rank assumption should be viewed as a local regularity condition that restricts the analysis to locally regular constraint configurations.

The following theorem states that a small intermediate correction induces a controlled deviation from the terminal distribution $\nu_T$ generated by the pretrained flow.
\begin{theorem}[Distributional Stability under Intermediate Correction]\label{thm:distributional_stability}
Let $u_s\sim\nu_s$ and define the uncorrected terminal state by
\[
u_T=\Phi(T,s)(u_s),
\]
so that $u_T\sim\nu_T$. Let
\[
u_s^\ast = u_s+\delta_s^\ast(u_s)
\]
be a measurable intermediate correction, and define the corrected terminal
state by
\[
u_T^\ast = \Phi(T,s)(u_s^\ast) = \Phi(T,s)\bigl(u_s+\delta_s^\ast(u_s)\bigr).
\]
Let $\nu_T^\ast$ denote the distribution of $u_T^\ast$.

Assume that the flow field $f_\theta$ is globally $L_f$-Lipschitz with respect to its state variable, uniformly in $t\in[s,T]$, i.e.,
\[
\|f_\theta(u,t)-f_\theta(v,t)\| \leq L_f\|u-v\|, \qquad \forall u,v,\quad t\in[s,T].
\]
Then, for every $u_s$,
\[
\|u_T^\ast-u_T\| \leq e^{L_f(T-s)} \|\delta_s^\ast(u_s)\|.
\]
Consequently, the corrected terminal distribution satisfies
\[
W_2(\nu_T^\ast,\nu_T) \leq e^{L_f(T-s)} \|\delta_s^\ast\|_{L^2(\nu_s)}.
\]
\end{theorem}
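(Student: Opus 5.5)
The argument has two parts. First, a pathwise Grönwall estimate comparing the corrected and uncorrected trajectories. Second, a coupling argument that turns this estimate into a bound on the $2$-Wasserstein distance.

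\textbf{Pathwise bound.} Fix $u_s$ and set $v(t)=\Phi(t,s)(u_s^\ast)$ and $w(t)=\Phi(t,s)(u_s)$ for $t\in[s,T]$. Both are solutions of (\ref{eq:fm_ode}). Global Lipschitz continuity guarantees they exist and are unique on $[s,T]$, so $v(T)=u_T^\ast$ and $w(T)=u_T$. The difference $e(t)=v(t)-w(t)$ satisfies
\[
e(t)=\delta_s^\ast(u_s)+\int_s^t\bigl(f_\theta(v(r),r)-f_\theta(w(r),r)\bigr)\,dr .
\]
Taking norms and applying the Lipschitz bound gives
\[
\|e(t)\|\le\|\delta_s^\ast(u_s)\|+L_f\int_s^t\|e(r)\|\,dr .
\]
The integral form of Grönwall's inequality then yields $\|e(t)\|\le e^{L_f(t-s)}\|\delta_s^\ast(u_s)\|$. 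Setting $t=T$ gives the first claim. I use the integral form deliberately, rather than differentiating $\|e(t)\|$, because $\|e(t)\|$ may fail to be differentiable where it vanishes.

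\textbf{Coupling.} Consider the random pair $(u_T^\ast,u_T)=\bigl(\Phi(T,s)(u_s+\delta_s^\ast(u_s)),\,\Phi(T,s)(u_s)\bigr)$ with $u_s\sim\nu_s$. Both components are measurable functions of $u_s$: the flow map is continuous, and $\delta_s^\ast$ is measurable by assumption. Their marginals are $\nu_T^\ast$ and $\nu_T$, since $\nu_T=\Phi(T,s)_\#\nu_s$ by the semigroup property $\Phi(T,0)=\Phi(T,s)\circ\Phi(s,0)$. The law of this pair is therefore an admissible coupling in the definition of $W_2$, and
\[
W_2(\nu_T^\ast,\nu_T)^2\le\mathbb{E}\|u_T^\ast-u_T\|^2\le e^{2L_f(T-s)}\,\mathbb{E}_{\nu_s}\|\delta_s^\ast(u_s)\|^2 .
\]
Taking square roots gives the stated bound. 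If $\|\delta_s^\ast\|_{L^2(\nu_s)}=\infty$, the inequality holds trivially.

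\textbf{Main obstacle.} There is no real analytic difficulty here. The only points needing care are measure-theoretic: measurability of the coupling, and checking that $\nu_T=\Phi(T,s)_\#\nu_s$. The latter follows directly from the flow property.
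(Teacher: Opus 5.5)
Your proposal is correct and follows the paper's proof essentially step for step. Both arguments apply Grönwall's inequality to the difference $e(t)$ between the corrected and uncorrected trajectories, then bound $W_2$ using the synchronous coupling $(u_T^\ast,u_T)$ induced by the common $u_s\sim\nu_s$. The differences are only technical refinements: you use the integral form of Grönwall instead of the paper's a.e.\ differential inequality for $\|e(t)\|$, and you spell out the measurability of the coupling and the identity $\nu_T=\Phi(T,s)_\#\nu_s$, which the paper states without detail.
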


\begin{proof}
Fix $u_s\in\mathbb{R}^d$ and consider the uncorrected and corrected
trajectories
\[
u(t)=\Phi(t,s)(u_s), \qquad u^\ast(t) = \Phi(t,s)\bigl(u_s+\delta_s^\ast(u_s)\bigr), \qquad t\in[s,T].
\]
Define the difference by
\[
e(t)=u^\ast(t)-u(t).
\]
Since both trajectories evolve according to the same flow field,
\[
\frac{d}{dt}e(t) = f_\theta(u^\ast(t),t)-f_\theta(u(t),t).
\]
By the $L_f$-Lipschitz assumption,
\[
\left\|\frac{d}{dt}e(t)\right\| \leq L_f\|e(t)\|.
\]
Moreover, at the correction time $s$,
\[
e(s) = u_s^\ast-u_s = \delta_s^\ast(u_s).
\]
Therefore,
\[
\frac{d}{dt}\|e(t)\| \leq L_f\|e(t)\|
\]
for almost every $t\in[s,T]$. Applying Gr\"onwall's inequality gives
\[
\|e(T)\| \leq  e^{L_f(T-s)} \|e(s)\|. 
\]
Since
\[
e(T)=u_T^\ast-u_T
\]
and
\[
e(s)=\delta_s^\ast(u_s),
\]
we obtain the pointwise bound
\[
\|u_T^\ast-u_T\| \leq e^{L_f(T-s)} \|\delta_s^\ast(u_s)\|.
\]

We next establish the distributional bound. Since $u_s\sim\nu_s$ and $u_T=\Phi(T,s)(u_s)$, the pushforward property of the flow gives
\[
u_T\sim \Phi(T,s)_\#\nu_s = \nu_T.
\]
Similarly, by definition,
\[
u_T^\ast = \Phi(T,s) \bigl(u_s+\delta_s^\ast(u_s)\bigr) \sim \nu_T^\ast.
\]
Hence the joint random variable $(u_T,u_T^\ast)$ defines a coupling of $\nu_T$ and $\nu_T^\ast$. By the definition of the Wasserstein-2 distance,
\[
W_2^2(\nu_T^\ast,\nu_T) \leq  \mathbb{E}\left[ \|u_T^\ast-u_T\|^2 \right].
\]
Applying the pointwise stability bound gives
\[
\begin{aligned}
W_2^2(\nu_T^\ast,\nu_T) &\leq \mathbb{E}\left[e^{2L_f(T-s)}\|\delta_s^\ast(u_s)\|^2 \right]\\
&=e^{2L_f(T-s)}\mathbb{E}_{u_s\sim\nu_s}\left[\|\delta_s^\ast(u_s)\|^2\right].
\end{aligned}
\]
Taking the square root yields
\[
W_2(\nu_T^\ast,\nu_T) \leq e^{L_f(T-s)} \left(\mathbb{E}_{u_s\sim\nu_s} \left[\|\delta_s^\ast(u_s)\|^2\right] \right)^{1/2},
\]
which is equivalently
\[
W_2(\nu_T^\ast,\nu_T) \leq e^{L_f(T-s)} \|\delta_s^\ast\|_{L^2(\nu_s)}.
\]
\end{proof}

The global Lipschitz assumption on $f_\theta$ is a standard regularity condition for neural ODEs, ensuring existence, uniqueness, and stability of the flow trajectories. It is generally reasonable for smooth neural networks on bounded regions containing the generated trajectories. However, global Lipschitzness over the entire state space can be restrictive for practical neural networks. Thus, the assumption can be interpreted as requiring Lipschitz continuity only on the relevant region of the state space explored by the flow.

\subsection{Convergence of Iterative Refinement}
\label{appen:convergence_newton}
\begin{theorem}[Convergence of Iterative Refinement]
Let $f_\theta:\mathbb{R}^d\times[s,T]\to\mathbb{R}^d$ be $C^2$ in $u$, let $H:\mathbb{R}^d\to\mathbb{R}$ be $C^2$, and set $\Delta=T-s$. Suppose that, on the relevant reachable set,
\[
\|D_u f_\theta(u,t)\|_{\mathrm{op}}\le L_1,
\qquad \|D_u^2 f_\theta(u,t)\|_{\mathrm{op}}\le L_2,
\]
and
\[
0<m_H\le \|\nabla H(y)\|\le M_{H,1},
\qquad \|\nabla^2 H(y)\|_{\mathrm{op}}\le M_{H,2}.
\]

Then, the correction provided by (\ref{eq:optimal_delta})
\[
u^{(n+1)} = u^{(n)} + \delta^{(n)}
\]
satisfies
\[
|F(u^{(n+1)})| \le K |F(u^{(n)})|^2,
\]
where
\[
K= \frac{1}{2m_H^2} \left[M_{H,2}e^{4L_1\Delta} + M_{H,1}\frac{L_2}{L_1} \left(e^{4L_1\Delta}-e^{3L_1\Delta}\right)\right].
\]
Consequently, if $K|F(u^{(0)})|<1,$
then 
\[
|F(u^{(n)})| \le \frac{1}{K} \left(K|F(u^{(0)})|\right)^{2^n}.
\]
In particular, a sufficient number of iterations for $|F(u^{(n)})|\le\epsilon$ is
\[
n\ge \log_2 \left( \frac{\log(1/(K\epsilon))} {\log(1/(K|F(u^{(0)})|))} \right).
\]
\end{theorem}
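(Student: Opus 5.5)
The plan is to reduce the convergence statement to the one-step residual estimate of Theorem~\ref{theorem:existence_and_constraint_residual}, combined with a lower bound on $\|A_s\|$ that turns the quadratic-in-$\delta$ bound into a quadratic-in-$F$ bound.

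\textbf{One-step residual.} At iteration $n$ the state $u^{(n)}$ sits at time $s$, and $F(u)=H(\Phi(T,s)(u))$. Since $H$ is scalar, $A=\nabla H(y)\,D\Phi(T,s)(u^{(n)})$ with $y=\Phi(T,s)(u^{(n)})$. The correction is
\[
\delta^{(n)}=-A^\top F(u^{(n)})/\|A\|^2 ,
\]
so $\|\delta^{(n)}\|=|F(u^{(n)})|/\|A\|$. The Taylor remainder argument in the proof of Theorem~\ref{theorem:existence_and_constraint_residual}, with $\Delta=T-s$, gives
\[
|F(u^{(n+1)})|\le \tfrac{M_F}{2}\|\delta^{(n)}\|^2 ,\qquad M_F=M_{H,2}e^{2L_1\Delta}+M_{H,1}\tfrac{L_2}{L_1}\bigl(e^{2L_1\Delta}-e^{L_1\Delta}\bigr).
\]
Combining the two yields $|F(u^{(n+1)})|\le \frac{M_F}{2\|A\|^2}|F(u^{(n)})|^2$.

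\textbf{Lower bound on $\|A\|$ (the main obstacle).} We need $\|A\|=\|D\Phi(T,s)^\top\nabla H(y)^\top\|\ge m_H e^{-L_1\Delta}$. The route is to note that the backward flow $\Phi(s,T)$ satisfies the same variational equation with $\|D_uf_\theta\|\le L_1$. Hence $D\Phi(T,s)^{-1}=D\Phi(s,T)$ has operator norm at most $e^{L_1\Delta}$ by Gr\"onwall, applied exactly as in Theorem~\ref{theorem:existence_and_constraint_residual}. So the smallest singular value of $D\Phi(T,s)$ is at least $e^{-L_1\Delta}$. Equivalently, from the adjoint equation~(\ref{eq:adjoint_eq}), $\frac{d}{dt}\|\lambda\|\ge -L_1\|\lambda\|$ backward in time. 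Together with $\|\nabla H\|\ge m_H$ this gives $\|A\|^2\ge m_H^2e^{-2L_1\Delta}$. Substituting,
\[
\frac{M_F}{2\|A\|^2}\le \frac{e^{2L_1\Delta}}{2m_H^2}M_F = K ,
\]
and multiplying out reproduces the exponents $4L_1\Delta$ and $3L_1\Delta$ in the stated $K$. A subtlety is that the Hessian bound must hold along the segment between $u^{(n)}$ and $u^{(n+1)}$; the hypothesis "on the relevant reachable set" is taken to cover this.

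\textbf{Iterating.} Set $e_n=K|F(u^{(n)})|$. Then $e_{n+1}\le e_n^2$, so by induction $e_n\le e_0^{2^n}$, which is the stated bound on $|F(u^{(n)})|$ once $e_0<1$. For the iteration count, $e_0^{2^n}\le K\epsilon$ is equivalent to $2^n\log(1/e_0)\ge\log(1/(K\epsilon))$. Taking $\log_2$ gives the stated $n$, assuming $K\epsilon<1$ so that the logarithm is meaningful.
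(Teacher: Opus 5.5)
Your proposal is correct and follows the paper's proof essentially step by step. The common steps are the Taylor remainder bounded by $M_F$ from the earlier theorem, the inverse-flow Gr\"onwall estimate giving $\sigma_{\min}(D\Phi(T,s))\ge e^{-L_1\Delta}$ and hence $\|A\|\ge m_H e^{-L_1\Delta}$, and the induction $e_{n+1}\le e_n^2$ that yields both the doubly exponential rate and the iteration count. Your remarks that the Hessian bound must hold along the segment from $u^{(n)}$ to $u^{(n+1)}$ and that $K\epsilon<1$ is needed are sensible clarifications the paper leaves implicit.
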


\begin{proof}
Let $\phi(u):=\Phi(T,s)(u)$. For fixed $u$, define $J(t):=D_u\Phi(t,s)(u)$. The variational equation is
\[
\frac{d}{dt}J(t) = D_u f_\theta(\Phi(t,s)(u),t)J(t), \qquad J(s)=I.
\]
Hence, by Grönwall's inequality,
\[
\|D_u\Phi(T,s)(u)\|_{\mathrm{op}} \le e^{L_1\Delta}.
\]

Since the flow is a diffeomorphism,
\[
D_u\Phi(T,s)(u)^{-1} = D_y\Phi(s,T)(y), \qquad y=\Phi(T,s)(u),
\]
and the same estimate applied to the inverse flow gives
\[
\|D_u\Phi(T,s)(u)^{-1}\|_{\mathrm{op}} \le e^{L_1\Delta}.
\]
Therefore,
\[
\sigma_{\min}(D_u\Phi(T,s)(u)) = \frac{1}{\|D_u\Phi(T,s)(u)^{-1}\|_{\mathrm{op}}} \ge e^{-L_1\Delta}.
\]

Next, differentiating the variational equation once more with respect to $u$, we obtain
\[
\frac{d}{dt}D_u^2\Phi(t,s)(u) = D_u f_\theta(\Phi(t,s)(u),t)D_u^2\Phi(t,s)(u)+ D_u^2f_\theta(\Phi(t,s)(u),t) [D_u\Phi(t,s)(u),D_u\Phi(t,s)(u)],
\]
with $D_u^2\Phi(s,s)(u)=0$.
Using
\[
\|D_u\Phi(t,s)(u)\|_{\mathrm{op}} \le e^{L_1(t-s)},
\]
we obtain
\[
\frac{d}{dt} \|D_u^2\Phi(t,s)(u)\|_{\mathrm{op}} \le L_1\|D_u^2\Phi(t,s)(u)\|_{\mathrm{op}} + L_2e^{2L_1(t-s)}.
\]
Another application of Grönwall's inequality yields
\begin{align*}
\|D_u^2\Phi(T,s)(u)\|_{\mathrm{op}}
&\le e^{L_1\Delta} \int_s^T L_2e^{L_1(t-s)}\,dt\\
&= \frac{L_2}{L_1} e^{L_1\Delta} \left(e^{L_1\Delta}-1\right)\\
&= \frac{L_2}{L_1} \left(e^{2L_1\Delta}-e^{L_1\Delta}\right).
\end{align*}

By the chain rule gives $\nabla F(u) = D\phi(u)^T\nabla H(\phi(u))$, it follows that 
\[
\|\nabla F(u)\|\ge \sigma_{\min}(D\phi(u)) \|\nabla H(\phi(u))\| \ge m_He^{-L_1\Delta}.
\]

The second-order chain rule gives, for $v,w\in\mathbb{R}^d$,
\[
D^2F(u)[v,w]= D^2H(\phi(u)) [D\phi(u)v,D\phi(u)w] + DH(\phi(u)) [D^2\phi(u)[v,w]].
\]
Thus,
\begin{align*}
\|D^2F(u)\|_{\mathrm{op}}
&\le \|\nabla^2H(\phi(u))\|_{\mathrm{op}} \|D\phi(u)\|_{\mathrm{op}}^2 + \|\nabla H(\phi(u))\| \|D^2\phi(u)\|_{\mathrm{op}}\\
&\le M_{H,2}e^{2L_1\Delta} + M_{H,1}\frac{L_2}{L_1}
\left( e^{2L_1\Delta}-e^{L_1\Delta} \right) =: M_F.
\end{align*}

Since the optimal perturbation is given by
\[
\delta^{(n)}:=u^{(n+1)}-u^{(n)} = -\frac{F(u^{(n)})}{\|\nabla F(u^{(n)})\|^2}\nabla F(u^{(n)}),
\]
we have
\[
\|\delta^{(n)}\| = \frac{|F(u^{(n)})|}{\|\nabla F(u^{(n)})\|}
\le \frac{e^{L_1\Delta}}{m_H}|F(u^{(n)})|.
\]

By Taylor's theorem,
\[
F(u^{(n)}+\delta^{(n)}) = F(u^{(n)}) + \nabla F(u^{(n)})^T\delta^{(n)} + R^{(n)},
\]
where
\[
|R^{(n)}| \le \frac{M_F}{2}\|\delta^{(n)}\|^2.
\]
Since $\delta^{(n)}$ satisfies
\[
\nabla F(u^{(n)})^T\delta^{(n)} = -F(u^{(n)}),
\]
it follows that
\[
F(u^{(n+1)})=R^{(n)},
\]
and therefore
\[
|F(u^{(n+1)})| \le\frac{M_F}{2}\|\delta^{(n)}\|^2 \le \frac{M_F e^{2L_1\Delta}}{2m_H^2} |F(u^{(n)})|^2.
\]
Substituting the definition of $M_F$ gives
\[
|F(u^{(n+1)})| \le K|F(u^{(n)})|^2,
\]
where
\[
K= \frac{1}{2m_H^2} \left[ M_{H,2}e^{4L_1\Delta} + M_{H,1}\frac{L_2}{L_1} \left( e^{4L_1\Delta}-e^{3L_1\Delta} \right) \right].
\]

Finally, let $r^{(n)}:=|F(u^{(n)})|$.  Then $r^{(n+1)}\le K\left(r^{(n)}\right)^2.$
If $Kr^{(0)}<1$, induction gives
\[
r^{(n)} \le \frac{1}{K}(Kr^{(0)})^{2^n}.
\]
Indeed, if the bound holds at iteration $n$, then
\[
r^{(n+1)} \le K\left[\frac{1}{K}(Kr^{(0)})^{2^n}\right]^2 = \frac{1}{K}(Kr^{(0)})^{2^{n+1}}.
\]
Thus,
\[
|F(u^{(n)})|\le\epsilon
\]
whenever
\[
(Kr^{(0)})^{2^n}\le K\epsilon.
\]
Since $Kr^{(0)}<1$, this is equivalent to
\[
2^n \ge \frac{\log(1/(K\epsilon))} {\log(1/(Kr^{(0)}))},
\]
which proves the stated iteration bound and the asymptotic estimate
\[
N_\epsilon = O\left(\log\log\frac{1}{\epsilon}\right).
\]
\end{proof}

\subsection{Specialization to Rectified Flow}
\label{appen:rectified_flow}
The proposed trajectory intervention admits a particularly simple form when the pretrained flow is a rectified flow (RF) model and sampling is performed using an Euler discretization. Specifically, for a single sampling step from an intermediate time $s$ to the terminal time $T$, the RF sampler takes the form
\begin{equation}
\label{eq:rf_one_step}
u_T = u_s+(T-s)v_\theta(u_s,s),
\end{equation}
where $v_\theta$ denotes the learned RF velocity field. After applying an intermediate perturbation $\delta$, the corresponding terminal state becomes
\begin{equation}
\label{eq:rf_perturbed_step}
u_T^\delta = u_s+\delta + (T-s)v_\theta(u_s+\delta,s).
\end{equation}
Consequently, the terminal constraint can be written as
\begin{equation}
\label{eq:rf_constraint}
F_s(\delta) = H\left(u_s+\delta + (T-s)v_\theta(u_s+\delta,s)\right).
\end{equation}

Unlike the general continuous-time formulation, the sensitivity of the terminal state with respect to the intermediate perturbation can be obtained directly from the local Jacobian of the RF velocity field. In particular,
\begin{equation}
\label{eq:rf_local_jacobian}
\left. \frac{\partial u_T^\delta}{\partial \delta}\right|_{\delta=0}
= I+(T-s)J_v(u_s,s),
\qquad
J_v(u_s,s) \coloneqq \frac{\partial v_\theta}{\partial u}(u_s,s).
\end{equation}
Therefore, the constraint Jacobian in (\ref{eq:minimal_correction_problem}) becomes
\begin{equation}
\label{eq:rf_constraint_jacobian}
A_s = \nabla H(u_T) \left[I+(T-s)J_v(u_s,s)\right].
\end{equation}

For a scalar-valued constraint $H$, define the pulled-back constraint gradient
\begin{equation}
\label{eq:rf_pulled_gradient}
g_s \coloneqq \left[I+(T-s)J_v(u_s,s)\right]^\top \nabla H(u_T)^\top.
\end{equation}
The minimum-norm linearized correction is then
\begin{equation}
\label{eq:rf_correction}
\delta_s^\ast = -\frac{g_s}{\|g_s\|^2}H(u_T), \qquad g_s\neq 0.
\end{equation}
Importantly, computing $g_s$ does not require explicitly forming the full Jacobian $J_v$. It can be obtained by a single vector--Jacobian product through the pretrained velocity network. Thus, for a one-step RF sampler, the continuous adjoint ODE can be replaced by a local discrete sensitivity computation.

More generally, when the RF sampler uses multiple Euler steps,
\begin{equation}
\label{eq:rf_multistep}
u_{k+1} = u_k+\Delta t_k\,v_\theta(u_k,t_k),
\end{equation}
the sensitivity from an intervention at step $k$ to the terminal state is given by the product of the corresponding local Jacobians,
\begin{equation}
\label{eq:rf_multistep_jacobian}
D\Phi_{\mathrm{RF}}(1,t_k) = \prod_{j=k}^{K-1} \left[I+\Delta t_j J_v(u_j,t_j)\right].
\end{equation}
Its transpose can be computed efficiently by a discrete adjoint or standard reverse-mode differentiation through the remaining sampling steps. Hence, the proposed method naturally extends from the continuous-time adjoint formulation to the discrete RF sampler used in practice.

Finally, (\ref{eq:rf_correction}) reveals the relationship between our method and ordinary terminal projection. If the RF velocity field is locally insensitive to the state, such that
\begin{equation}
J_v(u_s,s)\approx 0,
\end{equation}
then
\begin{equation}
g_s\approx\nabla H\left(u_T\right)^\top,
\end{equation}
and the proposed correction reduces to the standard minimum-norm projection onto the linearized constraint manifold at the terminal state. In contrast, when $J_v(u_s,s)$ is non-negligible, the factor
\begin{equation}
\left[I+(T-s)J_v(u_s,s)\right]^\top
\end{equation}
accounts for how an intermediate perturbation is transformed by the RF dynamics before reaching the terminal state. Thus, the proposed correction can be viewed as a \emph{dynamics-aware projection} rather than a purely terminal post-hoc projection.

\newpage
\section{Algorithm}
\label{appen:algorithm}
We summarize the proposed constrained generation algorithm from Section~\ref{sec:method}. The single-correction procedure is presented in Algorithm~\ref{alg:adjoint}, while its iterative extension is presented in Algorithm~\ref{alg:iterative_adj}.

\begin{algorithm}[h]
\caption{Adjoint-Based Optimal Correction}
\label{alg:adjoint}

\begin{algorithmic}[1]

\Require
Initial sample $u_0\sim\mu$, vector field $f_\theta$, constraint $H$, candidate times $0<s_1<\cdots<s_K<T$, trade-off parameter $\lambda$

\Statex

\State \textbf{Forward sampling}
\State Solve
\[
\dot u=f_\theta(u,t),\qquad u(0)=u_0,
\]
from $0$ to $T$, and store $\{u(s_k)\}_{k=1}^K$ and the terminal state $u_T$.

\State Compute the terminal residual
\[
r=H(u_T).
\]

\Statex

\State \textbf{Backward adjoint solve}
\State Initialize
\[
\lambda(T)=\nabla H(u_T)^\top.
\]

\State Solve the adjoint ODE backward only once,
from $T$ to $s_1$ and store $\lambda(s_k)$ whenever a candidate time is reached:
\[
\dot\lambda = -\left(\nabla_u f_\theta(u,t)\right)^\top \lambda.
\]

\Statex

\State \textbf{Candidate evaluation}

\For{$k=1,\ldots,K$}

\State Compute
\[
A_{s_k} = \nabla H(u_T) D\Phi(T,s_k).
\]

\State Compute
\[
\delta_k^\ast = -A_{s_k}^{\dagger}r.
\]

\State Evaluate
\[
J_k = \|\delta_k^\ast\|^2 + \lambda(T-s_k)^2.
\]

\EndFor

\Statex

\State Select
\[
k^\ast=\arg\min_k J_k.
\]

\State Re-integrate the flow from
$s_{k^\ast}$:
\[
u(s_{k^\ast})
\leftarrow
u(s_{k^\ast})
+
\delta_{k^\ast}^\ast,
\]
via 
\[
\dot u=f_\theta(u,t), \qquad t\in[s_{k^\ast},T].
\]

\Return corrected sample $u_T^{\mathrm{corr}}$.

\end{algorithmic}
\end{algorithm}

\begin{algorithm}[h]
\caption{Iterative Adjoint-Based Optimal Correction}
\label{alg:iterative_adj}

\begin{algorithmic}[1]

\Require Initial sample $u_0\sim\mu$, vector field $f_\theta$, constraint $H$, candidate number $K$, iteration number $N$

\State $s_0\gets 0,\quad u(s_0)\gets u_0$

\For{$i=1,\ldots,N$}

    \State Sample trajectory on $[s_{i-1},T]$

    \State Select candidate times
    \[
    s_{i-1}<s_1<\cdots<s_K<T
    \]

    \State Solve the adjoint ODE once from $T$ to $s_1$

    \For{$k=1,\ldots,K$}
        \State Compute $\delta_k^\ast$ by (\ref{eq:optimal_delta})
        \State Evaluate $J(s_k)$
    \EndFor

    \State $s_i\gets\arg\min_{s_k}J(s_k)$

    \State Re-integrate the flow from 
    $
    u(s_i)\leftarrow u(s_i)+\delta_i^\ast
    $

\EndFor

\Return corrected sample $u_T^{\mathrm{corr}}$

\end{algorithmic}
\end{algorithm}
\newpage
\section{Related Work}
\label{appen:related_work}
We consider inference-time constraint enforcement for pretrained flow matching \citep{lipman2022flow} and rectified flow models \citep{liu2022flow}. Existing approaches can be broadly categorized according to how they incorporate constraints during sampling: modifying the velocity field, projecting onto the constraint manifold, or optimizing the source or intermediate states, as illustrated in Figures~\ref{fig:steer_velocity}, \ref{fig:projection}, and \ref{fig:optimization}, respectively.

\paragraph{Velocity modification.}
A line of work modifies the pretrained velocity field to steer the generated trajectory toward a desired target or constraint. General flow matching guidance derives exact or approximate corrections for energy-tilted laws \citep{feng2025guidance}. Image inverse solvers correct the path toward the data manifold with gradient-based guidance~\citep{chung2022improving,chung2023diffusion,song2023pseudoinverse}, adapting the same idea to flow matching via likelihood gradients, endpoint estimates, measurement interpolants, or posterior-score surrogates \citep{pokle2024trainingfree,song2026fmps,yan2025fig,kim2025flowdps}. TOCFlow formulates terminal constraints as optimal control with geometry-aware feedback based on the downstream-flow metric \citep{gao2026tocflow}. Lagrangian Dual Flows augment the ODE with dual and slack dynamics that drive equality and inequality residuals to zero near terminal time \citep{kurtz2026ldf}. These methods directly alter the sampling dynamics and require recomputing constraint-specific velocity corrections for each constraint, whereas our method leaves the pretrained vector field unchanged.

\paragraph{Projection.}
Projection methods alternate flow integration with corrections that project the state onto a feasible or observation-consistent set~\citep{christopher2024constrained}. ECI extrapolates an intermediate state to the endpoint, applies a constraint correction, and interpolates the result back into the trajectory \citep{cheng2025gradient}. PCFM combines terminal shooting with a Gauss-Newton projection and optimal-transport-based reinsertion \citep{utkarsh2026physics}. ProFlow replaces the terminal correction with a proximal problem over physical and observational constraints, while CCFM projects onto time-dependent chance-feasible sets \citep{yu2026proflow,liang2025chance}. Image inverse solvers combine data-consistency corrections with flow-path reprojection \citep{pourya2026flower}, and SNAP-FM accelerates PCFM-style nonlinear projections through sparse optimization \citep{kolli2026snapfm}. While projection can effectively enforce the constraints, the explicit state corrections may induce deviations from the distribution learned by the pretrained model. In contrast, our method avoids explicit projection and instead seeks to reduce constraint violations through small perturbations of the intermediate states.

\paragraph{Optimization.}
Another family formulates constraint enforcement as an optimization problem over the generated trajectory. These methods treat the source noise, intermediate states, or distributed controls as decision variables and optimize them to produce samples that satisfy the constraints. D-Flow optimizes the source noise by differentiating a terminal objective through the complete flow solver \citep{ben2024d}. However, backpropagating through the entire sampling ODE requires computing gradients across all integration steps, making the optimization computationally expensive. FlowChef improves efficiency by adapting this formulation to rectified flows, which eliminates the need to backpropagate through the ODE integration \citep{patel2024flowchef}. Rather than optimizing the source noise, FlowChef sequentially optimizes intermediate states. HardFlow solves receding-horizon trajectory-optimization problems with hard constraints on an estimated terminal state \citep{li2026hardflow}, whereas OC-Flow optimizes distributed controls using adjoint costates and a running deviation penalty \citep{wang2024training}. FlowGrad~\citep{liu2023flowgrad} likewise optimizes distributed controls by backpropagating gradients from a terminal guidance loss. Other related works propagate terminal gradients to intermediate controls and optimize corrupted trajectories \citep{liu2023flowgrad,zhang2024flow}.
These approaches leave the pretrained velocity field unchanged, allowing them to accommodate a broad range of constraints without altering the underlying sampling dynamics. However, solving the resulting optimization problems can be computationally demanding, and optimizing intermediate states may introduce deviations from the distribution learned by the pretrained model. In contrast, our method uses minimal perturbations of intermediate states and replaces iterative optimization with a single adjoint system, reducing deviations from the learned distribution while enabling more efficient constraint enforcement.

\section{Application 1: Image Generation}
\label{appen:images}

\begin{table*}[t]
\centering
\caption{Overview of experiments and ablation studies. We evaluate our method
across diverse tasks, datasets, and flow matching (FM) models, and further
conduct ablation studies. Detailed experimental settings and results are
provided in the corresponding sections.}
\label{tab:experimental_summary}
\resizebox{\textwidth}{!}{
\begin{tabular}{lllll}
\toprule
Task & Details & Data & FM Method & Constraint\\
\midrule

Image generation & App.~\ref{appen:images} & MNIST, CIFAR-10 & Flow Matching & Intensity, fidelity \\

Image inverse problems & App.~\ref{appen:inverse_problems} (Sec.~\ref{sec:exp_image_inverse}) & AFHQ-Cat, FFHQ & Rectified Flow & Observation consistency \\

Image editing & App.~\ref{appen:image_editing} (Sec.~\ref{sec:exp_image_editing}) & PIE-Bench & Latent text-conditioned FM & Text-guided generation with region preservation\\

Physics-informed generation & App.~\ref{app:pde_details} (Sec.~\ref{sec:physical_systems}) & Heat, RD1D PDE & Functional Flow Matching & Boundary condition, conservation laws \\
\midrule
\multicolumn{5}{l}{\textit{Ablation studies:} App.~\ref{app:pde_ablation}} \\
\bottomrule
\end{tabular}}
\end{table*}

This appendix provides additional experiments on image generation that are not included in the main manuscript, including benchmark definitions, model and sampling configurations, evaluation protocols, and additional diagnostics. We evaluate a standard flow matching model on image generation tasks with two types of constraints: \textbf{color intensity constraints} and a \textbf{nonlinear fidelity constraint} defined through a pretrained autoencoder. Together with the experiments in the main manuscript, these results further demonstrate that our approach generalizes across different flow matching models and diverse image-domain constraints, including nonlinear constraints, highlighting the versatility of MintFlow.

As in the main experiments, we separately quantify two aspects of constrained generation quality:
\begin{itemize}
\item \textbf{Constraint satisfaction:} how accurately the generated image satisfies the prescribed constraint.
\item \textbf{Distributional fidelity:} how well the distribution of generated images is preserved relative to the reference image distribution.
\end{itemize}

\subsection{Datasets and Constraints}

We use the standard \textbf{MNIST} and \textbf{CIFAR-10} datasets provided by \texttt{torchvision}. All images are converted to \texttt{float32} tensors and scaled to the range $[-1,1]$.

\paragraph{Color Intensity Constraint.}
For an image $u \in [-1,1]^{C\times H\times W}$, we define the channel-wise average intensity vector as
\begin{equation}
\bar{u}
\coloneqq
\frac{1}{HW}
\sum_{h=1}^{H}\sum_{w=1}^{W} u_{:,h,w}
\in\mathbb{R}^C.
\end{equation}
Given a target intensity vector $\mathbf{c}\in\mathbb{R}^C$, we impose
the constraint
\begin{equation}
H(u;\mathbf{c})
\coloneqq
\bar{u}-\beta
=
\frac{1}{HW}
\sum_{h=1}^{H}\sum_{w=1}^{W} u_{:,h,w}
-\mathbf{c}
=0.
\label{eq:intensity_constraint}
\end{equation}
For MNIST, where $C=1$, we impose a scalar intensity constraint with a prescribed target $c\in\mathbb{R}$:
\begin{equation}
H(u)
\coloneqq
\frac{1}{HW}
\sum_{h=1}^{H}\sum_{w=1}^{W}u_{1,h,w}-c
=0.
\label{eq:mnist_intensity_constraint}
\end{equation}
The target intensity $c$ primarily controls the overall stroke thickness of the generated digits.
For CIFAR-10, this constraint controls the overall color composition of the generated image through the mean intensity of each RGB channel.

For each dataset, we first compute the empirical distribution of the average intensity values over the original images. We then randomly sample multiple target values from the range between the 20th and 80th percentiles of this distribution. We restrict the target values to this central range to avoid overly rare intensity levels for which only a small fraction of the data naturally satisfy the constraint, thereby ensuring that the constraint remains sufficiently represented in the underlying data distribution.

\paragraph{AE Fidelity Constraint}
We further consider a fidelity-related constraint defined using an autoencoder (AE) trained on the ground-truth data. Let $\operatorname{Enc}$ and $\operatorname{Dec}$ denote the encoder and decoder, respectively. We define the reconstruction constraint as
\begin{equation}
H(u) = \operatorname{Dec}(\operatorname{Enc}(u))-u =0.
\label{eq:ae_fidelity_constraint}
\end{equation}
For a well-trained AE, the reconstruction error is expected to be small for samples that are consistent with the learned data distribution. Thus, enforcing this constraint encourages generated samples to preserve characteristics captured by the pretrained data manifold. We use this constraint not only to improve the fidelity of generated images, but also to examine the robustness of MintFlow under highly nonlinear and complex constraints defined by a trained neural network.

For MNIST, we train a convolutional AE with a $128$-dimensional latent representation for 200 epochs using the Adam optimizer with a learning rate of $10^{-3}$. The model is trained with the Smooth L1 reconstruction loss. Input images are normalized to $[-1,1]$ during training.

For CIFAR-10, we train a deeper convolutional AE with a $256$-dimensional latent representation for 200 epochs. The model is optimized using AdamW with a learning rate of $5\times10^{-5}$ and weight decay $10^{-4}$, together with a cosine learning-rate schedule. The model is trained using the L1 reconstruction loss. CIFAR-10 images are normalized to $[-1,1]$ and randomly horizontally flipped during training.

The trained AEs achieve reconstruction losses of $0.0008$ on MNIST and $0.00139$ on CIFAR-10 on the test sets, indicating accurate reconstruction of the data and providing a reliable basis for the proposed AE fidelity constraint. This experiment assumes that the pretrained AE provides a meaningful representation of the underlying data distribution; its purpose is to evaluate the applicability of MintFlow to nonlinear constraints induced by a pretrained neural network, rather than to claim AE reconstruction error as a universally optimal measure of fidelity.

\subsection{Flow Matching Model and Baselines}
For image datasets, all flow matching models were implemented based on the official Flow Matching repository.\footnote{\url{https://github.com/facebookresearch/flow_matching}}
We used a U-Net architecture and Table~\ref{tab:model-config} summarizes the the model configurations.  For all experiments, models were optimized using the AdamW optimizer with a learning rate of $1\times10^{-4}$ and momentum coefficients $\beta_1=0.9$ and $\beta_2=0.95$. The learning rate was kept constant throughout training using a constant learning rate scheduler.  At inference time, we solve the flow ODE using 100 integration steps for all experiments.
\begin{table}[t]
\centering
\caption{Flow Matching model configurations for image datasets.}
\label{tab:model-config}
\begin{tabular}{lcccccc}
\toprule
Dataset & Channels & ResBlocks & Attention & Dropout & Channel Mult.  \\
\midrule
MNIST & 64 & 2 & $\{2\}$ & 0.1 & $(1,2,2)$ \\
CIFAR-10 & 128 & 4 & $\{2\}$ & 0.3 & $(2,2,2)$  \\
\bottomrule
\end{tabular}
\end{table}

We consider the same baselines as in Section \ref{sec:physical_systems}. Specifically, DiffusionPDE guides the pretrained FM velocity using the gradient of the constraint, D-Flow optimizes the latent noise to satisfy the constraint, while ECI and PCFM enforce the constraint through a final projection step.
Although some of these methods were primarily developed for physics-informed constrained generation, their formulations are not inherently restricted to physics data and can, in principle, be applied to general data domains with differentiable constraints. We therefore include them here to examine their applicability to image-domain constraints and to provide a consistent empirical comparison. In particular, the color intensity constraint considered here shares the same global structure as the conservation constraints addressed by these methods, making it a natural setting for cross-domain evaluation. The AE fidelity constraint further examines how these approaches behave under a more complex and highly nonlinear constraint induced by a pretrained neural network. Our goal is not to suggest that image generation was the original design target of these methods, but rather to assess the generality of constrained generation approaches across different data domains and constraint structures. For our method on image data, we set the number of candidate intervention times to $K=10$, uniformly spaced across the entire sampling trajectory. All image experiments were conducted on a single NVIDIA H200 GPU.

\subsection{Evaluation Protocol}
To evaluate the quality of constraint-generated samples, we consider two aspects: constraint satisfaction and preservation of the data distribution.

\paragraph{Constraint Satisfaction.}
We measure the constraint satisfaction using the average constraint value over generated samples. For each method, including the ground-truth conditional samples, we generate 2000 samples and report the average constraint value.

\paragraph{Conditional Distribution Preservation.}
To evaluate whether the generated samples preserve the distribution learned by the flow matching model under the imposed constraint, we measure the discrepancy between constraint-generated samples and reference samples. The goal of constraint generation is to obtain samples that satisfy the imposed constraint while remaining within the distribution learned by the pretrained flow matching model. Therefore, the reference distribution used for evaluation is not the ground-truth data distribution, but a conditional distribution constructed from samples generated by the pretrained flow matching model.

Specifically, we first generate samples from the pretrained flow matching model and collect 2,000 samples that satisfy the given constraint. These constraint-satisfying samples are used as reference samples for evaluation. We then measure the discrepancy between the constraint-generated samples and the reference samples to quantify how well the generated samples preserve the flow-matching-induced distribution.

For image data, we evaluate the conditional distribution matching using FID, KID, MMD, and Wasserstein-2 ($W_2$) distance.
For MMD, we use a Gaussian RBF kernel:
\[
k(x,y)=\exp\left(-\frac{|x-y|^2}{2\sigma^2}\right),
\]
where $\sigma^2$ is determined using the median heuristic computed from pairwise distances between generated and reference samples.
Since directly estimating the empirical $W_2$ distance becomes computationally prohibitive in high-dimensional feature spaces, we approximate each feature distribution by a Gaussian distribution using its empirical mean and covariance, and compute the exact $W_2$ distance between the resulting Gaussian approximations. This provides a computationally tractable measure of distributional discrepancy while capturing differences in both the first- and second-order statistics.

For MNIST, feature-space metrics are computed using the penultimate-layer representations of a pretrained MNIST classifier. The classifier is a residual convolutional network trained on the MNIST training set for 200 epochs using the Adam optimizer with a learning rate of $10^{-3}$. The extracted features correspond to the 64-dimensional representation before the final classification layer. For CIFAR-10, feature representations are extracted using a pretrained Inception-v3 network.

\paragraph{Efficiency.}
Runtime is measured as synchronized wall-clock time per generated sample, and memory is reported as peak allocated GPU memory.

\subsection{Results}

\subsubsection{MNIST}
\begin{table}[t]
\centering
\caption{Quantitative results on the average-intensity constrained MNIST dataset. Light green (\colorbox{green!15}{green}) indicates the top performer, while light red (\colorbox{red!10}{red}) represents the second-best performer across all methods for each metric.}
\label{tab:mnist_feature_metrics}
\resizebox{0.95\textwidth}{!}{
\begin{tabular}{lccccccc}
\toprule
& \multicolumn{4}{c}{Distribution Metrics }
& \multicolumn{1}{c}{Constraint}
& \multicolumn{2}{c}{Efficiency} \\
\cmidrule(lr){2-5} \cmidrule(lr){6-6} \cmidrule(lr){7-8}
Method 
& FID $\downarrow$
& KID $\downarrow$
& MMD $\downarrow$
& $W_2$ $\downarrow$
& $H$ $\downarrow$
& Time (s) $\downarrow$
& Memory (GB) $\downarrow$\\
\midrule
Unconstrained 
&  0.879
&  0.02215
&  0.03158
&  0.938
& 2.79E-1
& 5.07
& 0.10
\\
DiffusionPDE & 0.899 & 0.06860 & \cellcolor{red!10}0.04427 & 1.064 & 5.36E-2 & \cellcolor{green!15}4.29 & 5.10\\
D-Flow & \cellcolor{red!10}0.813
& \cellcolor{red!10}0.05342
& 0.04726
& \cellcolor{red!10}0.902& 5.49E-2 & 703.40 & 125.27\\
ECI
& 1.851& 0.09982 &0.07863 & 1.361 
& \cellcolor{green!15}3.55E-15
& 8.89
& \cellcolor{green!15}0.35
\\

PCFM
& 1.736
& 0.12379
& 0.12064
& 1.317
& \cellcolor{red!10}1.43E-12
& 18.37
& 10.12
\\

Ours
& \cellcolor{green!15}0.262
& \cellcolor{green!15}0.01812
& \cellcolor{green!15}0.01289
& \cellcolor{green!15}0.602
& 4.55E-5
& \cellcolor{red!10}6.66
& \cellcolor{red!10}0.74
\\

\bottomrule
\end{tabular}}
\end{table}
\paragraph{Intensity Constraint.}
We evaluate MintFlow on MNIST under an average-intensity constraint with a target value of $-0.6$, generating 2,000 samples for each method. All methods generate images using the same pretrained flow-matching (FM) model. To construct a reference distribution for evaluating distributional metrics, we generate samples from the pretrained FM model and retain those that satisfy the target constraint, obtaining 2,000 constraint-satisfying samples. We use these samples as the ground-truth samples from the conditional distribution induced by the pretrained flow model. Table~\ref{tab:mnist_feature_metrics} summarizes the quantitative results, while Figure~\ref{fig:mnist_intensity_appendix} presents qualitative comparisons across methods.

Among existing constrained generation methods, ECI and PCFM first generate samples and then apply a post-hoc projection to enforce the constraint. Consequently, these methods achieve nearly exact constraint satisfaction, as indicated by their extremely small constraint errors. However, their distribution-matching performance deteriorates substantially, suggesting that strict post-hoc constraint enforcement can cause the generated samples to deviate from the distribution learned by the flow model. This behavior is also evident in the qualitative results. In particular, their final projection adjusts the generated samples to satisfy the target average intensity without explicitly preserving the visual structure of the original samples. Since the projection only enforces the global intensity constraint, it can significantly alter the appearance of the image.

D-Flow achieves the strongest distributional performance among the baselines. However, it incurs substantial computational overhead because its optimization requires retaining the computational graph throughout the sampling ODE from the latent noise to the generated sample in order to compute the required gradients. This makes D-Flow considerably more memory-intensive and time-consuming than methods that do not require backpropagation through the entire sampling trajectory.

In contrast, MintFlow provides a better balance between constraint satisfaction and distributional fidelity. Although its constraint error is slightly higher than those of PCFM and ECI, it achieves substantially better FID, KID, MMD, and $W_2$ scores, indicating closer agreement with the conditional distribution induced by the pretrained flow model. MintFlow is also faster than the iterative correction methods, D-Flow and PCFM, despite requiring Jacobian-vector products in its adjoint ODE formulation.

For MNIST, average image intensity is closely related to digit stroke thickness. As shown in Figure~\ref{fig:mnist_strokes}, varying the target intensity allows MintFlow to produce thicker or thinner strokes while preserving the overall structure of the digits. This demonstrates that the intensity constraint provides direct and interpretable control over a salient visual property. Unlike projection-based methods, which seek a constraint-satisfying point by modifying the final generated sample, MintFlow applies intermediate perturbations along the flow trajectory. Consequently, the corrected samples are not explicitly required to remain close to their unconstrained counterparts, allowing the model to reach the target constraint while better preserving the learned data distribution.

\begin{figure}[h]
    \centering
    
    \begin{subfigure}[t]{0.9\linewidth}
        \centering
        \includegraphics[width=\linewidth]{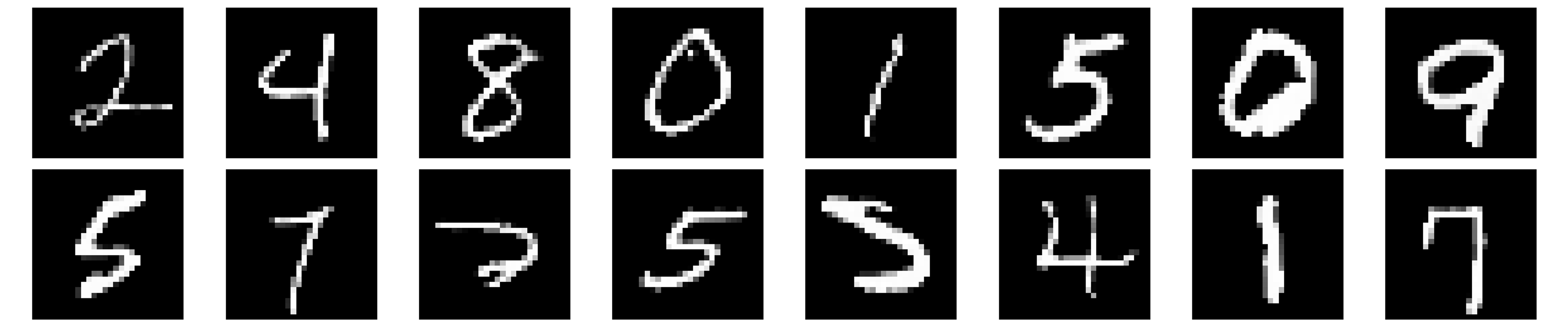}
        \caption{Unconstrained}
        \label{fig:mnist_strokes_orig}
    \end{subfigure}
    
    \begin{subfigure}[t]{0.9\linewidth}
        \centering
        \includegraphics[width=\linewidth]{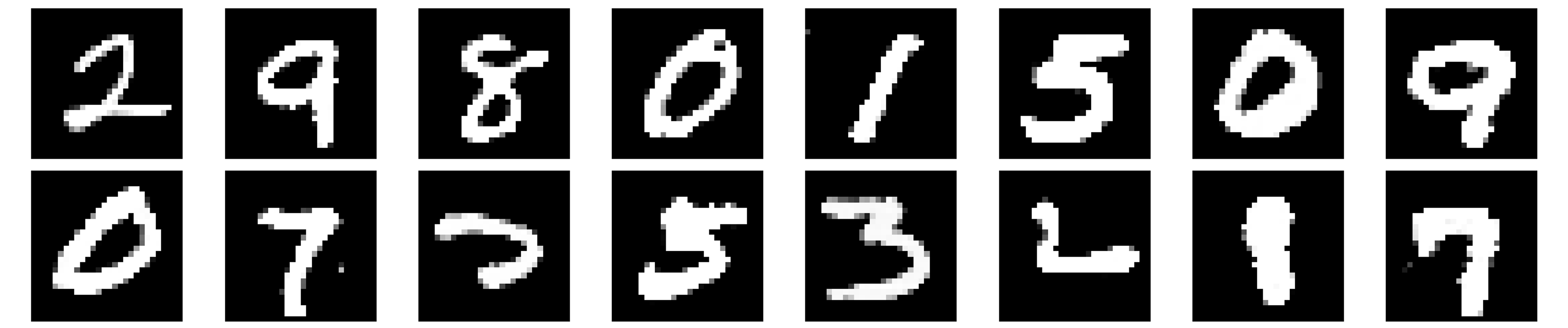}
        \caption{Higher intensity constraint}
        \label{fig:mnist_strokes_thicker}
    \end{subfigure}

    \begin{subfigure}[t]{0.9\linewidth}
        \centering
        \includegraphics[width=\linewidth]{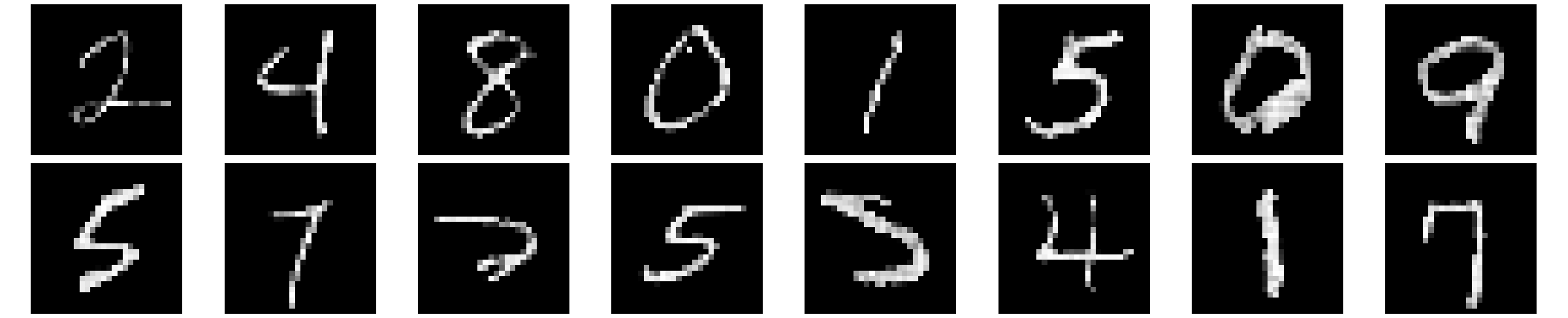}
        \caption{Lower intensity constraint}
        \label{fig:mnist_strokes_thinner}
    \end{subfigure}

    \caption{Controlling MNIST digit stroke thickness with MintFlow using average intensity constraints. Starting from unconstrained samples, our method modifies the generated digits according to different target intensity constraints, producing thicker or thinner strokes while preserving distribution.}
    \label{fig:mnist_strokes}
\end{figure}

\paragraph{Data Fidelity Constraint.}
To evaluate whether our method can improve the fidelity of generated samples, we consider a learned reliability constraint based on a AE. The AE is trained to reconstruct original MNIST images through the encoder-decoder pipeline. Therefore, for a well-trained AE, the reconstruction error in (\ref{eq:ae_fidelity_constraint})
provides a measure of how closely a generated sample lies on the learned data manifold. In this experiment, satisfying the AE constraint encourages generated samples to preserve characteristics of the original MNIST distribution. In this experiment, enforcing the AE constraint is intended to encourage generated samples to remain close to the original MNIST distribution. Accordingly, we use the original MNIST test set as the ground-truth distribution and compute distributional metrics between the generated samples and the ground-truth MNIST samples.

Quantitative results are summarized in Table~\ref{tab:mnist_vae_metrics} together with qualitative results in Figure \ref{fig:mnist_ae_appendix}. Compared with unconstrained sampling, our correction method consistently improves all distribution matching metrics. This demonstrates that the proposed learned constraint can guide the pretrained flow model toward samples with improved fidelity while remaining close to the original flow-induced distribution. Unlike approaches that improve sample quality through additional training or model refinement, our framework provides a complementary inference-time strategy that enhances pretrained flow matching models without modifying the underlying model parameters.

In contrast, existing constrained generation methods show limited performance under this learned nonlinear constraint. Projection-based correction methods struggle to effectively satisfy the AE-based reconstruction constraint, resulting in even larger constraint errors than unconstrained sampling. In particular, ECI's projection-based approach, which is designed to enforce hard constraints, struggles to handle such nonlinear learned constraints effectively. D-Flow incurs additional computational overhead because its optimization requires backpropagating gradients through both the sampling process and the pretrained AE. DiffusionPDE achieves the strongest constraint satisfaction among the baselines, but its overall distributional and constraint metrics remain worse than those of our method, while requiring substantially more memory. 

Meanwhile, our method successfully reduces the reconstruction error while achieving the best distribution-matching performance, demonstrating its ability to incorporate nonlinear learned constraints while preserving the underlying flow distribution. Although our method also requires differentiation through the AE encoder and decoder, it maintains a relatively low memory footprint and remains faster than D-Flow and PCFM. These results demonstrate that our approach can effectively incorporate learned nonlinear constraints without substantially compromising either distributional fidelity or computational efficiency.

Overall, this experiment demonstrates two important aspects of our framework: (1) its versatility in incorporating learned reliability constraints to improve generated sample quality, and (2) its ability to extend constrained generation beyond manually designed operators. We acknowledge that the effectiveness of such learned constraints naturally depends on the quality and generalization capability of the auxiliary model used to define them. Therefore, the purpose of this experiment is not to introduce the AE reconstruction constraint as a universally optimal fidelity metric, but rather to demonstrate the potential of our framework to flexibly integrate learned nonlinear constraints during inference and improve pretrained flow matching models without additional training.

\begin{table}[h]
\centering
\caption{Quantitative results on the AE-based fidelity constrained MNIST dataset.}
\label{tab:mnist_vae_metrics}
\resizebox{\textwidth}{!}{
\begin{tabular}{lccccccc}
\toprule
& \multicolumn{4}{c}{Distribution Metrics }
& \multicolumn{1}{c}{Constraint}
& \multicolumn{2}{c}{Efficiency} \\
\cmidrule(lr){2-5} \cmidrule(lr){6-6} \cmidrule(lr){7-8}
Method 
& FID $\downarrow$
& KID $\downarrow$
& MMD $\downarrow$
& $W_2$ $\downarrow$
& $H$ $\downarrow$
& Time (s) $\downarrow$
& Memory (GB) $\downarrow$\\
\midrule
Unconstrained 
& 0.261
& 0.0089 
&  0.0093 
&  0.0093
& 4.54E-3
& 4.45
& 0.12
\\
DiffusionPDE &  \cellcolor{red!10}0.242 & \cellcolor{red!10}0.0074 & \cellcolor{red!10}0.0078 & \cellcolor{red!10}0.0069 & \cellcolor{red!10}3.59E-3 &  \cellcolor{green!15}6.98 & 5.15\\
D-Flow & 0.257 & 0.0087 & 0.0092 & 0.5076 & 3.91E-3 & 1674.73 & 63.76 \\
ECI
& 0.932
& 0.0079
& 0.0164
& 0.9659
& 3.43E-2
& \cellcolor{red!10}11.91
& \cellcolor{green!15}0.40
\\
PCFM
& 10.225
& 0.3210
& 0.3336
& 3.1977
& 1.02E-1
& 79.56
& 10.17
\\

Ours
& \cellcolor{green!15}0.221
& \cellcolor{green!15}0.0030
& \cellcolor{green!15}0.0042
& \cellcolor{green!15}0.0048
& \cellcolor{green!15}2.67E-3
& 18.06
& \cellcolor{red!10}0.76
\\

\bottomrule
\end{tabular}}
\end{table}
\newpage
\begin{figure}[H]
    \centering
    \begin{tikzpicture}
        \node[inner sep=0pt] (img) {
            \includegraphics[width=0.95\textwidth]{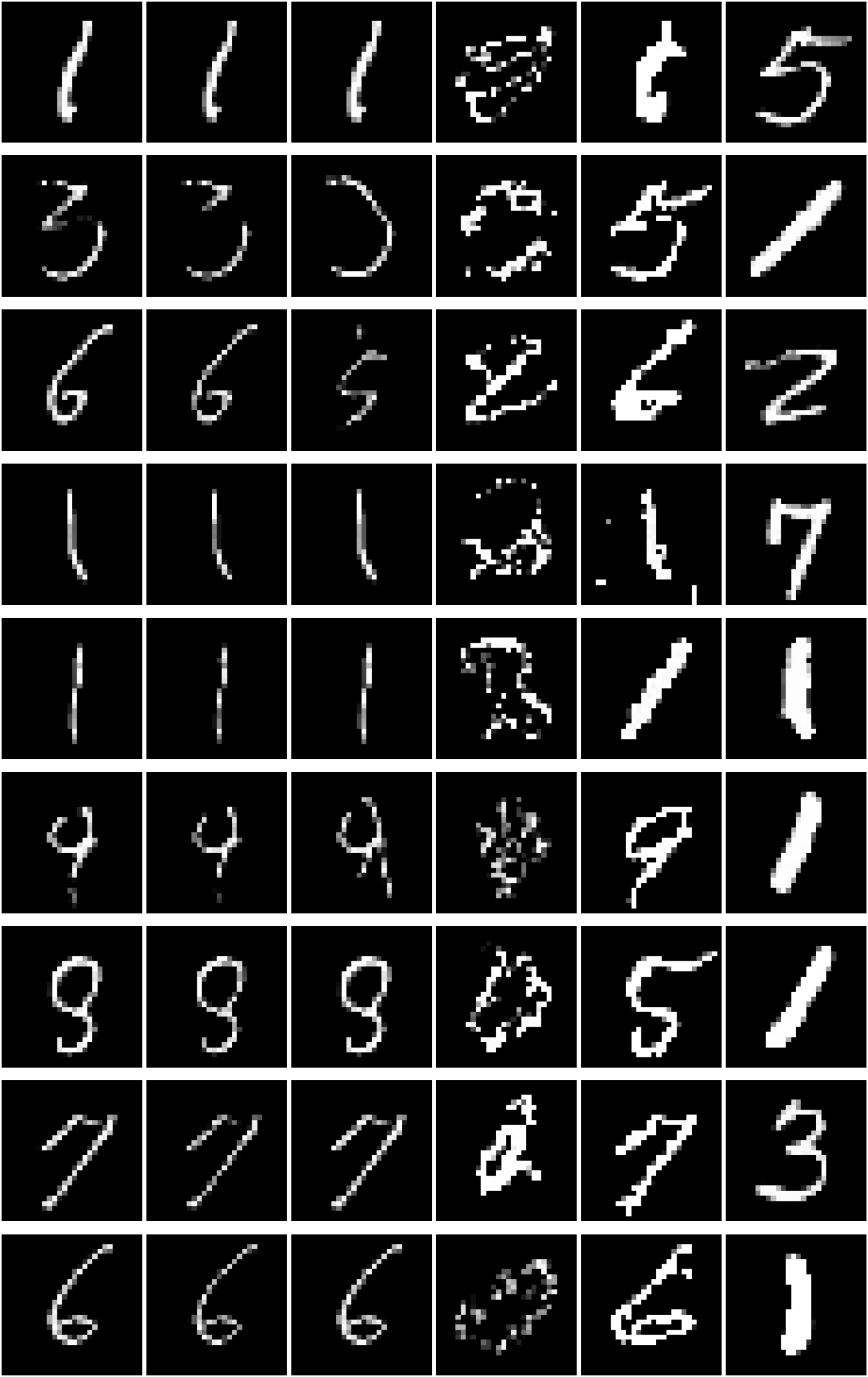}
        };

        \foreach \i/\label in {
            0/Unconstrained,
            1/DiffusionPDE,
            2/D-Flow,
            3/ECI,
            4/PCFM,
            5/\textbf{MintFlow}
        }{
            \node[anchor=south] at
                ($(img.north west)!\the\numexpr(2*\i+1)\relax/12!(img.north east)$)
                {\label};
        }
    \end{tikzpicture}
    \caption{Uncurated results for intensity constrained generation on MNIST. We compare unconstrained samples with constrained samples generated by different methods. All samples in each row are generated from the same initial noise across methods.}
    \label{fig:mnist_intensity_appendix}
\end{figure}
\newpage
\begin{figure}[H]
    \centering
    \begin{tikzpicture}
        \node[inner sep=0pt] (img) {
            \includegraphics[width=0.95\textwidth]{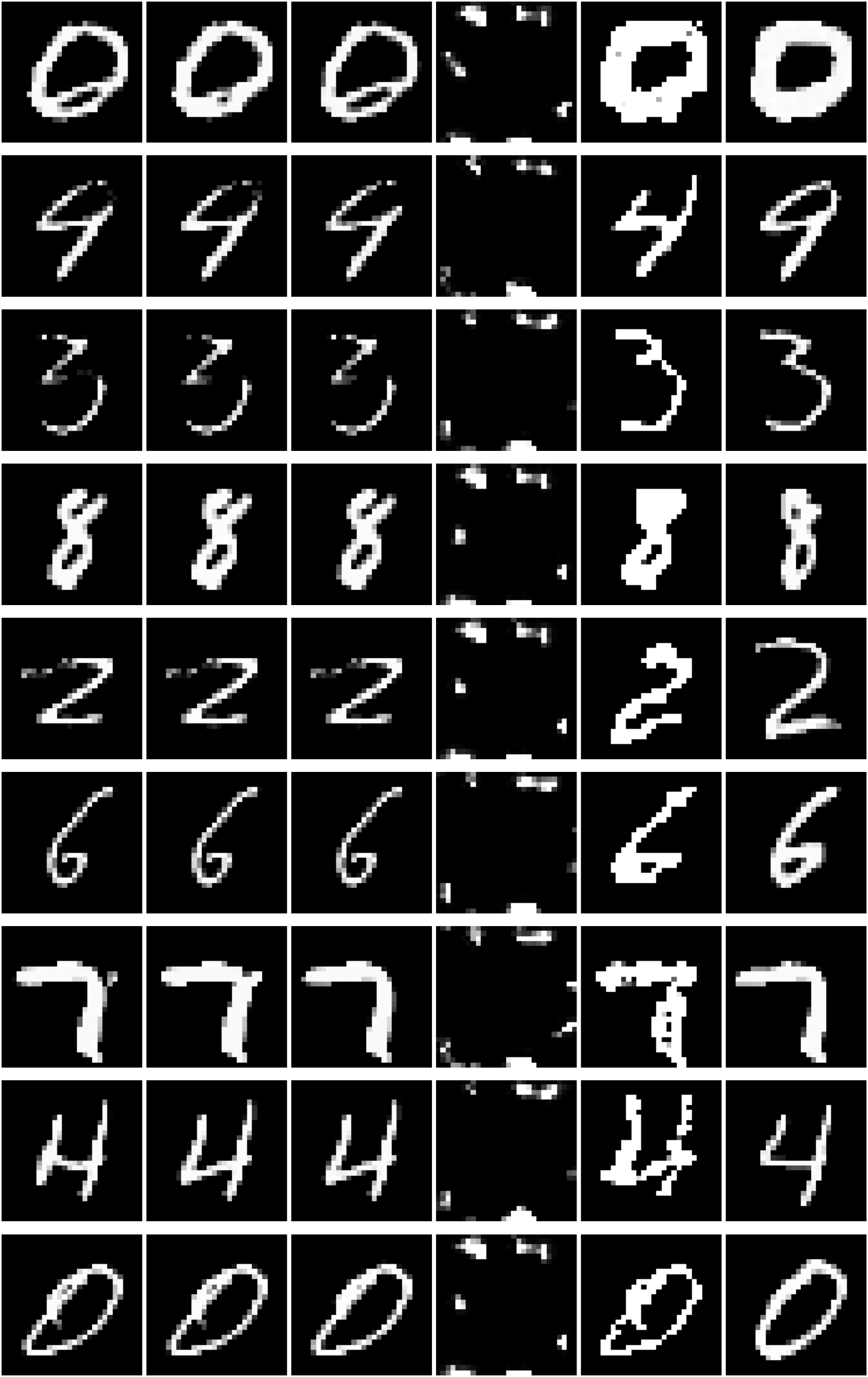}
        };

        \foreach \i/\label in {
            0/Unconstrained,
            1/DiffusionPDE,
            2/D-Flow,
            3/ECI,
            4/PCFM,
            5/\textbf{MintFlow}
        }{
            \node[anchor=south] at
                ($(img.north west)!\the\numexpr(2*\i+1)\relax/12!(img.north east)$)
                {\label};
        }
    \end{tikzpicture}
    \caption{Uncurated results for AE-based fidelity-constrained generation on MNIST. We compare unconstrained samples with constrained samples generated by different methods. All samples in each row are generated from the same initial noise across methods.}
    \label{fig:mnist_ae_appendix}
\end{figure}

\paragraph{Selected Intervention Times.}
To provide empirical evidence on the intervention times selected by our adaptive criterion in Section~\ref{sec:time_seclection}, we visualize the distribution of $s^\ast$ obtained from (\ref{eq:optimal_time}). Specifically, for each generated sample, we evaluate the objective $J(s)$ over the predefined candidate intervention times and select the minimizing time $s^\ast$. Figure~\ref{fig:selected_times} shows the resulting histograms for two constraints on MNIST: the intensity constraint and the fidelity constraint.

The selected intervention times exhibit distinct patterns across the two constraints. For the linear intensity constraint, the selected $s^\ast$ values are concentrated more heavily toward the earlier part of the trajectory. In contrast, the nonlinear fidelity constraint, defined using a trained autoencoder, produces a more dispersed distribution of intervention times, with selections spread across a wider range of the trajectory.

This difference can be understood from the structure of the two constraint functions. The intensity constraint is linear in the terminal state, resulting in a relatively simple and consistent constraint geometry. Consequently, the sensitivity of the constraint to an intermediate perturbation tends to vary more systematically along the trajectory, and the objective in (\ref{eq:optimal_time}) frequently favors earlier intervention times. The fidelity constraint, on the other hand, is induced by a nonlinear trained AE and therefore has a more complex local geometry. The corresponding constraint sensitivity can vary substantially across both samples. As a result, there is no single region of the trajectory that consistently minimizes the trade-off encoded by $J(s)$, leading to a broader distribution of selected intervention times.

Overall, these results suggest that the proposed time-selection criterion is not merely selecting a fixed or universally preferred intervention point. Instead, $s^\ast$ adapts to the structure of the constraint and to the individual generated trajectory. The contrast between the two constraints provides empirical evidence that the adaptive intervention mechanism responds to the geometry and sensitivity of the constraint rather than imposing a predetermined intervention schedule.

\begin{figure}[h]
    \centering
    \captionsetup[subfigure]{font=normalsize, labelformat=parens, labelsep=space}

    \begin{subfigure}[t]{0.48\linewidth}
        \centering
        \includegraphics[width=\linewidth]{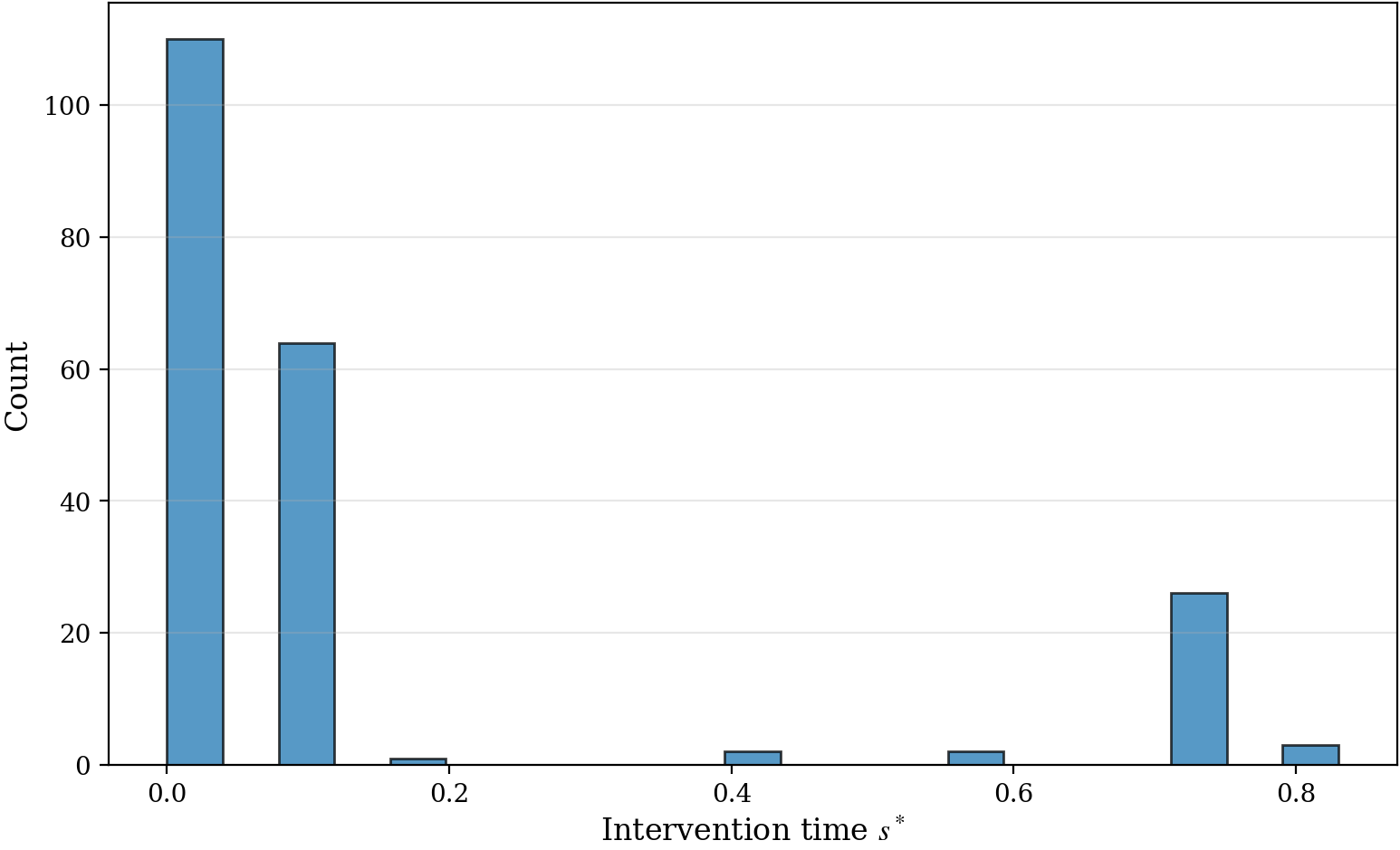}
        \caption{Intensity constraint}
        \label{fig:selected_time_intensity}
    \end{subfigure}
    \hfill
    \begin{subfigure}[t]{0.48\linewidth}
        \centering
        \includegraphics[width=\linewidth]{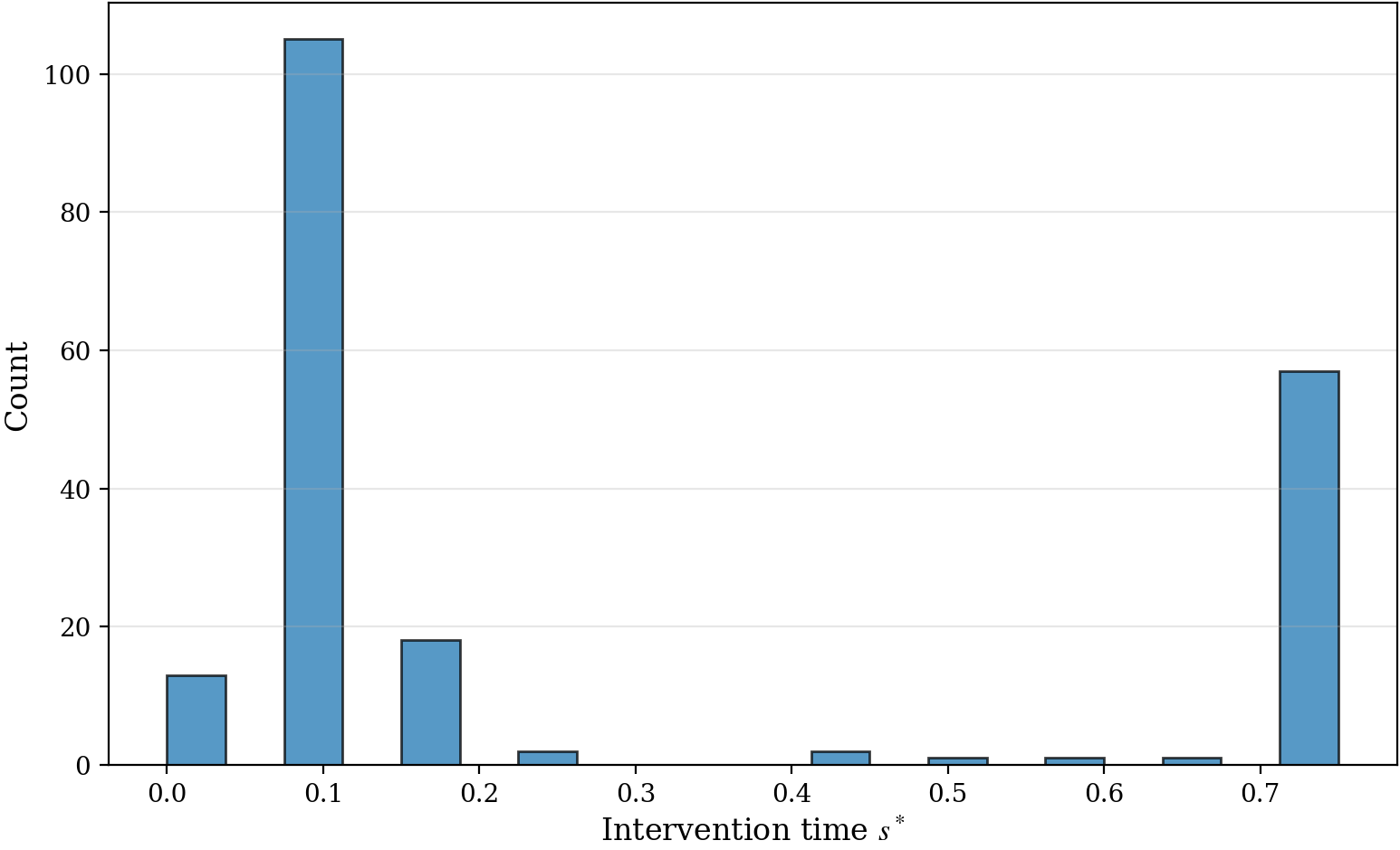}
        \caption{Fidelity constraint}
        \label{fig:selected_time_ae}
    \end{subfigure}

    \caption{
    Distribution of the selected intervention time $s^\ast$ on MNIST.
    Each histogram shows the intervention times selected by (\ref{eq:optimal_time})
    across generated samples for the corresponding constraint.
    \textbf{(a)} The linear intensity constraint tends to favor earlier intervention times,
    whereas \textbf{(b)} the nonlinear fidelity constraint defined by a trained autoencoder
    results in a more dispersed distribution of selected intervention times.
    }
    \label{fig:selected_times}
\end{figure}

\subsubsection{CIFAR10}
\paragraph{Color Tone Constraint.}
We evaluate our method on CIFAR-10 using global color-tone constraints. We consider multiple target color tones and generate a total of 10,000 samples for quantitative evaluation. The corresponding quantitative results are reported in Table~\ref{tab:cifar_feature_metrics}, while qualitative comparisons are shown in Figure~\ref{fig:cifar_intensity_appendix}. Overall, we observe trends similar to those in the MNIST experiments. Both quantitative and qualitative results demonstrate that our method achieves a favorable balance between constraint satisfaction and distribution preservation.
Existing methods that rely on explicit projection steps tend to introduce visible artifacts when enforcing the color-tone constraint, leading to degradation in image fidelity. This becomes particularly noticeable for CIFAR-10, where the higher image resolution and more complex visual structure make direct post-hoc modifications more likely to affect semantic and local image features. D-Flow also incurs substantially greater computational overhead than MNIST as the image resolution increases, making it considerably more expensive in this setting. DiffusionPDE achieves relatively strong distributional performance, but exhibits larger constraint errors. In particular, the qualitative results suggest that it does not sufficiently adjust the generated images toward the target color tone, consistent with its relatively large constraint error.

In contrast, our method consistently provides a better trade-off between constraint satisfaction and distributional fidelity. It effectively shifts generated samples toward the desired color-tone appearance while preserving their semantic content and overall visual quality. Furthermore, we demonstrate the flexibility of our approach by applying a diverse set of color-style constraints, showing that the proposed framework can provide controllable changes in global appearance without substantially disrupting the underlying image distribution.

\begin{table}[h]
\centering
\caption{Quantitative results on the color-style constrained CIFAR10 dataset.}
\label{tab:cifar_feature_metrics}
\resizebox{\textwidth}{!}{
\begin{tabular}{lccccccc}
\toprule
& \multicolumn{4}{c}{Distribution Metrics }
& \multicolumn{1}{c}{Constraint}
& \multicolumn{2}{c}{Efficiency} \\
\cmidrule(lr){2-5} \cmidrule(lr){6-6} \cmidrule(lr){7-8}
Method 
& FID $\downarrow$
& KID $\downarrow$
& MMD $\downarrow$
& $W_2$ $\downarrow$
& $H$ $\downarrow$
& Time (s) $\downarrow$
& Memory (GB) $\downarrow$\\
\midrule
Unconstrained 
& 11.1568  & 0.0421  & 0.0518  & 5.354  & 1.02E-1 & 14.17 & 0.44 \\
DiffusionPDE & \cellcolor{red!10}8.4464 & \cellcolor{red!10}0.0014 & \cellcolor{green!15}0.0012 & \cellcolor{red!10}2.9062 & 4.66E-2 & \cellcolor{green!15}12.08 & \cellcolor{red!10}2.25\\
D-Flow & 9.8484  & 0.0019 & 0.0015 & 3.138 & 5.789E-4 & 787.30 & 101.01 \\
ECI
& 322.500 & 0.3596 & 0.29951 & 17.9582
& \cellcolor{green!15}2.53E-16
& \cellcolor{red!10}19.81
& \cellcolor{green!15}0.34
\\
PCFM
& 125.262 & 0.0950 & 0.0727 & 11.1920
& 1.72E-12
& 27.10
& 4.26
\\

Ours
& \cellcolor{green!15}5.08 & \cellcolor{green!15}0.0017 & \cellcolor{red!10}0.0014 & \cellcolor{green!15}2.663
& 2.08E-5
& 22.67
& 2.31
\\

\bottomrule
\end{tabular}}
\end{table}

\paragraph{Data Fidelity Constraint.}
Similar to the MNIST experiment, we train a AE on CIFAR10 and use its reconstruction consistency as a learned fidelity constraint to improve the quality of generated samples. Quantitative results and qualitative results are summarized in Table~\ref{tab:cifar_vae_metrics} and Figure~\ref{fig:cifar_ae_appendix}, respectively. Consistent with the MNIST results, applying our constraint correction improves the distribution matching metrics compared with unconstrained flow matching samples, indicating that the proposed framework can guide the pretrained model toward higher-fidelity samples without additional training. In contrast, existing constrained generation methods show limited performance under this nonlinear learned constraint. 
It is worth noting that the purpose of this experiment is not to optimize unconditional CIFAR10 generation performance, but to evaluate whether our framework can effectively incorporate learned constraints while preserving the distribution of a pretrained flow model.

\begin{table}[h]
\centering
\caption{Quantitative results on the AE-based fidelity constrained CIFAR10 dataset.}
\label{tab:cifar_vae_metrics}
\resizebox{\textwidth}{!}{
\begin{tabular}{lccccccc}
\toprule
& \multicolumn{4}{c}{Distribution Metrics }
& \multicolumn{1}{c}{Constraint}
& \multicolumn{2}{c}{Efficiency} \\
\cmidrule(lr){2-5} \cmidrule(lr){6-6} \cmidrule(lr){7-8}
Method 
& FID $\downarrow$
& KID $\downarrow$
& MMD $\downarrow$
& $W_2$ $\downarrow$
& $H$ $\downarrow$
& Time (s) $\downarrow$
& Memory (GB) $\downarrow$\\
\midrule
Unconstrained & 11.797 & 0.0053 & 0.0043 & 3.4347 & 2.18E-5 & 13.91& 0.72\\
DiffusionPDE  & 10.779 & 0.0058 & \cellcolor{red!10}0.0042 & \cellcolor{red!10}3.3389 & 1.06E-5 & \cellcolor{green!15}14.82 & \cellcolor{red!10}2.59\\
D-Flow & \cellcolor{red!10}9.984 & \cellcolor{red!10}0.0057 & \cellcolor{red!10}0.0042 & 3.3438 & \cellcolor{green!15}9.09E-6 & 1618.61 & 103.74 \\
ECI& 310.353 & 0.3391 & 0.3083 & 17.6227 & 3.17E-4 & 30.80 & \cellcolor{green!15}0.68 \\
PCFM& 116.953 & 0.0729 & 0.0578 & 10.8794 & 6.78E-4 & 49.53 & 4.60 \\
Ours & \cellcolor{green!15}9.795 &  \cellcolor{green!15}0.0040 & \cellcolor{green!15}0.0033 & \cellcolor{green!15}3.2832 & \cellcolor{red!10}1.04E-5 & \cellcolor{red!10}19.70 & 2.74\\
\bottomrule
\end{tabular}}
\end{table}

\newpage
\begin{figure}[H]
    \centering

    \begin{tikzpicture}
        \node[inner sep=0pt] (img) {
            \includegraphics[width=0.95\textwidth]{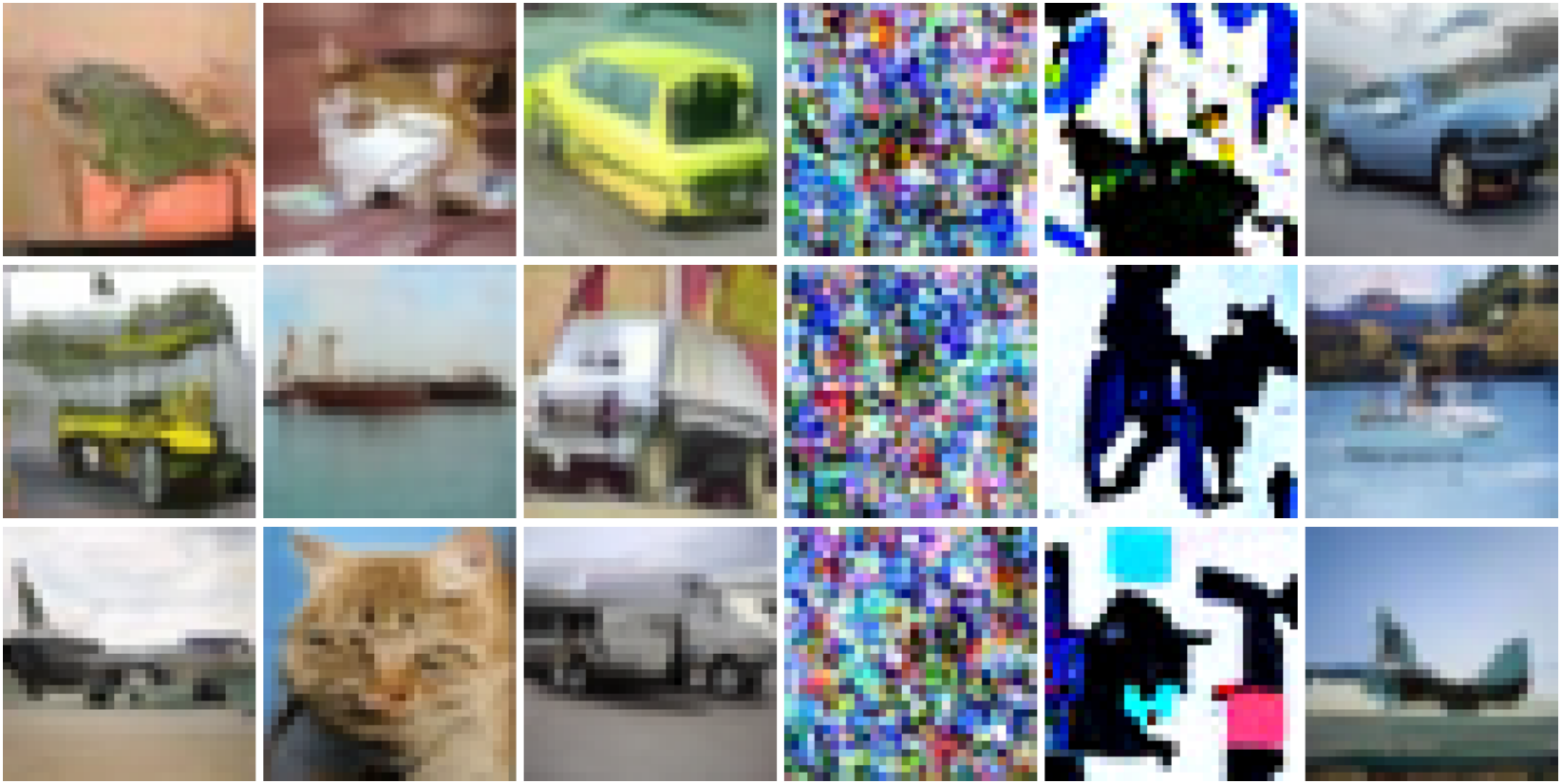}
        };

        \foreach \i/\label in {
            0/Unconstrained,
            1/DiffusionPDE,
            2/D-Flow,
            3/ECI,
            4/PCFM,
            5/\textbf{MintFlow}
        }{
            \node[anchor=south] at
                ($(img.north west)!\the\numexpr(2*\i+1)\relax/12!(img.north east)$)
                {\label};
        }

\node[
    rotate=90,
    anchor=south,
    fill={rgb,255:red,65;green,145;blue,210},
    text=white,
    rounded corners=1pt,
    inner xsep=3pt,
    inner ysep=2pt
] at ($(img.west)+(-0.35cm,0)$) {\small Blue Tone};

        \draw[
    line width=0.8pt,
    dash pattern=on 2pt off 2pt,
    black!70
]
            ($(img.south west)+(-0.7,-0.12cm)$) --
            ($(img.south east)+(+0.2,-0.12cm)$);
    \end{tikzpicture}

    \vspace{0.15cm}

    \begin{tikzpicture}
        \node[inner sep=0pt] (img) {
            \includegraphics[width=0.95\textwidth]{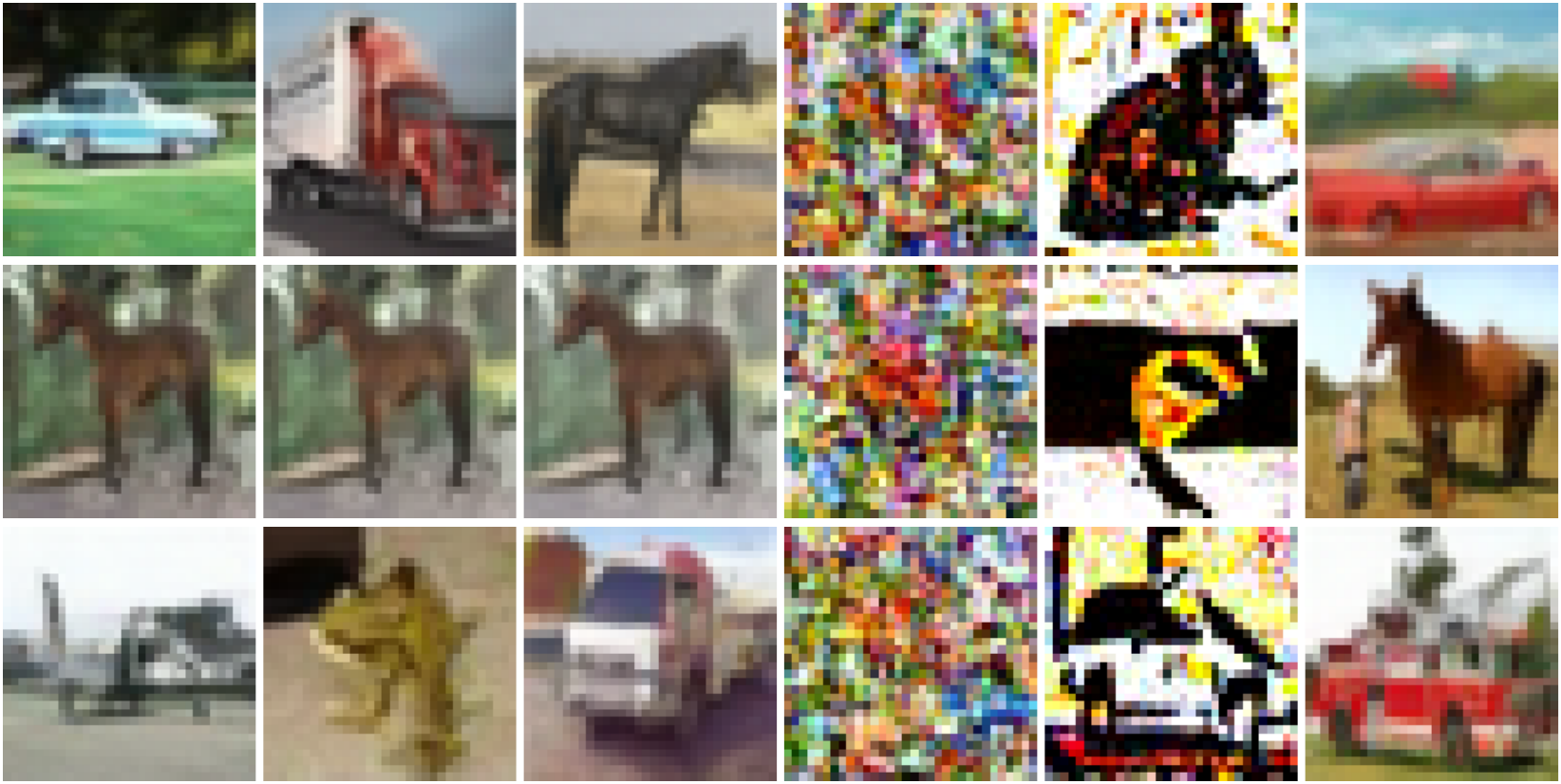}
        };

\node[
    rotate=90,
    anchor=south,
    fill={rgb,255:red,190;green,75;blue,75},
    text=white,
    rounded corners=1pt,
    inner xsep=3pt,
    inner ysep=2pt
] at ($(img.west)+(-0.35cm,0)$) {\small Red Tone};

        \draw[
    line width=0.8pt,
    dash pattern=on 2pt off 2pt,
    black!70
]
            ($(img.south west)+(-0.7,-0.12cm)$) --
            ($(img.south east)+(0.2,-0.12cm)$);
    \end{tikzpicture}

    \vspace{0.15cm}

    \begin{tikzpicture}
        \node[inner sep=0pt] (img) {
            \includegraphics[width=0.95\textwidth]{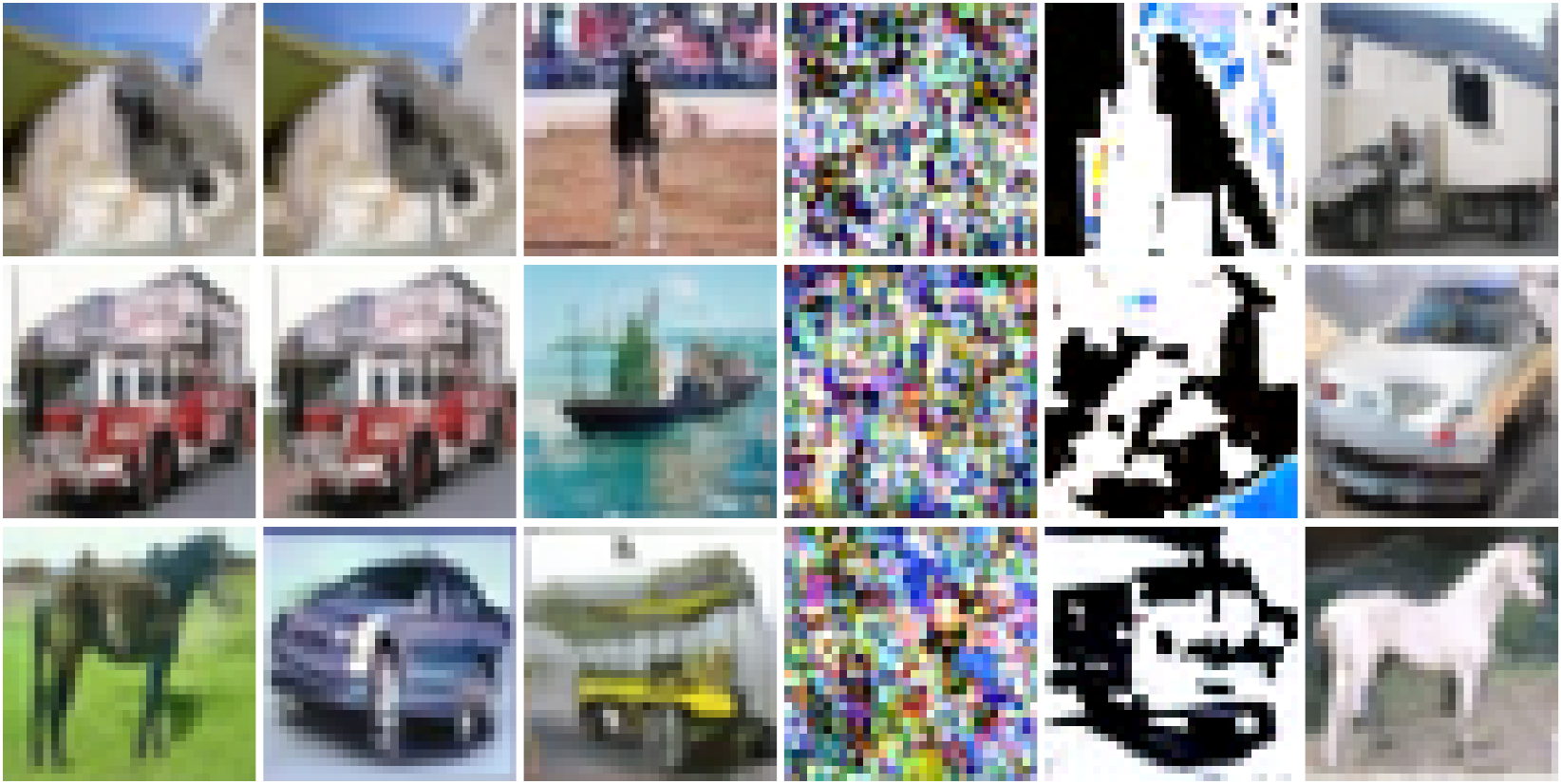}
        };

\node[
    rotate=90,
    anchor=south,
    fill={rgb,255:red,170;green,215;blue,235},
    text=white,
    rounded corners=1pt,
    inner xsep=3pt,
    inner ysep=2pt
] at ($(img.west)+(-0.35cm,0)$) {\small Cold Tone};

    \end{tikzpicture}

    \caption{Qualitative comparison on CIFAR-10 under different color constraints. We compare unconstrained samples with constrained samples generated by different methods. All samples in each row are generated from the same initial noise across methods.}
    \label{fig:cifar_intensity_appendix}
\end{figure}

\newpage
\begin{figure}[H]
    \centering
    \begin{tikzpicture}
        \node[inner sep=0pt] (img) {
            \includegraphics[width=0.95\textwidth]{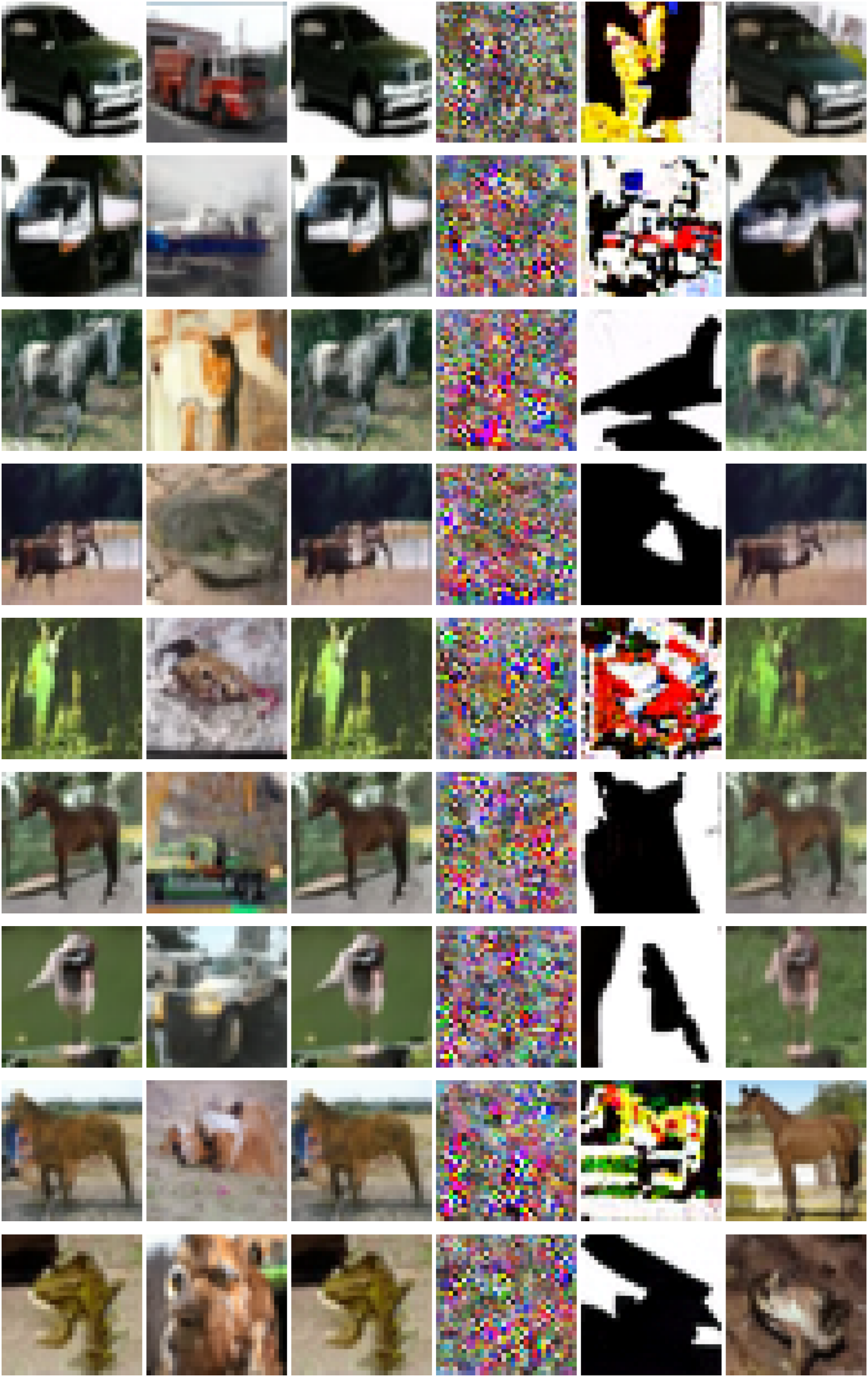}
        };

        \foreach \i/\label in {
            0/Unconstrained,
            1/DiffusionPDE,
            2/D-Flow,
            3/ECI,
            4/PCFM,
            5/\textbf{MintFlow}
        }{
            \node[anchor=south] at
                ($(img.north west)!\the\numexpr(2*\i+1)\relax/12!(img.north east)$)
                {\label};
        }
    \end{tikzpicture}
    \caption{Uncurated results for AE-based fidelity-constrained generation on CIFAR10. We compare unconstrained samples with constrained samples generated by different methods. All samples in each row are generated from the same initial noise across methods.}
    \label{fig:cifar_ae_appendix}
\end{figure}
\section{Application 2: Image Inverse Problems}
\label{appen:inverse_problems}
This appendix provides the benchmark definitions, model and sampling configurations, evaluation protocol, and additional diagnostics supporting the image inverse problem results in Section~\ref{sec:exp_image_inverse}. In particular, we separately quantify three aspects of reconstruction quality:
\begin{itemize}
\item \textbf{Constraint satisfaction:} how well the reconstructed image satisfies the measurement constraint associated with each inverse problem;
\item \textbf{Pairwise reconstruction fidelity:} how closely the reconstruction matches the corresponding reference image;
\item \textbf{Distributional fidelity:} how well the distribution of reconstructed images is preserved relative to the reference image distribution.
\end{itemize}

\subsection{Datasets and Constraints}
We evaluate the image inverse problems on two standard image datasets, \textbf{AFHQ-Cat} \citep{choi2020starganv2} and \textbf{FFHQ} \citep{karras2019style}. All images are resized to $64\times64$ before applying the inverse operators. 

For each reference image $u^{\mathrm{ref}}$, we construct a measurement from the corresponding forward operator and solve for a reconstruction $u$ that satisfies the resulting measurement constraint.

\paragraph{Inpainting.}
For inpainting, we consider a centered $20\times20$ box mask $M$. The measurement consists of the pixels outside the masked region. The constraint is defined as
\[
H(u)=M\odot\left(u-u^{\mathrm{ref}}\right),
\]
where $\odot$ denotes element-wise multiplication and $M$ denotes the binary mask selecting the constrained region. Thus, satisfying the constraint requires the reconstruction to agree with the reference image on the observed pixels, while the missing $20\times20$ region is inferred from the learned image prior.

\paragraph{Super-resolution.}
For super-resolution, we first downsample the $64\times64$ reference image by a factor of $4$, resulting in a $16\times16$ observation. Let $D$ denote the downsampling operator, $D:\mathbb{R}^{3\times64\times64}\rightarrow\mathbb{R}^{3\times16\times16}$. The measurement constraint is
\[
H(u)=D\left(u-u^{\mathrm{ref}}\right).
\]
Therefore, a valid reconstruction must reproduce the observed $16\times16$ image after downsampling, while the high-resolution details are determined by the learned image prior

\paragraph{Deblurring.}
For deblurring, we apply a Gaussian blur operator $B$ with kernel size $11$ and standard deviation $\sigma=1.0$. The corresponding constraint is 
\[
H(x) = B\left(u-u^{\mathrm{ref}}\right).
\]
The inverse problem therefore consists of recovering a sharp image whose blurred version agrees with the observed measurement.

\subsection{Flow-Matching Model and Baselines}

Following \citep{patel2024flowchef}, we conduct all experiments using the same pretrained Rectified Flow++ models \citep{lee2024improvingtrainingrectifiedflows}. RFM models are trained to make the flow matching trajectories as close to straight lines as possible, resulting in simpler and more direct generative flows. Owing to this property, pretrained rectified flow models have been widely adopted in recent approaches for solving inverse problems with flow-based generative models. For all methods, we use 10-step Rectified Flow sampling to ensure a consistent generation budget across methods.

We compare our method against several existing approaches for solving inverse problems with flow-based generative models.
We first consider \textbf{FlowChef} \citep{patel2024flowchef} and \textbf{D-Flow} \citep{ben2024d}, which enforce constraints by optimizing either an intermediate state along the generative trajectory or the initial Gaussian noise, respectively. We additionally compare against \textbf{PnP-Flow} \citep{martin2025pnpflow}, a plug-and-play approach that incorporates the measurement operator into the flow-based sampling process.
Additionally, we include two diffusion-based baselines, DPS \citep{chung2024dps} and FreeDoM \citep{yu2023freedom}, using the Rectified Flow-compatible implementations developed in \citep{patel2024flowchef}. For all baselines, we use the implementations provided in the official FlowChef repository\footnote{\url{https://github.com/FlowChef/flowchef}} without modification. We also use the hyperparameters specified in \citep{patel2024flowchef} for each baseline. Our method is implemented for RFM following the formulation described in Appendix~\ref{appen:rectified_flow}. Unlike our previous formulation developed for general Flow Matching models, our RFM-based implementation exploits the linearity of the RFM flow path. This property allows us to obtain the optimal correction without explicitly solving the adjoint equation backward in time, as described in (\ref{eq:rf_constraint_jacobian}).

\subsection{Evaluation Protocol}
For AFHQ-Cat, we use all images in the official test split. For FFHQ, we randomly select $10{,}000$ images from the dataset using a fixed random seed and use the same set of reference images for all inverse-problem evaluations. For each dataset and each inverse-problem task, we evaluate the reconstructions from four complementary perspectives.

\paragraph{Constraint Satisfaction.}
For each inverse-problem task, we evaluate how well the reconstructed images satisfy the corresponding measurement constraint. Specifically, given a reconstruction $\hat{u}$, we compute the mean squared error of the constraint residual $H(\hat{u})$ over the entire evaluation set. A lower value indicates better satisfaction of the imposed constraint.

\paragraph{Pairwise Reconstruction Fidelity.}
We measure the reconstruction fidelity by comparing each restored image $\hat{u}$ with its corresponding reference image $u^{\mathrm{ref}}$, i.e., the image before applying the degradation operator. We report PSNR, SSIM, and LPIPS to evaluate pixel-level, structural, and perceptual similarity, respectively. Higher PSNR and SSIM and lower LPIPS indicate better reconstruction fidelity.

\paragraph{Distributional Fidelity.}
We further evaluate whether the restored images preserve the overall image distribution. To this end, we compute the Fréchet Inception Distance (FID) between the set of restored images and the ground-truth images of AFHQ-Cat and FFHQ. A lower FID indicates that the distribution of the restored images is closer to that of the GT distribution.

\paragraph{Efficiency.}
Runtime is measured as synchronized wall-clock time per generated sample, and memory is reported as peak allocated GPU memory.

\subsection{Additional Qualitative Results}\label{appen:additional_images_inverse}
Further qualitative results on the AFHQ-Cat dataset for inpainting, super-resolution, and deblurring are presented in Figures~\ref{fig:cat_inpaint_appendix}, \ref{fig:cat_sr_appendix}, and \ref{fig:cat_deblur_appendix}, respectively. Corresponding results on the FFHQ dataset are provided in Figures~\ref{fig:ffhq_inpaint_appendix}, \ref{fig:ffhq_sr_appendix}, and \ref{fig:ffhq_deblur_appendix}, respectively.

\newpage
\begin{figure}[H]
    \centering
    \begin{tikzpicture}
        \node[inner sep=0pt] (img) {
            \includegraphics[width=\textwidth]{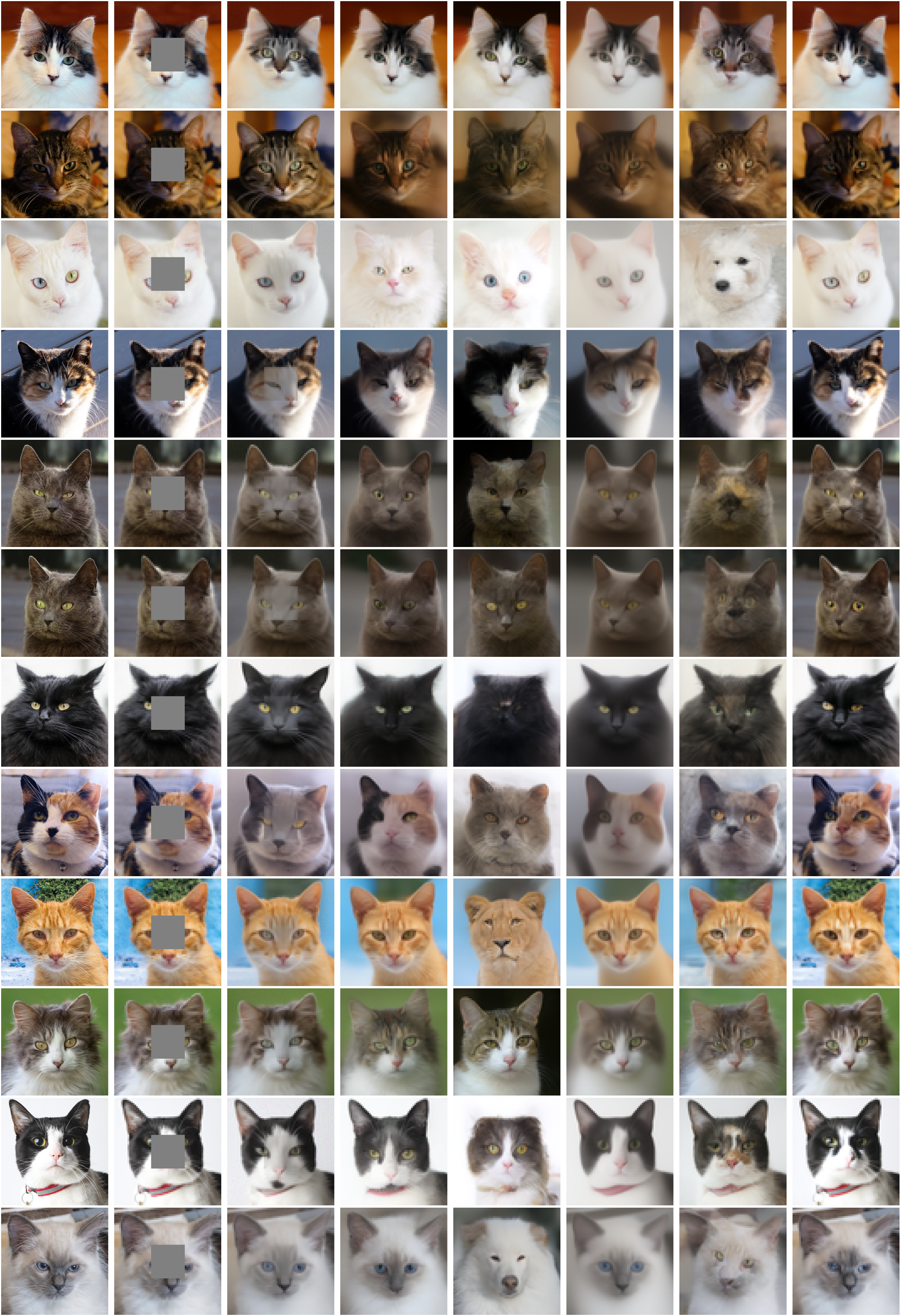}
        };

        \foreach \i/\label in {
            0/GT,
            1/Degraded,
            2/D-Flow,
            3/DPS,
            4/Freedom,
            5/PnPFlow,
            6/FlowChef
        }{
            \node[anchor=south] at
                ($(img.north west)!\the\numexpr\i*2+1\relax/16!(img.north east)$)
                {\label};
        }

        \node[anchor=south] at
            ($(img.north west)!15/16!(img.north east)$)
            {\textbf{MintFlow}};
    \end{tikzpicture}
    \caption{Uncurated inpainting results on the AFHQ-Cat dataset. The figure compares ground-truth images (GT), degraded inputs, and corresponding inpainting results from different methods.}
    \label{fig:cat_inpaint_appendix}
\end{figure}

\newpage
\begin{figure}[H]
    \centering
    \begin{tikzpicture}
        \node[inner sep=0pt] (img) {
            \includegraphics[width=\textwidth]{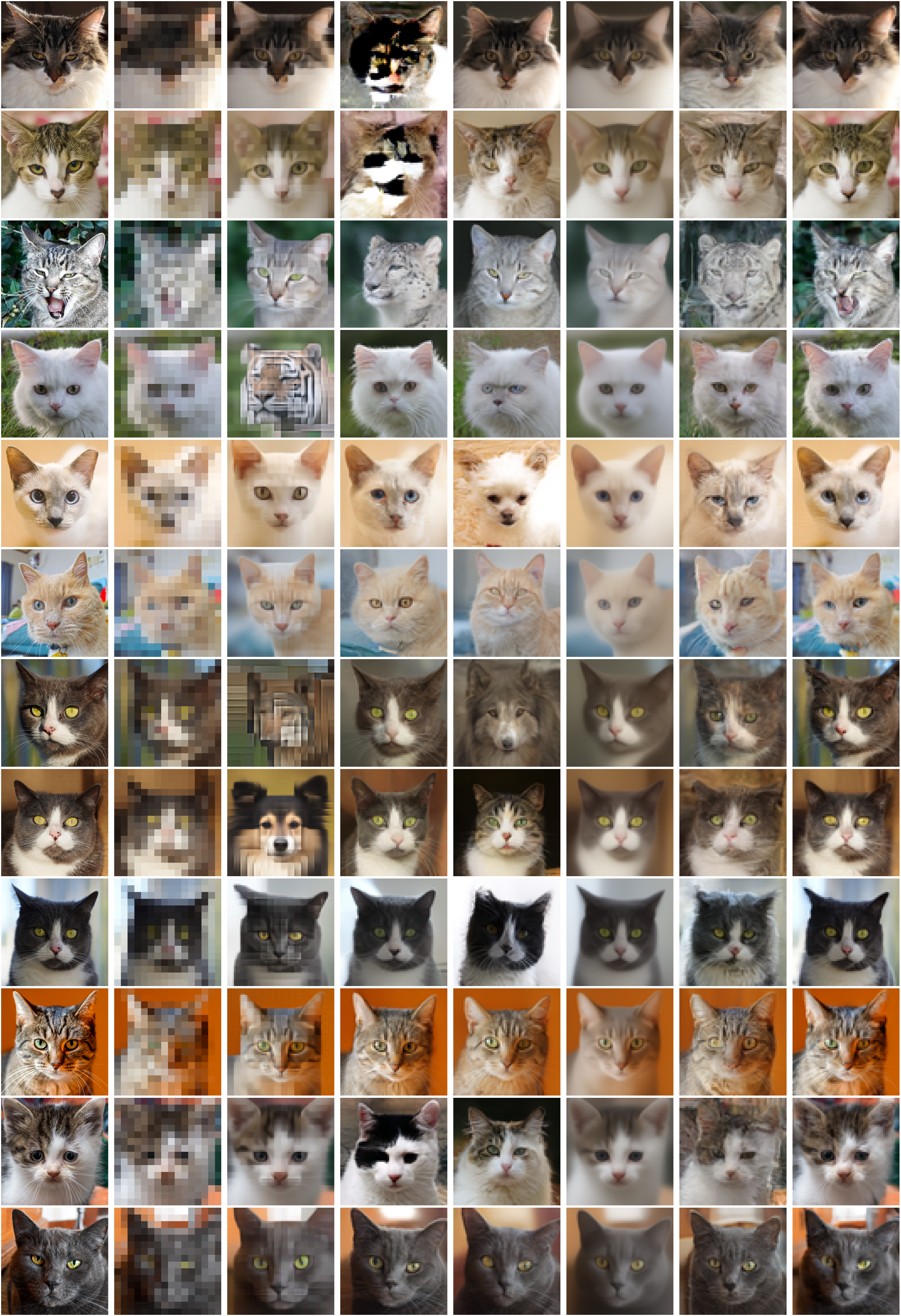}
        };

        \foreach \i/\label in {
            0/GT,
            1/Degraded,
            2/D-Flow,
            3/DPS,
            4/Freedom,
            5/PnPFlow,
            6/FlowChef
        }{
            \node[anchor=south] at
                ($(img.north west)!\the\numexpr\i*2+1\relax/16!(img.north east)$)
                {\label};
        }

        \node[anchor=south] at
            ($(img.north west)!15/16!(img.north east)$)
            {\textbf{MintFlow}};
    \end{tikzpicture}
    \caption{Uncurated super resolution (SR) results on the AFHQ-Cat dataset. The figure compares ground-truth images (GT), degraded inputs, and corresponding SR results from different methods.}
    \label{fig:cat_sr_appendix}
\end{figure}

\newpage
\begin{figure}[H]
    \centering
    \begin{tikzpicture}
        \node[inner sep=0pt] (img) {
            \includegraphics[width=\textwidth]{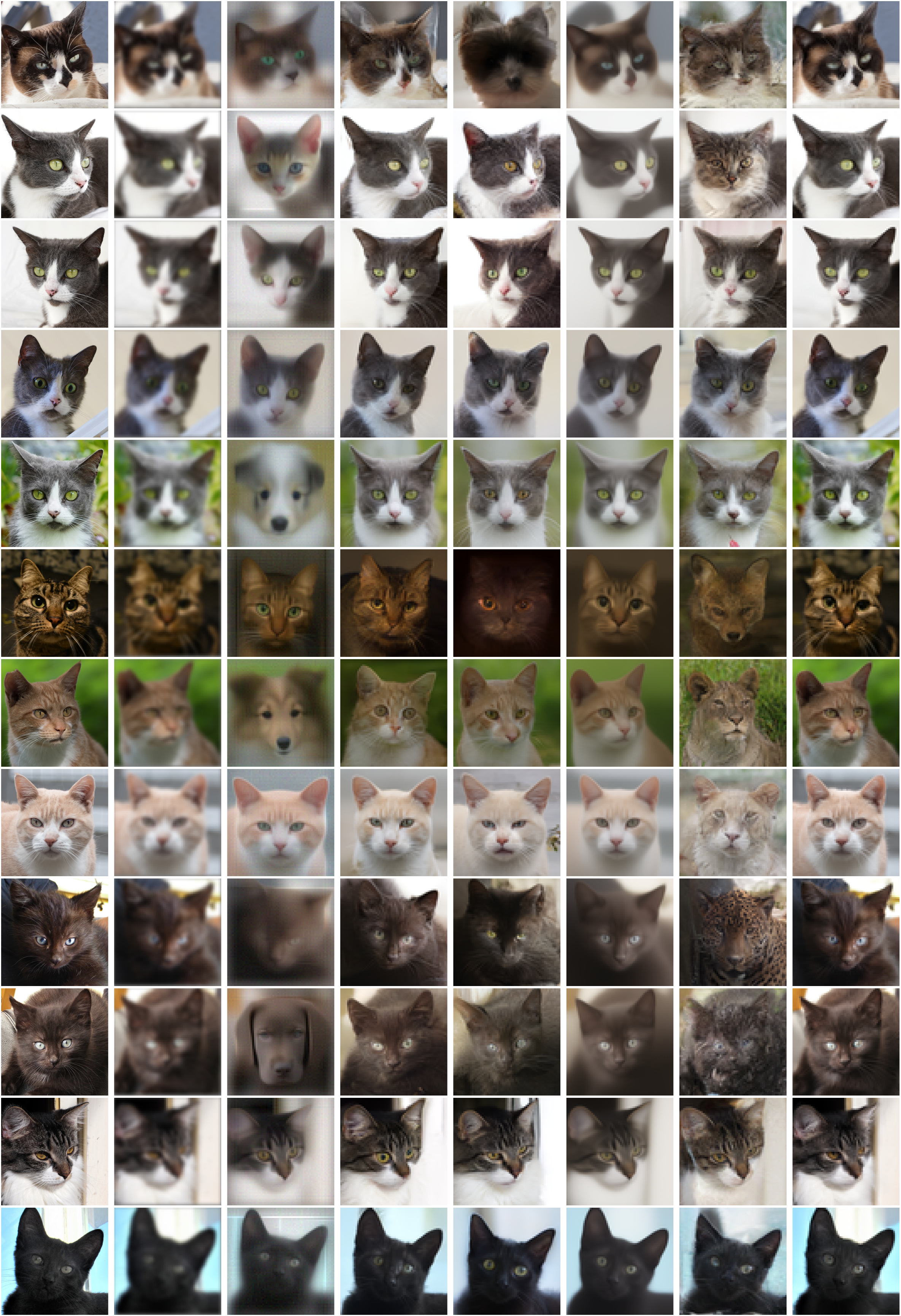}
        };

        \foreach \i/\label in {
            0/GT,
            1/Degraded,
            2/D-Flow,
            3/DPS,
            4/Freedom,
            5/PnPFlow,
            6/FlowChef
        }{
            \node[anchor=south] at
                ($(img.north west)!\the\numexpr\i*2+1\relax/16!(img.north east)$)
                {\label};
        }

        \node[anchor=south] at
            ($(img.north west)!15/16!(img.north east)$)
            {\textbf{MintFlow}};
    \end{tikzpicture}
    \caption{Uncurated deblurring results on the AFHQ-Cat dataset. The figure compares ground-truth images (GT), degraded inputs, and corresponding deblurred results from different methods.}
    \label{fig:cat_deblur_appendix}
\end{figure}

\newpage
\begin{figure}[H]
    \centering
    \begin{tikzpicture}
        \node[inner sep=0pt] (img) {
            \includegraphics[width=\textwidth]{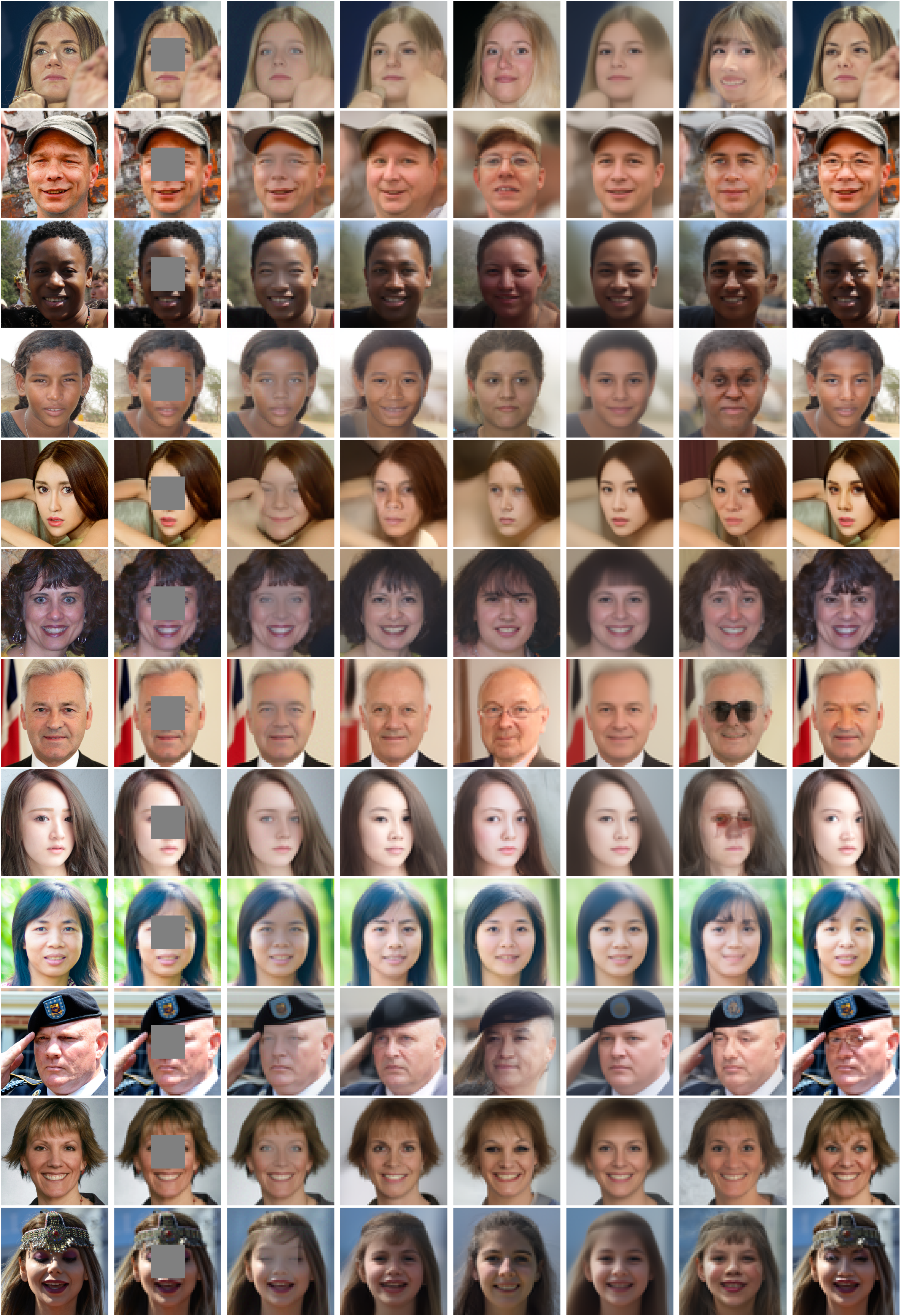}
        };

        \foreach \i/\label in {
            0/GT,
            1/Degraded,
            2/D-Flow,
            3/DPS,
            4/Freedom,
            5/PnPFlow,
            6/FlowChef
        }{
            \node[anchor=south] at
                ($(img.north west)!\the\numexpr\i*2+1\relax/16!(img.north east)$)
                {\label};
        }

        \node[anchor=south] at
            ($(img.north west)!15/16!(img.north east)$)
            {\textbf{MintFlow}};
    \end{tikzpicture}
    \caption{Uncurated inpainting results on the FFHQ dataset. The figure compares ground-truth images (GT), degraded inputs, and corresponding inpainting results from different methods.}
    \label{fig:ffhq_inpaint_appendix}
\end{figure}

\newpage
\begin{figure}[H]
    \centering
    \begin{tikzpicture}
        \node[inner sep=0pt] (img) {
            \includegraphics[width=\textwidth]{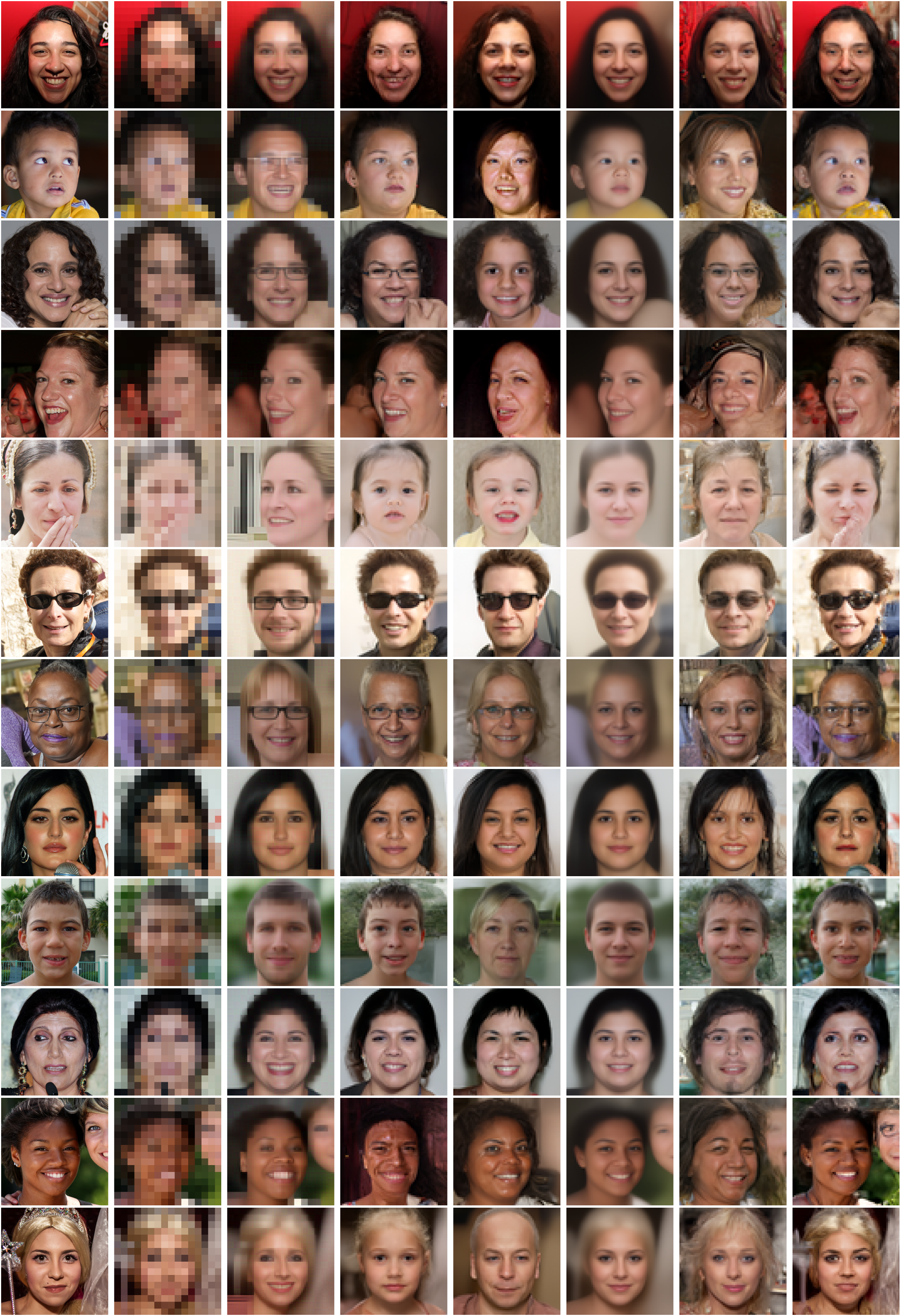}
        };

        \foreach \i/\label in {
            0/GT,
            1/Degraded,
            2/D-Flow,
            3/DPS,
            4/Freedom,
            5/PnPFlow,
            6/FlowChef
        }{
            \node[anchor=south] at
                ($(img.north west)!\the\numexpr\i*2+1\relax/16!(img.north east)$)
                {\label};
        }

        \node[anchor=south] at
            ($(img.north west)!15/16!(img.north east)$)
            {\textbf{MintFlow}};
    \end{tikzpicture}
    \caption{Uncurated super resolution (SR) results on the FFHQ dataset. The figure compares ground-truth images (GT), degraded inputs, and corresponding SR results from different methods.}
    \label{fig:ffhq_sr_appendix}
\end{figure}

\newpage
\begin{figure}[H]
    \centering
    \begin{tikzpicture}
        \node[inner sep=0pt] (img) {
            \includegraphics[width=\textwidth]{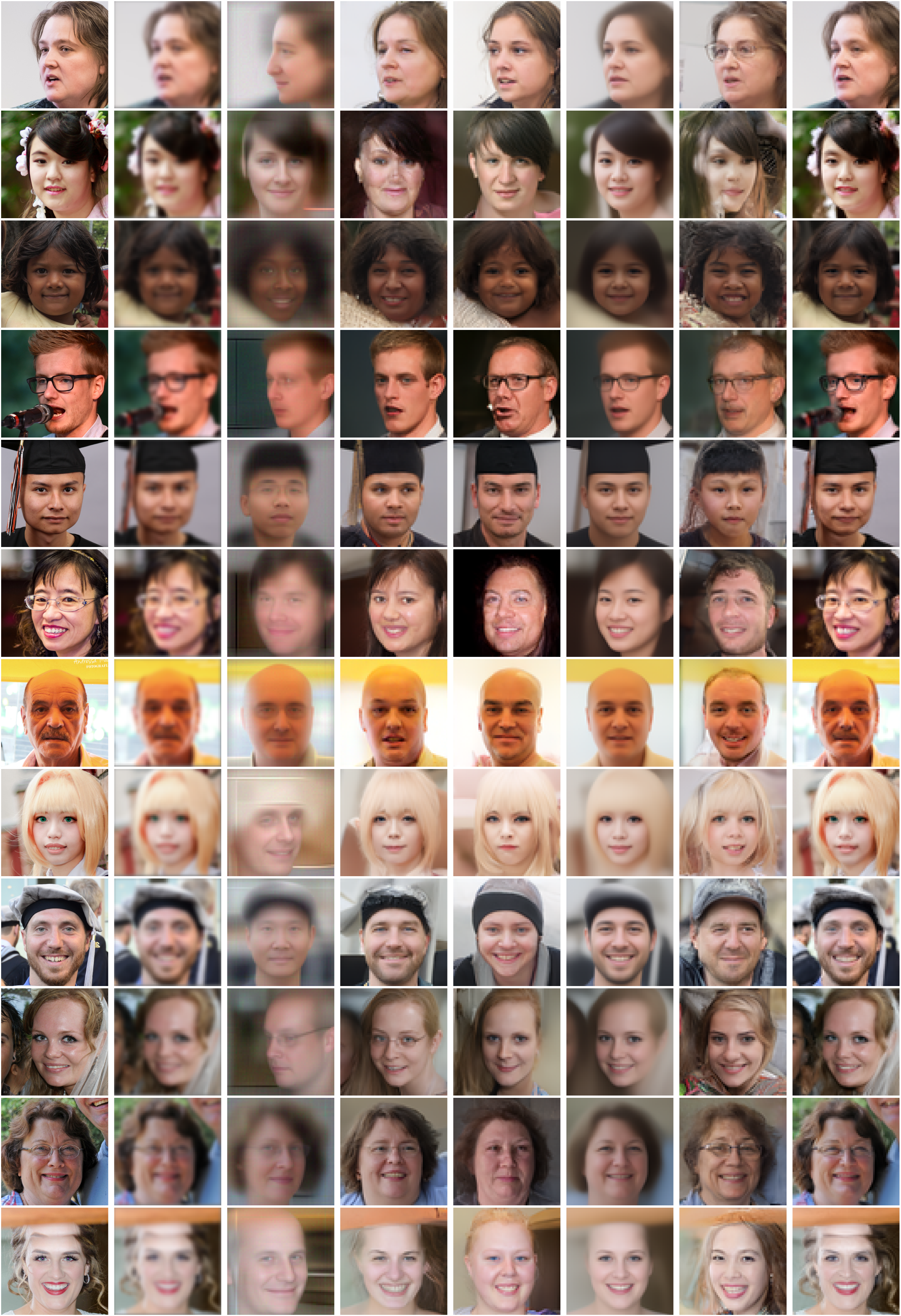}
        };

        \foreach \i/\label in {
            0/GT,
            1/Degraded,
            2/D-Flow,
            3/DPS,
            4/Freedom,
            5/PnPFlow,
            6/FlowChef
        }{
            \node[anchor=south] at
                ($(img.north west)!\the\numexpr\i*2+1\relax/16!(img.north east)$)
                {\label};
        }

        \node[anchor=south] at
            ($(img.north west)!15/16!(img.north east)$)
            {\textbf{MintFlow}};
    \end{tikzpicture}
    \caption{Uncurated deblurring results on the FFHQ dataset. The figure compares ground-truth images (GT), degraded inputs, and corresponding deblurred results from different methods.}
    \label{fig:ffhq_deblur_appendix}
\end{figure}
\section{Application 3: Image Editing}\label{appen:image_editing}
This appendix provides the benchmark definitions, model and sampling configurations, evaluation protocol, and additional diagnostics supporting the image editing results in Section~\ref{sec:exp_image_editing}. In particular, we quantify not only how well the edited images satisfy the given constraints and the computational efficiency of each method, but also how well they fulfill the task-specific text guidance.
\begin{itemize}
\item \textbf{Constraint satisfaction:} how well the edited image satisfies the constraint, i.e., preserving the regions outside the provided mask;
\item \textbf{Prompt adherence:} how well the edited image follows the given text prompt, measured using CLIP score;
\item \textbf{Image similarity:} how visually consistent the edited image remains with the reference image, measured using CLIP score;
\item \textbf{Efficiency:} the computational efficiency of each method in terms of inference speed and memory usage.
\end{itemize}

\subsection{Datasets and Constraints}
We perform large-scale image editing experiments on PIE-Bench \citep{ju2024pnp}. Specifically, we use the images from the \texttt{0\_random\_140} subset available in the official PIE-Bench data repository, along with the provided ground-truth masks. We use the same masks to control all methods for a fair comparison. The editing task is subject to an inpainting-style constraint: regions outside the provided mask should remain identical to the corresponding regions in the reference image. Unlike the aforementioned inverse problem, however, the edited region is additionally required to follow the given text guidance, making it challenging to satisfy both the preservation constraint and the desired semantic edit. Thus, the key challenge of this task is to simultaneously preserve the unmasked regions faithfully while performing the intended text-guided edit within the masked region.

\subsection{Flow-Matching Model and Baselines}
Following \citep{patel2024flowchef}, we use InstaFlow \citep{liu2023instaflow}, a Stable Diffusion v1.5 variant, as the flow-matching model. We evaluate the same set of baselines used for the aforementioned inverse problems in Appendix~\ref{appen:inverse_problems} on InstaFlow. As observed in the previous experiments, D-Flow requires differentiating through the entire forward flow ODE integration, resulting in substantially higher memory consumption. For the image editing task on InstaFlow, this computational overhead exceeds the available memory in our experimental setup. We therefore exclude D-Flow from this experiment.

\subsection{Evaluation Protocol}

We evaluate the editing results using the following metrics.

\paragraph{Constraint Satisfaction.}
We measure how faithfully the edited image preserves the reference image outside the provided mask. Specifically, we compute the mean squared error (MSE) between the edited and reference images over the unmasked regions.

\paragraph{Prompt Adherence.}
We measure how well the edited image follows the given text prompt using CLIP-based similarity \citep{radford2021learning}. Specifically, we use the OpenAI-pretrained CLIP ViT-B/32 model implemented in OpenCLIP \footnote{\url{https://github.com/mlfoundations/open_clip?utm_source=chatgpt.com}}. We report \textbf{CLIP-T}, computed as the cosine similarity between the CLIP image embedding of the edited image and the CLIP text embedding of the corresponding prompt. Higher scores indicate better text-image alignment.

\paragraph{Image Similarity.}
We measure the visual similarity between the edited image and the reference image using \textbf{CLIP-I}. Specifically, CLIP-I is computed as the cosine similarity between their CLIP image embeddings using the same OpenAI-pretrained CLIP ViT-B/32 model. Higher scores indicate greater visual similarity to the reference image.

\paragraph{Efficiency.}
We measure the computational cost of constrained sampling in terms of inference time and peak GPU memory allocation. Inference time measures the wall-clock time required to generate an edited image, while peak memory allocation measures the maximum GPU memory allocated during constrained sampling.

\subsection{Additional Qualitative Results}\label{appen:additional_images_editing}
Additional qualitative results for image editing are presented in Figures \ref{fig:image_edit_appen} and \ref{fig:image_edit_appen2}. Overall, we observe that MintFlow successfully modifies the image according to the given prompt while faithfully preserving the original input image. In contrast, baseline methods tend to overly focus on the prompt instructions, resulting in the loss of the input background, artifacts, or blurry outputs. These results demonstrate that our method effectively maintains the image distribution to generate natural in-distribution images, while successfully satisfying constraints to naturally edit the image according to the prompt while retaining the input context.

\newpage
\begin{figure}[H]
    \centering

    \begin{tikzpicture}


        \node[inner sep=0] (img1) {
            \includegraphics[
                width=\linewidth
            ]{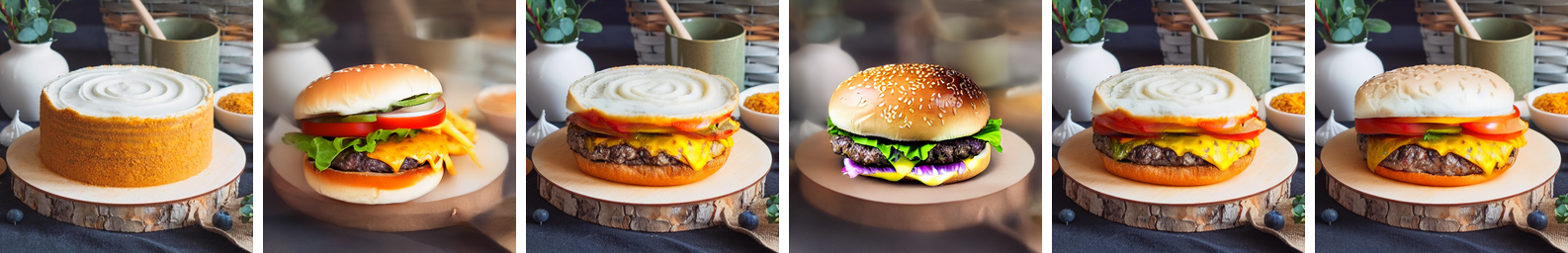}
        };

        \node[
            font=\small
        ] at ($(img1.north west)!1/12!(img1.north east) + (0, 7pt)$)
        {Input};

        \node[
            font=\small
        ] at ($(img1.north west)!3/12!(img1.north east) + (0, 7pt)$)
        {DPS};

        \node[
            font=\small
        ] at ($(img1.north west)!5/12!(img1.north east) + (0, 7pt)$)
        {Freedom};

        \node[
            font=\small
        ] at ($(img1.north west)!7/12!(img1.north east) + (0, 7pt)$)
        {PnPFlow};

        \node[
            font=\small
        ] at ($(img1.north west)!9/12!(img1.north east) + (0, 7pt)$)
        {FlowChef};

        \node[
            font=\small\bfseries
        ] at ($(img1.north west)!11/12!(img1.north east) + (0, 7pt)$)
        {MintFlow};


        \node[
            font=\small,
            anchor=north
        ] (text1) at ($(img1.south) + (0,-3pt)$)
        {
            \tikz[baseline=-1.5ex]{
        \node[
            fill=gray!15,
            rounded corners=2pt,
            inner xsep=3pt,
            inner ysep=1pt
        ] {\textcolor{gray}{\textit{Prompt:}}};
    }
    \;
            \texttt{a \sout{cake} \textcolor{red}{burger} on a wooden tray}
        };


        \node[
            inner sep=0,
            anchor=north
        ] (img2) at ($(text1.south) + (0,-3pt)$) {
            \includegraphics[
                width=\linewidth
            ]{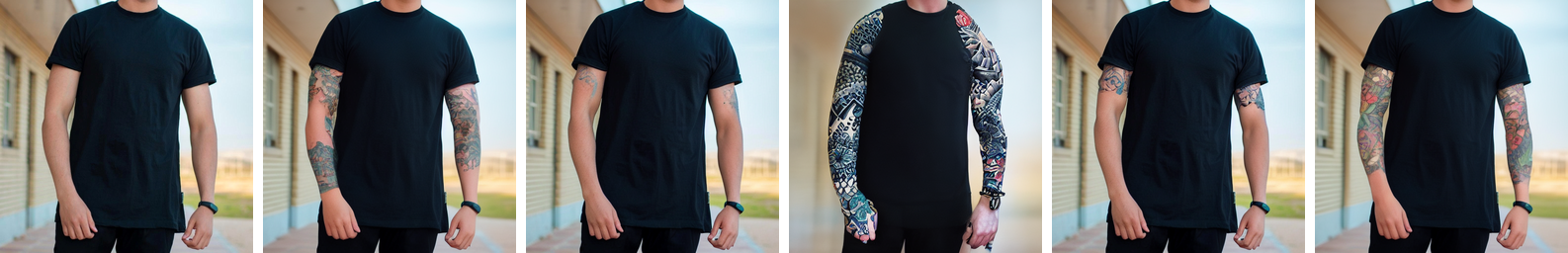}
        };


        \node[
            font=\small,
            anchor=north
        ] (text2) at ($(img2.south) + (0,-3pt)$)
     {
    \tikz[baseline=-1.5ex]{
        \node[
            fill=gray!15,
            rounded corners=2pt,
            inner xsep=3pt,
            inner ysep=1pt
        ] {\textcolor{gray}{\textit{Prompt:}}};
    }
    \;
    \texttt{a person wearing a black T-shirt with \sout{bare arms} \textcolor{red}{tattooed arms}}
};


        \node[
            inner sep=0,
            anchor=north
        ] (img3) at ($(text2.south) + (0,-3pt)$) {
            \includegraphics[
                width=\linewidth
            ]{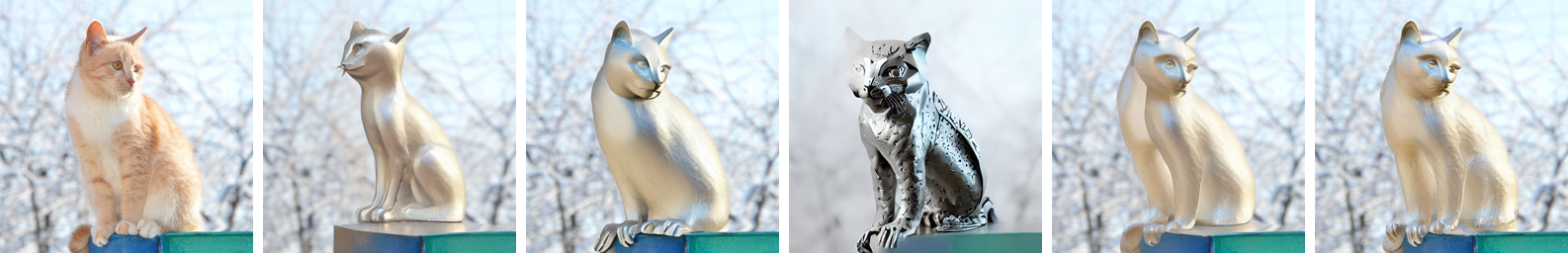}
        };

        \node[
            font=\small,
            anchor=north
        ] (text3) at ($(img3.south) + (0,-3pt)$)
     {
    \tikz[baseline=-1.5ex]{
        \node[
            fill=gray!15,
            rounded corners=2pt,
            inner xsep=3pt,
            inner ysep=1pt
        ] {\textcolor{gray}{\textit{Prompt:}}};
    }
    \;
    \texttt{a \sout{sitting cat} \textcolor{red}{silver cat sculpture}}
};


        \node[
            inner sep=0,
            anchor=north
        ] (img4) at ($(text3.south) + (0,-3pt)$) {
            \includegraphics[
                width=\linewidth
            ]{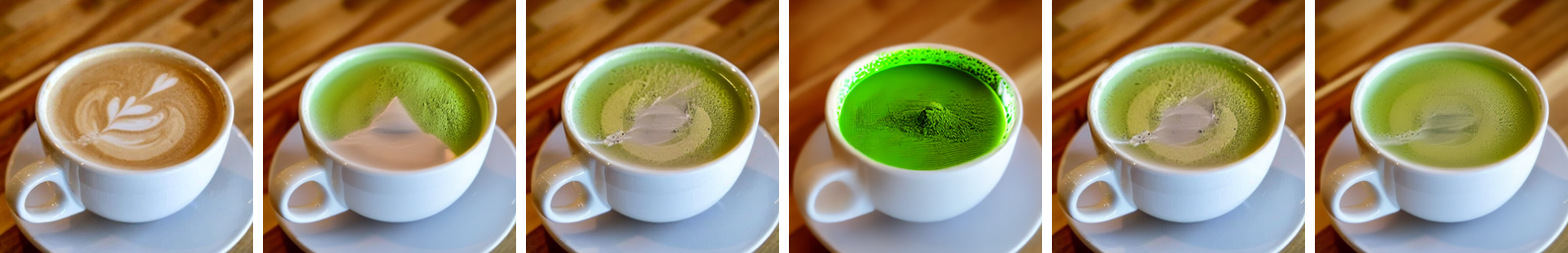}
        };

        \node[
            font=\small,
            anchor=north
        ] (text4) at ($(img4.south) + (0,-3pt)$)
     {
    \tikz[baseline=-1.5ex]{
        \node[
            fill=gray!15,
            rounded corners=2pt,
            inner xsep=3pt,
            inner ysep=1pt
        ] {\textcolor{gray}{\textit{Prompt:}}};
    }
    \;
    \texttt{a cup of \sout{latte} \textcolor{red}{matcha}}
};


        \node[
            inner sep=0,
            anchor=north
        ] (img5) at ($(text4.south) + (0,-3pt)$) {
            \includegraphics[
                width=\linewidth
            ]{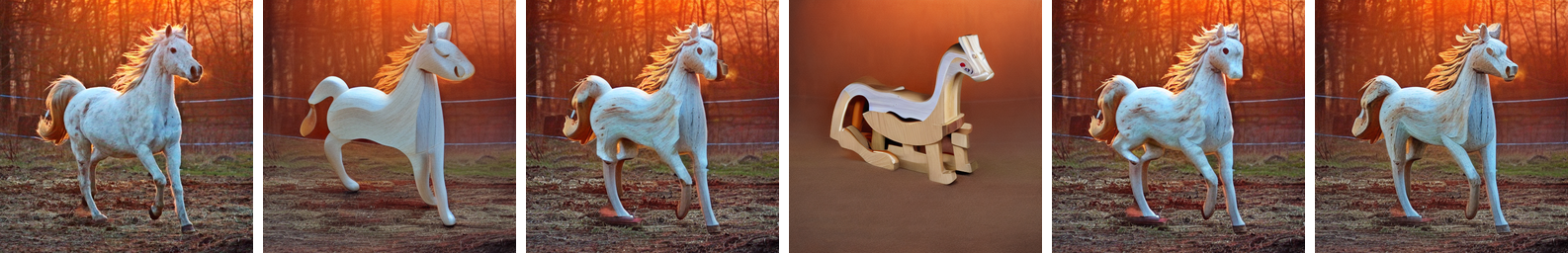}
        };

        \node[
            font=\small,
            anchor=north
        ] (text5) at ($(img5.south) + (0,-3pt)$)
     {
    \tikz[baseline=-1.5ex]{
        \node[
            fill=gray!15,
            rounded corners=2pt,
            inner xsep=3pt,
            inner ysep=1pt
        ] {\textcolor{gray}{\textit{Prompt:}}};
    }
    \;
    \texttt{a running \sout{horse} \textcolor{red}{wooden horse statue}}
};


        \node[
            inner sep=0,
            anchor=north
        ] (img6) at ($(text5.south) + (0,-3pt)$) {
            \includegraphics[
                width=\linewidth
            ]{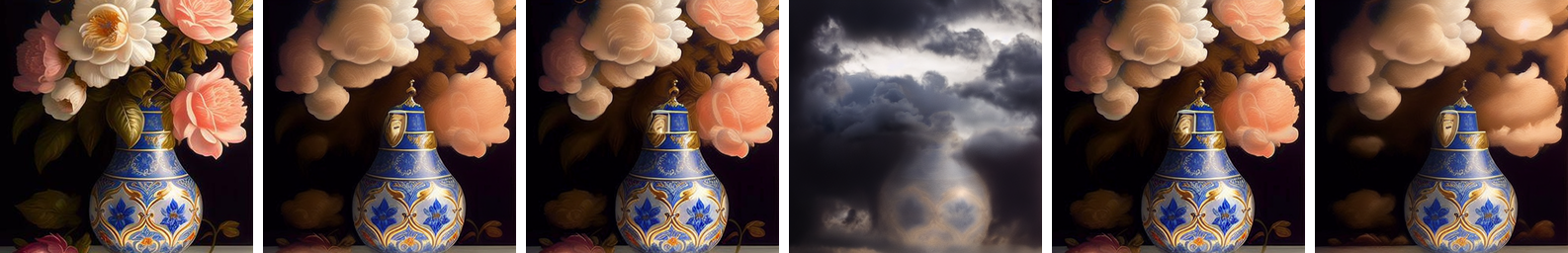}
        };

        \node[
            font=\small,
            anchor=north
        ] (text6) at ($(img6.south) + (0,-3pt)$)
     {
    \tikz[baseline=-1.5ex]{
        \node[
            fill=gray!15,
            rounded corners=2pt,
            inner xsep=3pt,
            inner ysep=1pt
        ] {\textcolor{gray}{\textit{Prompt:}}};
    }
    \;
   \texttt{a flower pot with \sout{flowers} \textcolor{red}{clouds}}
};

    \end{tikzpicture}
\caption{
Additional qualitative comparison of image editing results. Given an input image and its corresponding text prompt, we compare the editing results produced by different methods. Strikethrough and red text indicate the original and edited text, respectively.}
\label{fig:image_edit_appen}
\end{figure}

\newpage
\begin{figure}[H]
    \centering

    \begin{tikzpicture}


        \node[inner sep=0] (img1) {
            \includegraphics[
                width=\linewidth
            ]{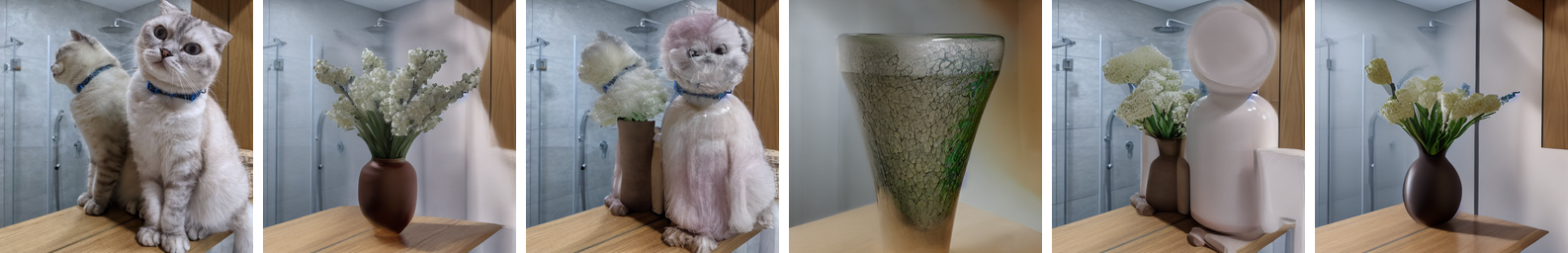}
        };

        \node[
            font=\small
        ] at ($(img1.north west)!1/12!(img1.north east) + (0, 7pt)$)
        {Input};

        \node[
            font=\small
        ] at ($(img1.north west)!3/12!(img1.north east) + (0, 7pt)$)
        {DPS};

        \node[
            font=\small
        ] at ($(img1.north west)!5/12!(img1.north east) + (0, 7pt)$)
        {Freedom};

        \node[
            font=\small
        ] at ($(img1.north west)!7/12!(img1.north east) + (0, 7pt)$)
        {PnPFlow};

        \node[
            font=\small
        ] at ($(img1.north west)!9/12!(img1.north east) + (0, 7pt)$)
        {FlowChef};

        \node[
            font=\small\bfseries
        ] at ($(img1.north west)!11/12!(img1.north east) + (0, 7pt)$)
        {MintFlow};


        \node[
            font=\small,
            anchor=north
        ] (text1) at ($(img1.south) + (0,-3pt)$)
        {
            \tikz[baseline=-1.5ex]{
        \node[
            fill=gray!15,
            rounded corners=2pt,
            inner xsep=3pt,
            inner ysep=1pt
        ] {\textcolor{gray}{\textit{Prompt:}}};
    }
    \;
            \texttt{a \sout{cat} \textcolor{red}{vase} sitting by the shower glass}
        };


        \node[
            inner sep=0,
            anchor=north
        ] (img2) at ($(text1.south) + (0,-3pt)$) {
            \includegraphics[
                width=\linewidth
            ]{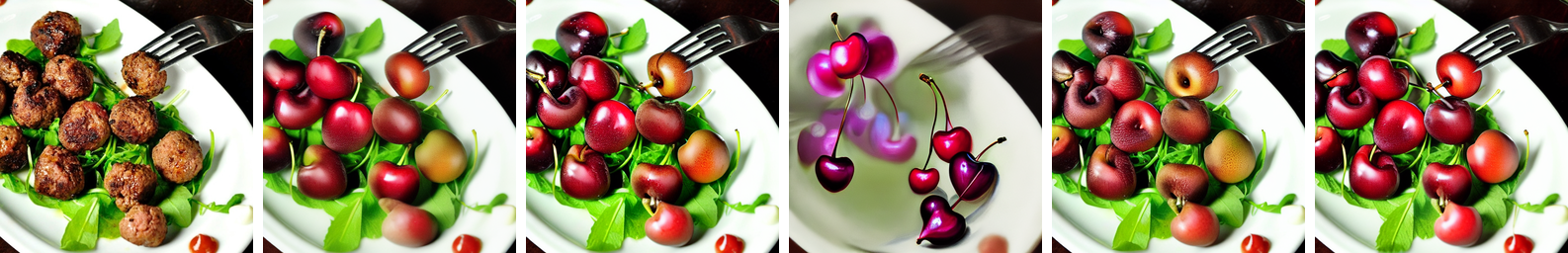}
        };


        \node[
            font=\small,
            anchor=north
        ] (text2) at ($(img2.south) + (0,-3pt)$)
     {
    \tikz[baseline=-1.5ex]{
        \node[
            fill=gray!15,
            rounded corners=2pt,
            inner xsep=3pt,
            inner ysep=1pt
        ] {\textcolor{gray}{\textit{Prompt:}}};
    }
    \;
   \sout{meatballs} \textcolor{red}{cherries} on a plate
};


        \node[
            inner sep=0,
            anchor=north
        ] (img3) at ($(text2.south) + (0,-3pt)$) {
            \includegraphics[
                width=\linewidth
            ]{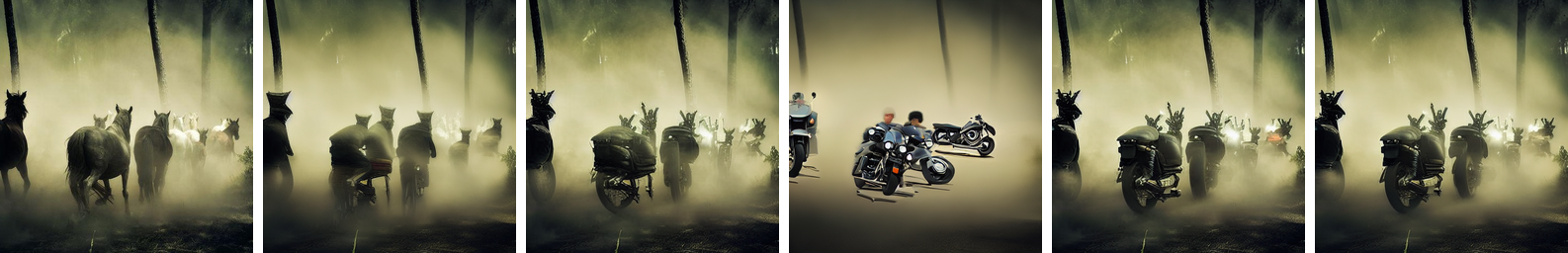}
        };

        \node[
            font=\small,
            anchor=north
        ] (text3) at ($(img3.south) + (0,-3pt)$)
     {
    \tikz[baseline=-1.5ex]{
        \node[
            fill=gray!15,
            rounded corners=2pt,
            inner xsep=3pt,
            inner ysep=1pt
        ] {\textcolor{gray}{\textit{Prompt:}}};
    }
    \;
    \texttt{running \sout{ horses} \textcolor{red}{motorcycles} in a foggy forest}
};


        \node[
            inner sep=0,
            anchor=north
        ] (img4) at ($(text3.south) + (0,-3pt)$) {
            \includegraphics[
                width=\linewidth
            ]{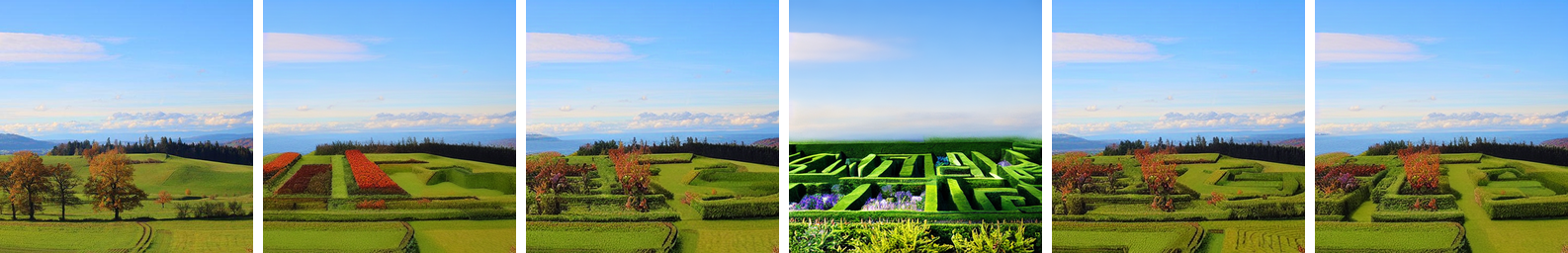}
        };

        \node[
            font=\small,
            anchor=north
        ] (text4) at ($(img4.south) + (0,-3pt)$)
     {
    \tikz[baseline=-1.5ex]{
        \node[
            fill=gray!15,
            rounded corners=2pt,
            inner xsep=3pt,
            inner ysep=1pt
        ] {\textcolor{gray}{\textit{Prompt:}}};
    }
    \;
   \texttt{a \sout{park} \textcolor{red}{maze garden} under a clear blue sky}
};


        \node[
            inner sep=0,
            anchor=north
        ] (img5) at ($(text4.south) + (0,-3pt)$) {
            \includegraphics[
                width=\linewidth
            ]{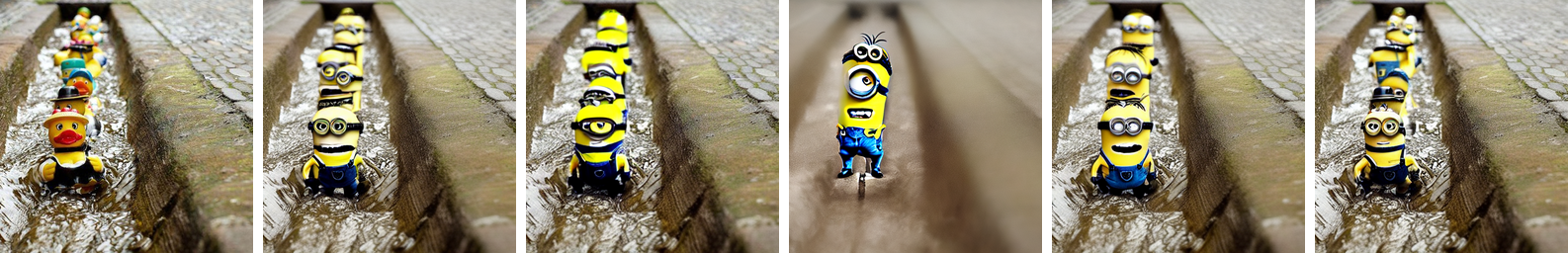}
        };

        \node[
            font=\small,
            anchor=north
        ] (text5) at ($(img5.south) + (0,-3pt)$)
     {
    \tikz[baseline=-1.5ex]{
        \node[
            fill=gray!15,
            rounded corners=2pt,
            inner xsep=3pt,
            inner ysep=1pt
        ] {\textcolor{gray}{\textit{Prompt:}}};
    }
    \;
    \texttt{a row of \sout{rubber ducks} \textcolor{red}{Minions}}
};


        \node[
            inner sep=0,
            anchor=north
        ] (img6) at ($(text5.south) + (0,-3pt)$) {
            \includegraphics[
                width=\linewidth
            ]{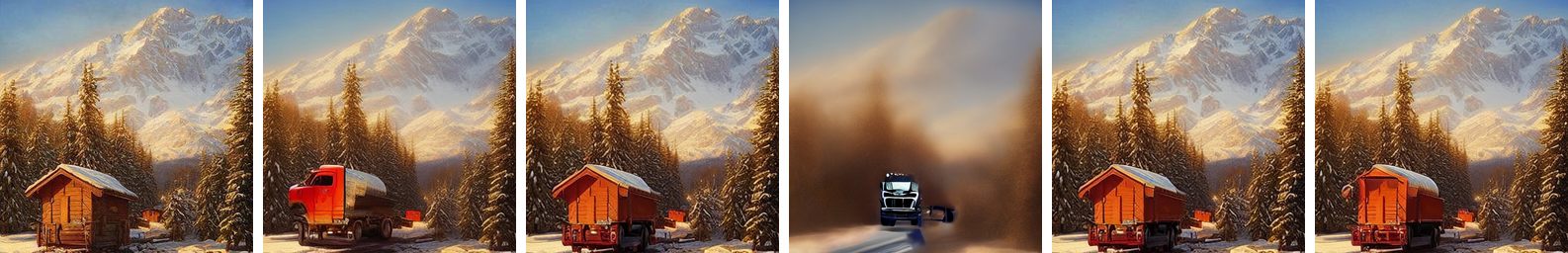}
        };

        \node[
            font=\small,
            anchor=north
        ] (text6) at ($(img6.south) + (0,-3pt)$)
     {
    \tikz[baseline=-1.5ex]{
        \node[
            fill=gray!15,
            rounded corners=2pt,
            inner xsep=3pt,
            inner ysep=1pt
        ] {\textcolor{gray}{\textit{Prompt:}}};
    }
    \;
   \texttt{a red \sout{house} \textcolor{red}{truck} on a snowy mountain}
};

    \end{tikzpicture}
\caption{
Additional qualitative comparison of image editing results. Given an input image and its corresponding text prompt, we compare the editing results produced by different methods. Strikethrough and red text indicate the original and edited text, respectively.}
\label{fig:image_edit_appen2}
\end{figure}
\section{Application 4: Physical System Modeling}
\label{app:pde_details}

This appendix provides the benchmark definitions, model and sampling configurations, evaluation protocol, and additional diagnostics supporting the physical-system results in Section~\ref{sec:physical_systems}. In particular, we separately quantify

\begin{itemize}
    \item satisfaction of the joint conservation, initial-condition, and boundary-condition operator;
    \item set-level agreement with a paired numerical-solution distribution;
    \item local PDE consistency and agreement with method-specific numerical rollouts; and
    \item sensitivity to integration resolution and candidate-time selection.
\end{itemize}

\subsection{Physical Systems}
\label{app:pde_systems}

To distinguish physical time from the generative-flow time $t$ used in
Section~\ref{sec:method}, we denote physical PDE time by $\tau$ throughout this
appendix.

\paragraph{Heat Equation.}
We consider
\begin{equation}
    \partial_\tau u
    =
    \nu\,\partial_{xx}u,
    \qquad
    x\in[0,2\pi),
    \qquad
    \tau\in[0,1].
    \label{eq:heat_pde}
\end{equation}
The analytic solution family used for data generation is
\begin{equation}
    u(x,\tau)
    =
    \sin(x+\phi)\exp(-\nu\tau),
    \qquad
    \nu\sim\mathcal{U}[1,5],
    \qquad
    \phi\sim\mathcal{U}[0,\pi].
    \label{eq:heat_solution}
\end{equation}
The training loader draws $5{,}000$ trajectories on demand from this family;
the configured test split contains $1{,}000$ additional draws. Fields are
represented as $(n_x,n_\tau)=(100,100)$ arrays on an endpoint-excluded
periodic spatial grid and an endpoint-inclusive time grid. These analytic
training trajectories are distinct from the benchmark reference pool defined
below, which solves the PDE from FFM-derived initial conditions.

\paragraph{Reaction-Diffusion Equation.}
The RD1D benchmark follows
\begin{equation}
    \partial_\tau u
    =
    \nu\,\partial_{xx}u
    +
    \rho u(1-u),
    \qquad
    \nu=5\times10^{-3},
    \qquad
    \rho=10^{-2},
    \label{eq:rd1d_pde}
\end{equation}
on
\begin{equation}
    x\in[0,1],
    \qquad
    \tau\in[0,0.99].
\end{equation}
Fields are represented on a $128\times100$ space--time grid.
The spatial coordinates are cell centers of the physical domain $[0,1]$, and
the boundary condition is specified by constant left and right Neumann fluxes
$g_L\in[0,0.05]$ and $g_R\in[-0.05,0]$. The persisted training set is the
Cartesian product of $80$ initial conditions and $80$ flux pairs
($6{,}400$ trajectories); the test set uses $16\times16=256$ trajectories.
Both were generated by the finite-volume solver described below. For each
benchmark source-noise ID, the IC and fluxes used as sampling targets are
derived once from the corresponding Vanilla trajectory and then held
fixed across all methods.

\subsection{Physical Constraint}
\label{app:constraints}
Our physical benchmark jointly imposes conservation, initial-condition, and boundary-condition consistency. We write the constraint operator from Section~\ref{sec:method} as
\begin{equation}
    H(u)
    =
    \begin{bmatrix}
        h_{\mathrm{cons}}(u)\\
        h_{\mathrm{IC}}(u)\\
        h_{\mathrm{BC}}(u)
    \end{bmatrix}.
    \label{eq:joint_H}
\end{equation}
MintFlow applies the trajectory intervention of (\ref{eq:optimal_delta}) to this joint operator rather than correcting the three components independently. The benchmark reports the sample-level statistic
\begin{equation}
    \|H(u)\|_\infty
    =
    \max_j |H_j(u)|,
    \label{eq:joint_max_residual}
\end{equation}
after the task-specific normalizations described below.

\paragraph{Initial condition.}
Let $u_{\mathrm{IC}}(x)$ denote the prescribed initial field associated with
the aligned benchmark sample.
The IC residual is
\begin{equation}
    h_{\mathrm{IC}}(u)
    =
    u(\cdot,0)-u_{\mathrm{IC}}.
    \label{eq:joint_ic}
\end{equation}
The target $u_{\mathrm{IC}}$ is copied from the aligned Vanilla sample at
$\tau=0$. Residuals are evaluated in the raw, unnormalized field; component
scales are applied only when constructing the combined statistic in
(\ref{eq:joint_max_residual}).

\paragraph{Heat conservation.}
Define the discrete spatial mass
\begin{equation}
    M(\tau_k)
    =
    \Delta x
    \sum_{i=1}^{n_x}
    u(x_i,\tau_k),
    \qquad
    \Delta x=\frac{2\pi}{100}.
\end{equation}
The conservation residual is
\begin{equation}
    h_{\mathrm{cons},k}(u)
    =
    M(\tau_k)-M(\tau_0),
    \qquad
    k=1,\ldots,n_\tau-1.
    \label{eq:heat_mass_constraint}
\end{equation}
This is the periodic rectangle rule on the endpoint-excluded grid. It compares
every generated time slice with the generated $\tau=0$ slice; the separate IC
block anchors that slice to $u_{\mathrm{IC}}$. The Heat operator therefore has
$99$ mass residuals and $100$ IC residuals, for a total dimension of $199$.

\paragraph{Heat boundary condition.}
Periodicity is enforced structurally by the Heat representation and therefore
does not introduce an independent residual:
\begin{equation}
    h_{\mathrm{BC}}(u)\equiv0.
    \label{eq:heat_bc_constraint}
\end{equation}
Consequently, the Heat statistic is determined by conservation and IC
consistency.

\paragraph{RD1D global balance.}
Let
\begin{equation}
    M(\tau)
    =
    \int_\Omega u(x,\tau)\,dx.
\end{equation}
The global-balance residual at time $\tau_k$ is
\begin{equation}
\begin{aligned}
    h_{\mathrm{cons},k}(u)
    &=
    M(\tau_k)-M(\tau_0)\\
    &\quad-
    \int_0^{\tau_k}
    \left[
        \rho\int_\Omega u(x,\tau)(1-u(x,\tau))\,dx
        +
        g_L(\tau)-g_R(\tau)
    \right]d\tau .
\end{aligned}
\label{eq:rd1d_balance_constraint}
\end{equation}
Spatial integration uses the cell-centered rectangle rule and temporal
accumulation uses the trapezoidal rule. The fluxes in
(\ref{eq:rd1d_balance_constraint}) are the prescribed external fluxes,
not values re-estimated from the generated field. The nonlinear reaction term
makes this component of $H$ nonlinear in $u$.

\paragraph{RD1D boundary condition.}
Let $\widehat g_L(u,\tau)$ and $\widehat g_R(u,\tau)$ denote the configured
five-point boundary-flux observables evaluated on the stored cell-centered
field.
The BC residual is
\begin{equation}
    h_{\mathrm{BC}}(u)
    =
    \begin{bmatrix}
        \widehat g_L(u,\tau)-g_L(\tau)\\
        \widehat g_R(u,\tau)-g_R(\tau)
    \end{bmatrix}_{\tau}.
    \label{eq:rd1d_bc_constraint}
\end{equation}
The five-point one-sided stencil is evaluated for positive-time slices only;
the classical corner compatibility condition at $\tau=0$ is not imposed. The
benchmark normalizes the two flux blocks by their physical range, $0.05$,
before constructing the joint operator:
\begin{equation}
    H_{\mathrm{RD1D}}(u)
    =
    \begin{bmatrix}
        h_{\mathrm{cons}}(u)\\
        h_{\mathrm{IC}}(u)\\
        h_{\mathrm{BC}}(u)/0.05
    \end{bmatrix}.
    \label{eq:rd1d_joint_operator}
\end{equation}
The global-balance and IC blocks have unit scale. Thus the RD1D operator has $99$ global-balance, $128$ IC, and $2\times99$ BC residuals, for a total dimension of $425$. The boundary observable used by the constraint is defined on stored cell centers; it is not identical to the external face-flux variable used internally by the finite-volume reference solver.

\subsection{Flow-Matching Model and Baselines}
\label{app:pde_model}

Each PDE has a separately pretrained checkpoint with the same architecture: a four-layer Fourier Neural Operator with $32$ Fourier modes per dimension, $64$ hidden channels, $256$-channel lifting and readout layers, and a $32$-dimensional sinusoidal embedding of generative-flow time. The two grid coordinates and time embedding are concatenated with the state, yielding $35$ input channels and one output channel. Direct instantiation of the current model gives $17{,}884{,}993$ trainable parameters.

The generative model is trained for $20{,}000$ optimization iterations using conditional flow matching. Given a data sample $u$ and source sample $z$, the interpolant and target velocity are
\begin{equation}
    u_s=(1-s)z+s u,
    \qquad
    v^\star=u-z,
    \qquad
    s\sim\mathcal{U}[0,1].
\end{equation}
The source $z$ is drawn from a Mat\'ern-$\frac{1}{2}$ Gaussian process with length scale $10^{-3}$ and unit output scale on the normalized two-dimensional grid. Training uses Adam with batch size $256$, learning rate $3\times10^{-4}$, and gradient-norm clipping at $100$. Both retained checkpoints are from iteration $20{,}000$.

We compare Vanilla sampling against PCFM, ECI, DiffusionPDE, D-Flow, and MintFlow. All constrained methods act only during generation and introduce no learned parameters. Within an equation, every method uses the same checkpoint, source-noise tensor, and sample-indexed constraint target. The paper reports each approach as a single method; implementation-level MintFlow choices are examined separately in the ablation study.

Vanilla uses adaptive Dopri5 integration with $100$ requested evaluation points for Heat and $200$ for RD1D. All fixed-step samplers---PCFM, ECI, DiffusionPDE, D-Flow, and MintFlow---use $100$ Euler steps on Heat and $200$ on RD1D. Thus the integration resolution is aligned across methods within each physical system. DiffusionPDE uses guidance strength $1$ on Heat and $10^{-3}$ on RD1D. D-Flow performs $20$ L-BFGS iterations with learning rate $1$. Standard ECI uses one mixing step, and standard PCFM uses $10$ Newton updates per Heat flow step and one per RD1D flow step.

The MintFlow configuration reported in Tables~\ref{tab:physical_constraints_benchmark}
and~\ref{tab:app_physical_constraints_benchmark}
uses $100$ Euler steps on Heat and $200$ on RD1D, with $K=10$ candidate
times in $[0.1,0.98]$, the end-biased warp
$s=s_{\min}+(s_{\max}-s_{\min})[1-(1-q)^2]$, $\lambda=10^{-2}$,
and correction scale $\gamma=1$. These details are provided for reproducibility
and do not denote separate methods in the main comparison.

\subsection{Evaluation Protocol}
\label{app:pde_metrics}

Each benchmark contains $N=1000$ samples per method. Heat uses source-noise seed $20260811$, and RD1D uses seed $20260820$. Reported means and standard deviations therefore summarize sample-to-sample variation in one fixed-seed run; they are not uncertainty estimates across training runs or sampling seeds.

\paragraph{Target bank and numerical reference.}
For each equation, Vanilla is first evaluated on the shared source-noise bank. Its initial slices define the $1{,}000$ IC targets. For Heat, a diffusivity in $[1,5]$ is fitted once from each complete Vanilla trajectory; for RD1D, constant Neumann fluxes are fitted once and clipped to the training ranges. These latent quantities are held fixed across methods at a given sample index. The numerical reference pool then solves each of these $1{,}000$ physical problems: Heat uses the periodic Fourier semigroup and RD1D uses the data generator's exact-reaction, explicit-midpoint finite-volume scheme. Thus the reference is neither the unconditional training distribution nor an unconstrained FFM pool. It is a solver distribution conditioned on the same finite bank of FFM-derived ICs, BCs, and latent physical parameters used by the constrained samplers. Set-level metrics ignore index pairing after constructing the two sample sets.

\paragraph{Constraint satisfaction.}
For sample $i$, $R_\infty(u_i)=\|H(u_i)\|_\infty$ is the largest absolute
entry after the RD1D BC normalization above. The main table reports its sample
mean, while Table~\ref{tab:app_physical_constraints_benchmark} additionally
reports the sample standard deviation.

\paragraph{Set-level distribution metrics.}
We avoid image-specific feature extractors by fitting one frozen, quadrature-weighted, mean-centered proper orthogonal decomposition (POD) to the $1{,}000$ numerical reference trajectories. The transform weights each flattened coordinate by the square root of its tensor-product physical quadrature weight, subtracts the weighted reference mean, and projects onto the leading right singular vectors. No whitening, nonlinear feature map, clipping, or method-specific refitting is applied. With a $99.9\%$ variance threshold and at most $20$ components, the retained dimensions are $5$ for Heat ($99.9463\%$ variance) and $14$ for RD1D ($99.9429\%$ variance).

Let $a_i$ and $b_j$ denote generated and reference POD coefficients. All
distances are evaluated in float64. FID is the Gaussian Fr\'echet distance
\begin{equation}
    \|\mu_a-\mu_b\|_2^2
    +\operatorname{tr}\!\left(
        C_a+C_b-2(C_a^{1/2}C_bC_a^{1/2})^{1/2}
    \right).
    \label{eq:pod_fid}
\end{equation}
It is an FID analogue in POD coordinates, not Inception-feature FID. KID is the unbiased two-sample MMD$^2$ estimator with $k(a,b)=(a^\top b/d+1)^3$; finite-sample estimates may be slightly negative. MMD is the square root of the nonnegative biased MMD$^2$ using the average of five RBF kernels. Their bandwidths are $\{0.25,0.5,1,2,4\}$ times the median pairwise distance fitted once on the reference coefficients. Finally, because both sets have equal size, empirical $W_2$ is computed exactly as
\begin{equation}
    W_2
    =\left[
        \min_{\pi\in S_N}\frac{1}{N}
        \sum_{i=1}^{N}\|a_i-b_{\pi(i)}\|_2^2
    \right]^{1/2}
    \label{eq:pod_w2}
\end{equation}
using the Hungarian assignment algorithm. These four quantities measure agreement with the conditioned numerical distribution. They do not directly estimate divergence from the unconstrained FFM distribution; Vanilla is reported as an evaluated comparator, not used as the metric reference.

\paragraph{Physical consistency.}
The PDE residual is the root mean square of a finite-difference operator in the raw field. Heat uses a second-order periodic spatial Laplacian and a second-order numerical temporal derivative; RD1D uses the corresponding second-order interior spatial stencil. For the simulation error, every method uses its own generated initial slice and the physical parameters and BCs fixed for that sample index:
\begin{equation}
    E_{\mathrm{sim}}(u_i)
    =\frac{\|u_i-u_i^{\mathrm{solver}}\|_2}
           {\|u_i^{\mathrm{solver}}\|_2}.
    \label{eq:simulation_error}
\end{equation}
This metric tests dynamical self-consistency and does not by itself test
whether the generated IC matches its prescribed target. The main table reports mean simulation error. Table~\ref{tab:app_physical_constraints_benchmark}
additionally reports the finite-difference PDE residual and sample standard deviations for both physical-consistency metrics.

\begin{table*}[h]
\centering
\caption{Physical System Modeling Benchmark under initial conditions, boundary conditions, and conservation constraints.
Performance evaluation across $N=1000$ generated samples on the Heat Equation (System 1) and Reaction-Diffusion Equation (System 2). 
The benchmark highlights the trade-off between \emph{Constraint Satisfaction} and \emph{Distribution Preservation}. 
While strict-constraint baselines (ECI, PCFM) achieve near-zero constraint residuals, they severely degrade distributional quality. 
In contrast, MintFlow achieves an optimal Pareto frontier, enforcing physical constraints while preserving the pre-trained data distribution. 
For all metrics, lower values ($\downarrow$) indicate superior performance.
}
\label{tab:app_physical_constraints_benchmark}
\resizebox{\textwidth}{!}{
\begin{tabular}{lccccccc}
\toprule
&
\multicolumn{4}{c}{\textbf{Distribution Fidelity}{$^\dagger$}}
&
\multicolumn{1}{c}{\textbf{Constraint}{$^{\dagger\dagger}$}}
&
\multicolumn{2}{c}{\textbf{Physical Consistency}{$^{\dagger\dagger\dagger}$}}
\\
\cmidrule(lr){2-5}
\cmidrule(lr){6-6}
\cmidrule(lr){7-8}
\textbf{Method}
& \textbf{FID} $\downarrow$
& \textbf{KID} $\downarrow$
& \textbf{MMD} $\downarrow$
& $\mathbf{W_2}\downarrow$
& $\mathbf{H}\downarrow$
& \textbf{PDE Res.}  $\downarrow$
& \textbf{Sim. Error} $\downarrow$
\\
\midrule

\multicolumn{8}{l}{\textbf{\textit{System 1: Heat Equation}}}
\\

Vanilla FFM
& \cellcolor{red!10}$1.255\mathrm{E}{-3}$
& \cellcolor{red!10}$-3.17\mathrm{E}{-4}$
& \cellcolor{red!10}$2.496\mathrm{E}{-2}$
& $4.872\mathrm{E}{-2}$
& $5.16\mathrm{E}{-3}\!\pm\!2.19\mathrm{E}{-3}$
& $1.103\mathrm{E}{0}\!\pm\!5.08\mathrm{E}{-1}$
& $5.57\mathrm{E}{-2}\!\pm\!6.15\mathrm{E}{-2}$
\\

DiffusionPDE
& $4.712\mathrm{E}{-3}$
& $1.019\mathrm{E}{-3}$
& $5.361\mathrm{E}{-2}$
& $7.431\mathrm{E}{-2}$
& $1.86\mathrm{E}{-2}\!\pm\!2.20\mathrm{E}{-2}$
& $1.044\mathrm{E}{0}\!\pm\!6.98\mathrm{E}{-1}$
& $8.49\mathrm{E}{-2}\!\pm\!2.95\mathrm{E}{-2}$
\\

D-Flow
& $1.290\mathrm{E}{-3}$
& $-2.10\mathrm{E}{-4}$
& $2.554\mathrm{E}{-2}$
& \cellcolor{red!10}$4.098\mathrm{E}{-2}$
& $4.71\mathrm{E}{-3}\!\pm\!2.08\mathrm{E}{-3}$
& \cellcolor{red!10}$7.82\mathrm{E}{-1}\!\pm\!3.76\mathrm{E}{-1}$
& \cellcolor{red!10}$4.34\mathrm{E}{-2}\!\pm\!2.46\mathrm{E}{-2}$
\\

ECI
& $1.865\mathrm{E}{-1}$
& $4.004\mathrm{E}{-2}$
& $2.067\mathrm{E}{-1}$
& $4.502\mathrm{E}{-1}$
& \cellcolor{red!10}$2.27\mathrm{E}{-7}\!\pm\!6.45\mathrm{E}{-8}$
& $1.554\mathrm{E}{0}\!\pm\!8.78\mathrm{E}{-1}$
& $4.47\mathrm{E}{-1}\!\pm\!2.86\mathrm{E}{-1}$
\\

PCFM
& $1.851\mathrm{E}{-1}$
& $3.965\mathrm{E}{-2}$
& $2.062\mathrm{E}{-1}$
& $4.485\mathrm{E}{-1}$
& \cellcolor{green!15}$1.30\mathrm{E}{-7}\!\pm\!5.28\mathrm{E}{-8}$
& $1.549\mathrm{E}{0}\!\pm\!8.72\mathrm{E}{-1}$
& $4.45\mathrm{E}{-1}\!\pm\!2.85\mathrm{E}{-1}$
\\

\textbf{MintFlow}
& \cellcolor{green!15}$2.91\mathrm{E}{-5}$
& \cellcolor{green!15}$-5.22\mathrm{E}{-4}$
& \cellcolor{green!15}$5.44\mathrm{E}{-3}$
& \cellcolor{green!15}$1.455\mathrm{E}{-2}$
& $3.57\mathrm{E}{-6}\!\pm\!3.69\mathrm{E}{-5}$
& \cellcolor{green!15}$6.77\mathrm{E}{-1}\!\pm\!3.22\mathrm{E}{-1}$
& \cellcolor{green!15}$1.23\mathrm{E}{-2}\!\pm\!2.09\mathrm{E}{-2}$
\\

\midrule

\multicolumn{8}{l}{\textbf{\textit{System 2: Reaction-Diffusion Equation}}}
\\

Vanilla FFM
& \cellcolor{red!10}$1.74\mathrm{E}{-4}$
& \cellcolor{red!10}$-8.11\mathrm{E}{-6}$
& \cellcolor{red!10}$2.130\mathrm{E}{-2}$
& \cellcolor{red!10}$2.078\mathrm{E}{-2}$
& $1.907\mathrm{E}{0}\!\pm\!7.75\mathrm{E}{-1}$
& $2.98\mathrm{E}{-1}\!\pm\!5.9\mathrm{E}{-2}$
& $3.55\mathrm{E}{-2}\!\pm\!1.72\mathrm{E}{-2}$
\\

DiffusionPDE
& $9.34\mathrm{E}{-4}$
& $1.43\mathrm{E}{-4}$
& $6.063\mathrm{E}{-2}$
& $3.712\mathrm{E}{-2}$
& $9.56\mathrm{E}{-1}\!\pm\!2.50\mathrm{E}{-1}$
& $3.43\mathrm{E}{-1}\!\pm\!7.3\mathrm{E}{-2}$
& $3.29\mathrm{E}{-2}\!\pm\!2.05\mathrm{E}{-2}$
\\

D-Flow
& $3.95\mathrm{E}{-4}$
& $1.45\mathrm{E}{-5}$
& $3.861\mathrm{E}{-2}$
& $4.085\mathrm{E}{-2}$
& $4.44\mathrm{E}{-1}\!\pm\!1.40\mathrm{E}{-1}$
& \cellcolor{red!10}$2.45\mathrm{E}{-1}\!\pm\!4.4\mathrm{E}{-2}$
& \cellcolor{red!10}$3.02\mathrm{E}{-2}\!\pm\!2.03\mathrm{E}{-2}$
\\

ECI
& $1.829\mathrm{E}{-2}$
& $2.59\mathrm{E}{-4}$
& $1.299\mathrm{E}{-1}$
& $1.647\mathrm{E}{-1}$
& \cellcolor{green!15}$8.02\mathrm{E}{-6}\!\pm\!2.72\mathrm{E}{-6}$
& $3.250\mathrm{E}{0}\!\pm\!5.866\mathrm{E}{0}$
& $2.28\mathrm{E}{-1}\!\pm\!2.87\mathrm{E}{-1}$
\\

PCFM
& $1.831\mathrm{E}{-2}$
& $2.61\mathrm{E}{-4}$
& $1.301\mathrm{E}{-1}$
& $1.648\mathrm{E}{-1}$
& \cellcolor{red!10}$8.06\mathrm{E}{-6}\!\pm\!2.81\mathrm{E}{-6}$
& $3.255\mathrm{E}{0}\!\pm\!5.871\mathrm{E}{0}$
& $2.28\mathrm{E}{-1}\!\pm\!2.87\mathrm{E}{-1}$
\\

\textbf{MintFlow}
& \cellcolor{green!15}$1.09\mathrm{E}{-4}$
& \cellcolor{green!15}$-1.47\mathrm{E}{-5}$
& \cellcolor{green!15}$1.342\mathrm{E}{-2}$
& \cellcolor{green!15}$1.870\mathrm{E}{-2}$
& $3.18\mathrm{E}{-4}\!\pm\!2.16\mathrm{E}{-4}$
& \cellcolor{green!15}$2.28\mathrm{E}{-1}\!\pm\!3.0\mathrm{E}{-2}$
& \cellcolor{green!15}$2.58\mathrm{E}{-2}\!\pm\!1.40\mathrm{E}{-2}$
\\

\bottomrule
\end{tabular}
}
\begin{tablenotes}
\footnotesize 
\item $\dagger$ Distribution fidelity evaluates closeness of generated samples to the numerical reference trajectory distribution in a quadrature-weighted, mean-centered POD space. Slightly negative KID values occur near zero due to unbiased polynomial sampling noise.
\item $\dagger\dagger$ $H$ denotes the combined normalized maximum absolute constraint residual (\textbf{$H$}) across conservation laws, initial conditions (IC), and boundary conditions (BC), reported as $\text{mean} \pm \text{std. dev.}$. Heat periodicity is structural, while reaction-diffusion boundary fluxes are normalized by $0.05$.
\item $\dagger\dagger\dagger$ PDE Res. denotes the finite-difference PDE operator residual. Sim. Error measures relative $\ell_2$ trajectory error when integrating a numerical solver forward from each sample's generated initial condition.
\end{tablenotes}
\end{table*}

\subsection{Full Quantitative Results}
\label{app:pde_quantitative}
The expanded results reinforce the trade-off visible in the main table. MintFlow has the smallest point estimate for all four distribution metrics and both physical-consistency metrics on both systems, while ECI and PCFM obtain the smallest constraint residuals. Their stronger explicit constraint satisfaction is accompanied by substantially larger distribution distances, PDE residuals, and simulation errors. D-Flow is the strongest competing baseline on physical consistency, but its constraint residual remains much larger than those of MintFlow, ECI, and PCFM. The standard deviations further show that mean performance can conceal heterogeneous behavior: the standard deviation of MintFlow's Heat constraint residual exceeds its mean, while the RD1D PDE-residual standard deviations of ECI and PCFM exceed their corresponding means. By contrast, MintFlow's RD1D PDE residual and simulation error have comparatively small deviations. Finally, the negative KID values are valid finite-sample outcomes of the unbiased estimator and should be interpreted as estimates close to zero, not as negative distances. All distributional conclusions refer to the conditioned numerical reference rather than directly to the unconditional FFM distribution.

\subsection{Additional Qualitative Results}
\label{app:pde_qualitative}
Figures~\ref{fig:heat_additional_qualitative} and~\ref{fig:rd1d_additional_qualitative} provide fixed-index examples from the retained benchmark runs. For every method and sample, the upper row shows the generated space--time field. We initialize the numerical PDE solver with the initial condition extracted from that field and use the sample-indexed physical parameters and boundary conditions. The lower row reports the pointwise absolute discrepancy from this method-specific numerical trajectory, corresponding to (\ref{eq:simulation_error}) before spatial and temporal aggregation. These plots separate explicit constraint satisfaction from dynamical self-consistency: a field can have a small value of $\|H(u)\|_\infty$ while deviating from the PDE solution implied by its own initial condition. ECI and PCFM show broad errors in several selected cases, whereas MintFlow is closer to its corresponding rollout. Because the panels contain selected samples and omit some evaluated variants, they are diagnostic rather than evidence of set-level dominance.

\begin{figure}[h]
    \centering
    \includegraphics[width=\linewidth]{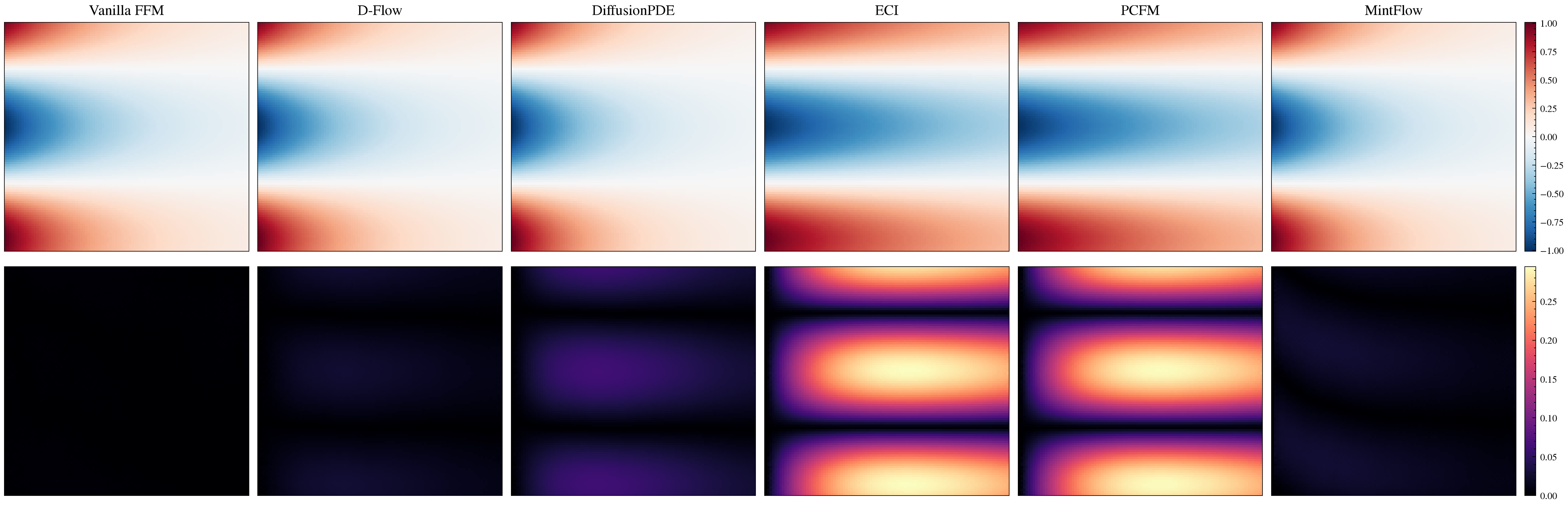}
    \caption{Additional Heat qualitative results. For each example, the first row shows the generated spacetime field for each method. The second row shows the pointwise absolute error to a numerical Heat trajectory initialized from that generated sample's own initial condition.}
    \label{fig:heat_additional_qualitative}
\end{figure}

\begin{figure}[h]
    \centering
    \includegraphics[width=\linewidth]{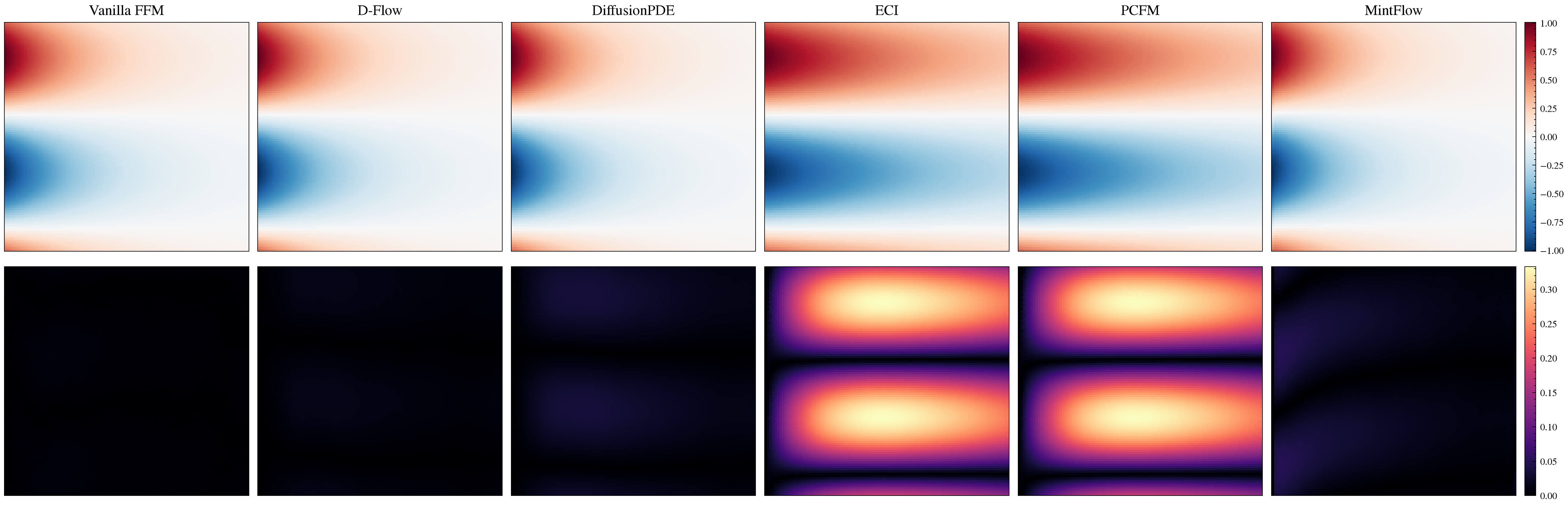}
    \caption{Additional Heat qualitative results.}
    \label{fig:heat_additional_qualitative}
\end{figure}

\begin{figure}[h]
    \centering
    \includegraphics[width=\linewidth]{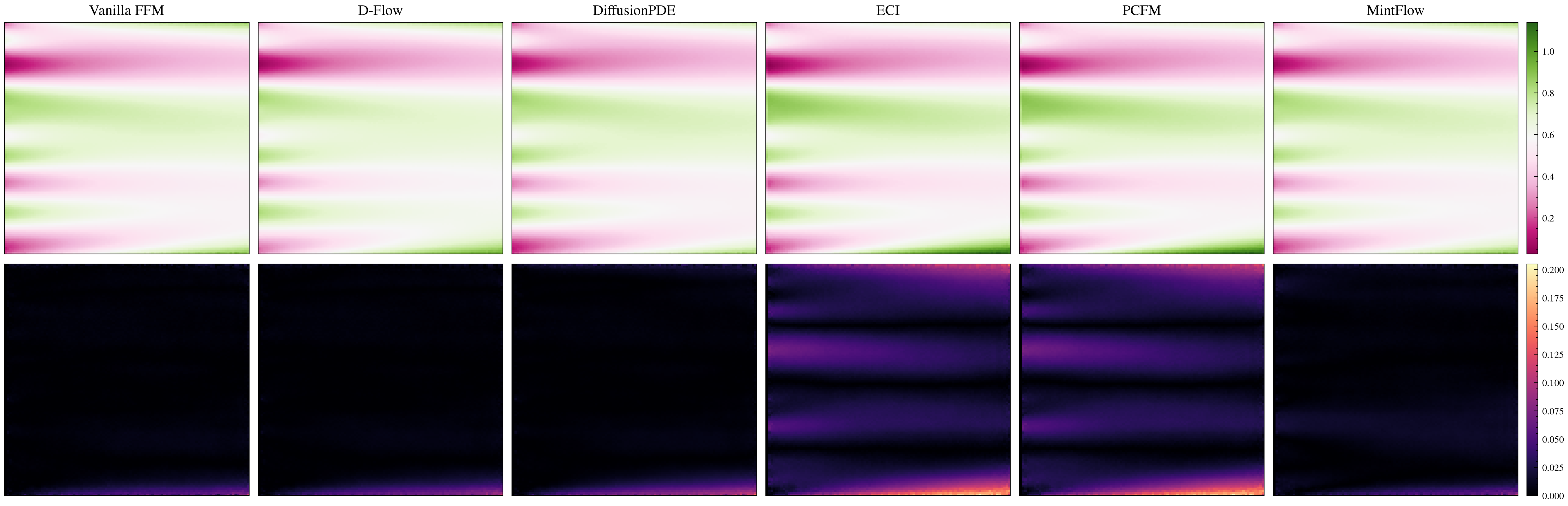}
    \includegraphics[width=\linewidth]{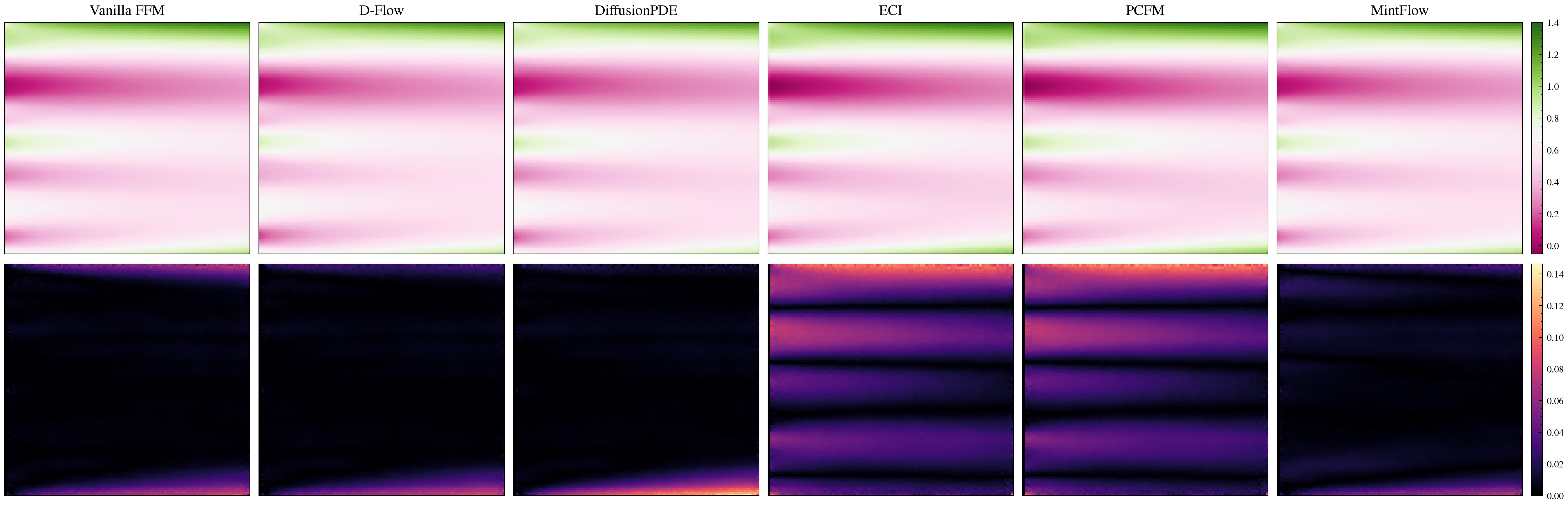}
    \includegraphics[width=\linewidth]{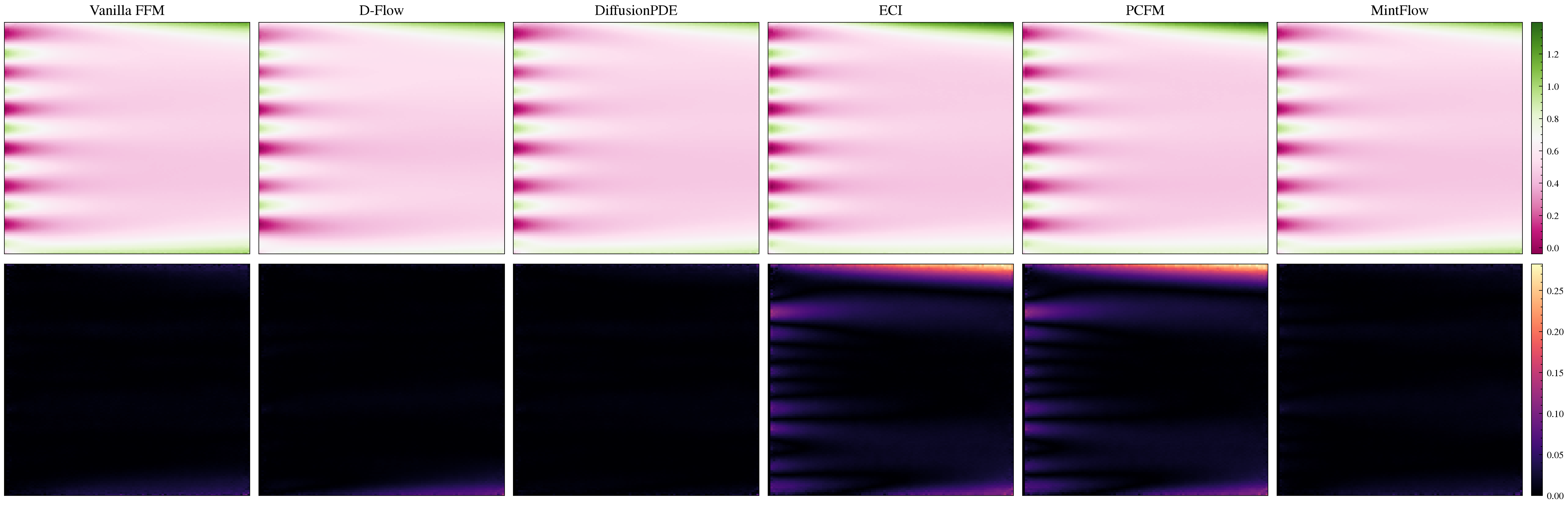}
    \caption{Additional RD1D qualitative results.}
    \label{fig:rd1d_additional_qualitative}
\end{figure}

\newpage
\section{Ablation Studies}\label{app:pde_ablation}

\subsection{Ablation on Integration Resolution}
We ablate MintFlow on the Heat benchmark using the same $N=1000$ source-noise bank, target bank, numerical reference pool, and frozen POD transform as in the main comparison. We vary only sampling integration resolution, holding all other sampler settings fixed. Consequently, differences
across configurations cannot be attributed to resampling or refitting the
representation.

As shown in Figure~\ref{fig:ablation_heat}, Increasing the number of Euler steps of flow matching model from $50$ to $100$, $200$, and $400$
monotonically improves FID, the constraint residual, and same-IC simulation
error. 
Importantly, the discretization of Euler steps affects not only the forward sampling integration error but also the numerical error in solving the backward adjoint equation. A finer discretization therefore yields both a sampling trajectory and a adjoint that more closely approximate their continuous-time counterparts. This leads to a more faithful preservation of the pretrained trajectory and, consequently, improved distributional fidelity, while also providing a more accurate local correction direction for enforcing the terminal constraint. Nevertheless, the framework remains relatively robust across different resolutions, with more noticeable degradation only when fewer than 100 sampling steps are used.

The finite-difference PDE residual improves through $200$ steps but
rises slightly at $400$ steps. The headline Heat comparison nevertheless uses
$100$ steps to match the integration resolution across all fixed-step methods;
the higher-resolution settings quantify numerical sensitivity rather than
define the reported operating point. The headline absolute scores remain those in
Tables~\ref{tab:physical_constraints_benchmark}
and~\ref{tab:app_physical_constraints_benchmark}; the ablation is used to
assess relative sensitivity to individual design choices.

\subsection{Ablation on Candidate Time Selection and optimal-time regularization.}
We further ablate the regularization parameter $\lambda$ in the objective function $J$ in (\ref{eq:optimal_time}) for adaptive intervention time selection. We use the same data and experimental setup as in the preceding section, keeping the source-noise bank, target bank, numerical reference pool, and frozen POD transform fixed. We vary only $\lambda$, while holding all other sampler settings fixed. Consequently, differences across configurations cannot be attributed to resampling or refitting the representation.

Reults in Figure~\ref{fig:ablation_heat} shows that varying $\lambda$ reveals a trade-off between constraint satisfaction and distribution preservation. 
Since $\lambda$ does not directly regularize the intervention magnitude but instead determines the preferred intervention time $s$, a larger $\lambda$ favors later interventions. When the intervention occurs very close to the terminal time, the correction increasingly resembles a direct perturbation of the terminal sample toward the constraint manifold, leaving less opportunity for the pretrained flow to absorb the intervention. Although such a correction can more strongly enforce the constraint, a large perturbation may move the sample farther from the original terminal distribution $\nu_T$. In practice, the performance remains relatively stable for a moderate range of $\lambda$, indicating that the proposed objective provides a robust balance between constraint satisfaction and distributional preservation. 
The weak sensitivity at small $\lambda$ indicates that the candidate grid, rather
than the regularization term, governs the selected correction times in this
regime.

\subsection{Ablation on Intervention steps}
We conduct an ablation study on the number of intervention steps to investigate how our method responds to repeated trajectory interventions. In image editing, the generated image must simultaneously follow the target prompt and satisfy the constraint, making a single intervention generally insufficient to achieve both objectives. Therefore, in all preceding experiments, we consistently use $N=20$ intervention steps, following the setting of baseline methods \citep{patel2024flowchef}. In this section, we vary the number of intervention steps $N$ and examine its effect on the editing results.

As shown in Fig.~\ref{fig:intervention_steps_ablation}, using too few intervention steps often produces images that are aligned with the target prompt but deviate substantially from the input image, indicating that a single or small number of interventions is insufficient to generate an image that satisfies the target prompt while preserving the input constraint. As $N$ increases, the editing results progressively become more consistent with the input image while continuing to satisfy the target prompt. After a certain number of intervention steps, the results converge and remain largely unchanged despite further increasing $N$, suggesting that a moderate number of interventions is sufficient to achieve the desired editing effect.

Importantly, even when a large number of interventions are applied, our method continues to generate plausible images without departing from the learned distribution. This indicates that the proposed minimal-intervention strategy does not introduce excessive perturbations that could push the generated samples out of the learned data distribution. Overall, these observations demonstrate the effectiveness and stability of our intervention strategy, while also providing practical guidance on the number of interventions required for image editing.

\begin{figure*}[t]
    \centering

    \setlength{\tabcolsep}{2pt}

    \begin{tabular}{ccccccc}
        Input &
        $N=5$ &
        $N=10$ &
        $N=15$ &
        $N=20$ &
        $N=25$ &
        $N=30$
        \\[3pt]
        \includegraphics[width=0.137\linewidth]{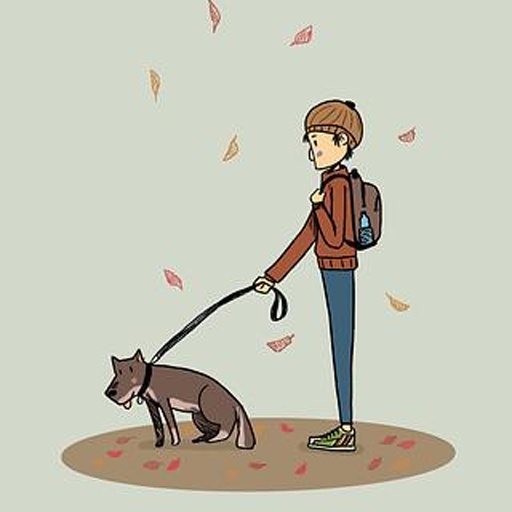} &
        \includegraphics[width=0.137\linewidth]{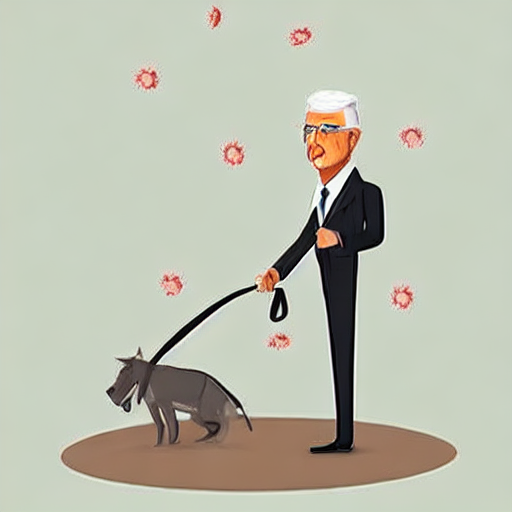} &
        \includegraphics[width=0.137\linewidth]{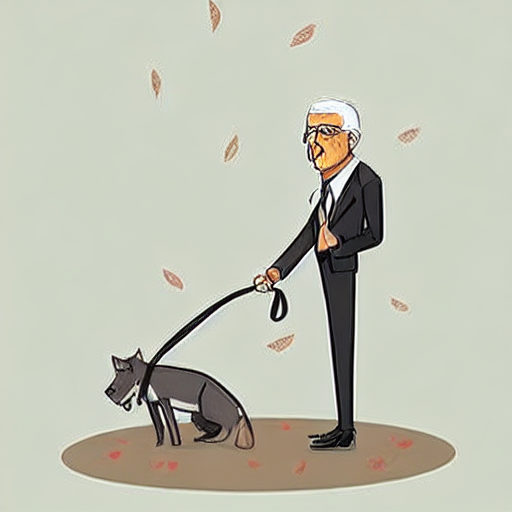} &
        \includegraphics[width=0.137\linewidth]{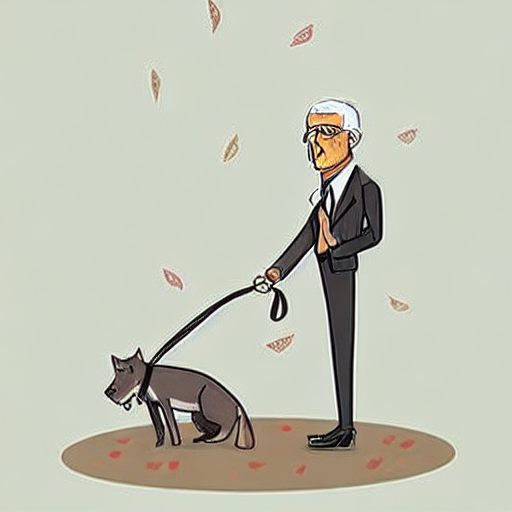} &
        \includegraphics[width=0.137\linewidth]{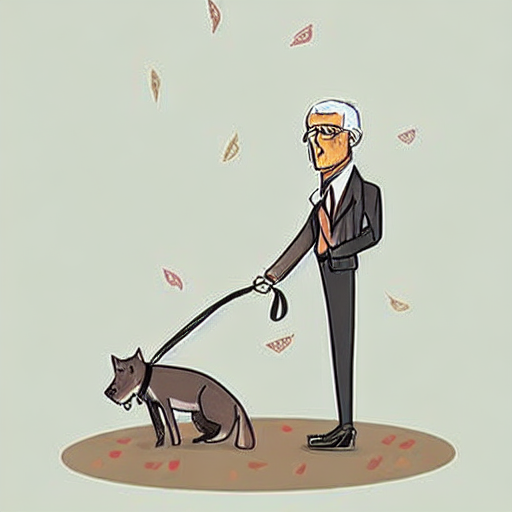} &
        \includegraphics[width=0.137\linewidth]{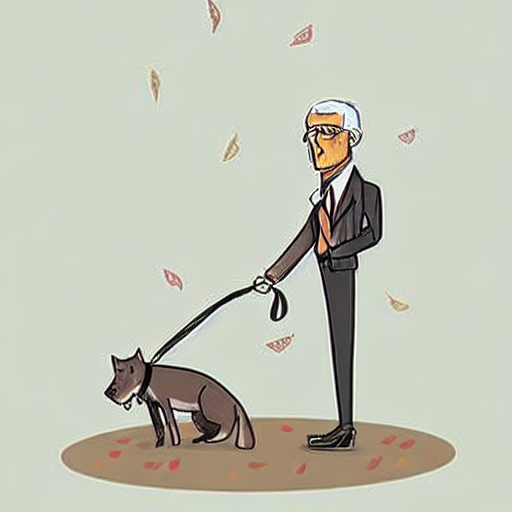} &
        \includegraphics[width=0.137\linewidth]{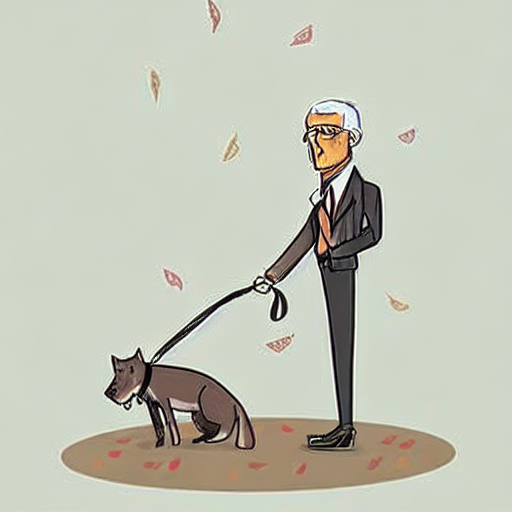}
    \end{tabular}

    \vspace{5pt}

    \begin{tikzpicture}[baseline=-0.5ex]
        \node[
            fill=gray!15,
            rounded corners=2pt,
            inner xsep=4pt,
            inner ysep=2pt
        ] {
            \textcolor{gray}{\textit{Prompt:}}
        };
    \end{tikzpicture}
    \;
    \texttt{a \sout{boy wearing a sweater} \textcolor{red}{elderly man wearing a suit} holding a leash with a dog}

\vspace{10pt}

\begin{tabular}{ccccccc}
    \includegraphics[width=0.137\linewidth]{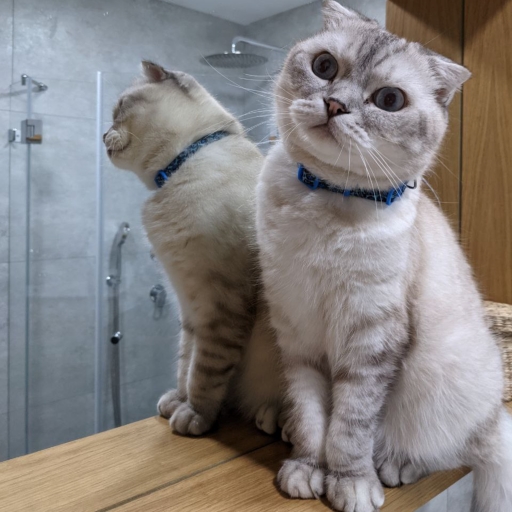} &
    \includegraphics[width=0.137\linewidth]{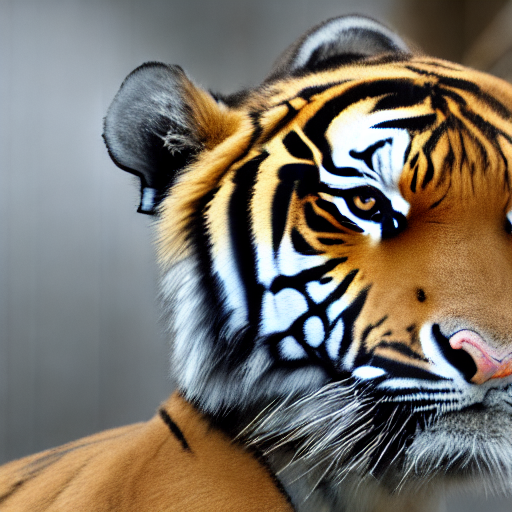} &
    \includegraphics[width=0.137\linewidth]{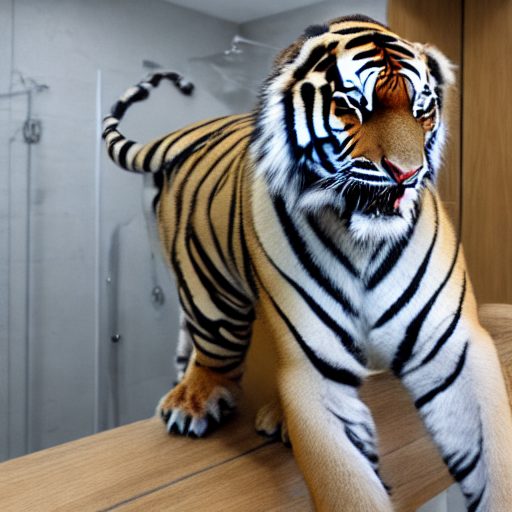} &
    \includegraphics[width=0.137\linewidth]{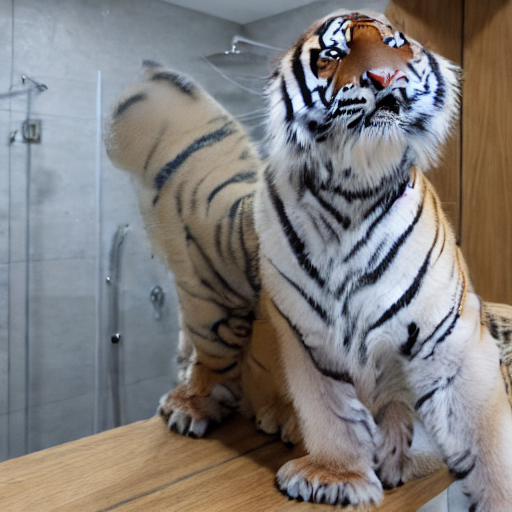} &
    \includegraphics[width=0.137\linewidth]{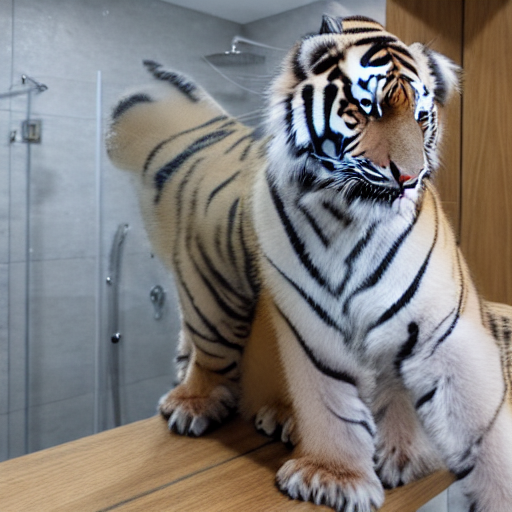} &
    \includegraphics[width=0.137\linewidth]{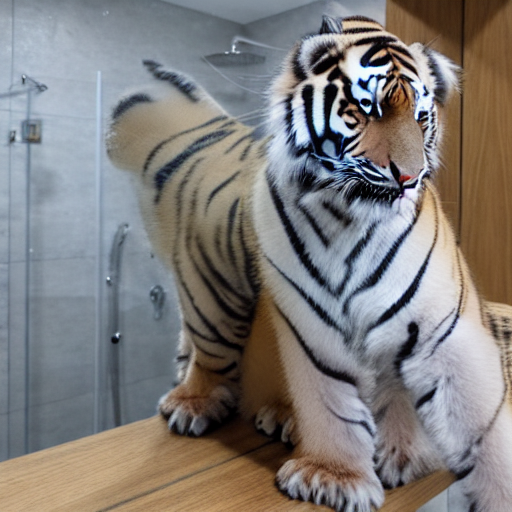} &
    \includegraphics[width=0.137\linewidth]{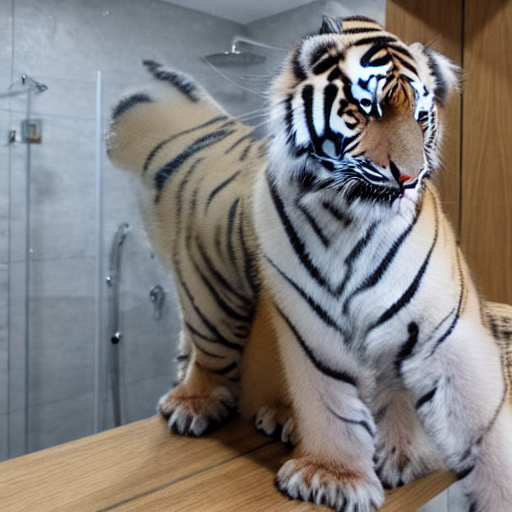}
\end{tabular}

\vspace{5pt}

\begin{tikzpicture}[baseline=-0.5ex]
    \node[
        fill=gray!15,
        rounded corners=2pt,
        inner xsep=4pt,
        inner ysep=2pt
    ] {
        \textcolor{gray}{\textit{Prompt:}}
    };
\end{tikzpicture}
\;
\texttt{a \sout{cat} \textcolor{red}{tiger} sitting by the shower glass}
    \caption{
Comparison of image editing results with different numbers of interventions $N$.
When $N$ is too small, the generated images align with the target prompt but fail to satisfy the constraint, resulting in substantial deviations from the input image.
As $N$ increases, the method better preserves the background and overall appearance of the input image while naturally modifying only the masked regions to satisfy the target prompt.
Beyond a certain $N$, the results remain largely unchanged, suggesting that a moderate number of interventions is sufficient, while even with more interventions, our method remains within the learned distribution, demonstrating its stability.
}\label{fig:intervention_steps_ablation}
\end{figure*}

\end{document}